\let\Oldsection\section
\renewcommand{\section}{\FloatBarrier\Oldsection}
\let\Oldsubsection\subsection
\renewcommand{\subsection}{\FloatBarrier\Oldsubsection}
\let\Oldsubsubsection\subsubsection
\renewcommand{\subsubsection}{\FloatBarrier\Oldsubsubsection}
\title{Global Navigation Using Predictable and Slow Feature Analysis in Multiroom Environments, Path Planning and~Other Control Tasks}
\author{Stefan~Richthofer\footnote{Electronic address: \texttt{stefan.richthofer@ini.rub.de}; 
Corresponding author}, Laurenz~Wiskott\footnote{Electronic address: 
\texttt{laurenz.wiskott@ini.rub.de}}}
\affil{Institut f\"ur Neuroinformatik,\\ Ruhr-Universit\"at Bochum, Germany}
\providecommand{\absatz}{\vspace{6 mm}}
\providecommand{\abs}[1]{\lvert#1\rvert}
\providecommand{\norm}[1]{\lVert#1\rVert}
\providecommand{\av}[1]{\left\langle#1\right\rangle}
\providecommand{\bigav}[1]{\big\langle#1\big\rangle}
\providecommand{\coloneq}[0]{\mathrel{\mathop:}=}
\providecommand{\id}[0]{\mathbf{I}}
\providecommand{\trph}[0]{\Omega_t}
\providecommand{\eqcolon}[0]{= \mathrel{\mathop:}}
\DeclareMathOperator*{\vol}{vol}
\DeclareMathOperator*{\eq}{=}
\DeclareMathOperator*{\appr}{\approx}
\DeclareMathOperator*{\lra}{\Leftrightarrow}
\DeclareMathOperator*{\ra}{\Rightarrow}
\DeclareMathOperator{\sign}{sign}
\DeclareMathOperator{\mvec}{vec}
\DeclareMathOperator{\erf}{erf}
\DeclareMathOperator{\lag}{L}
\providecommand{\hist}[2]{\left(\lag^1 #1, \ldots, \lag^{#2} #1 \right)}
\DeclareMathOperator{\tr}{Tr}
\DeclareMathOperator{\Hermite}{H}
\DeclareMathOperator{\Tschebyschow}{T}
\DeclareMathOperator{\orth}{O}
\DeclareMathOperator*{\opmin}{minimize}
\providecommand{\optmin}[1]{\displaystyle\opmin_{#1} \qquad}
\DeclareMathOperator*{\subjectto}{subject \; to \qquad}
\providecommand{\pfaerrob}[2]{\bigav{\; \norm{#2-#1}^2 \; }}
\newcommand{\subalign}[1]{%
	\vcenter{%
		\Let@ \restore@math@cr \default@tag
		\baselineskip\fontdimen10 \scriptfont\tw@
		\advance\baselineskip\fontdimen12 \scriptfont\tw@
		\lineskip\thr@@\fontdimen8 \scriptfont\thr@@
		\lineskiplimit\lineskip
		\ialign{\hfil$\m@th\scriptstyle##$&$\m@th\scriptstyle{}##$\crcr
			#1\crcr
		}%
	}
}
\newtheoremstyle{defi}
  {}
  {}
  {\slshape}
  {}
  {\bfseries}
  {}
  {\newline}
  {}
\theoremstyle{defi}
\newtheorem{alg}{Algorithm}
\newtheorem{subr}{Subroutine}
\newtheorem{lem}{Lemma}
\newtheorem{thm}{Theorem}
\begin{document}

\maketitle

\begin{abstract}
Extended Predictable Feature Analysis (PFAx) \cite{2017arXiv171200634R} is an extension of PFA \cite{DBLP:conf/icmla/RichthoferW15} that allows generating a goal-directed control signal of an agent whose dynamics has previously been learned during a training phase in an unsupervised manner. PFAx hardly requires assumptions or prior knowledge of the agent's sensor or control mechanics, or of the environment. It selects features from a high-dimensional input by intrinsic predictability and organizes them into a reasonably low-dimensional model.

While PFA obtains a well predictable model, PFAx yields a model ideally suited for manipulations with predictable outcome.
This allows for goal-directed manipulation of an agent and thus for local navigation, i.e.\ for reaching states where intermediate actions can be chosen by a permanent descent of distance to the goal. The approach is limited when it comes to global navigation, e.g.\ involving obstacles or multiple rooms.

In this article, we extend theoretical results from \cite{SprekelerWiskott-2008}, enabling PFAx to perform stable global navigation. So far, the most widely exploited characteristic of Slow Feature Analysis (SFA) was that slowness yields invariances. We focus on another fundamental characteristics of slow signals: They tend to yield monotonicity and one significant property of monotonicity is that local optimization is sufficient to find a global optimum.

We present an SFA-based algorithm that structures an environment such that navigation tasks hierarchically decompose into subgoals. Each of these can be efficiently achieved by PFAx, yielding an overall global solution of the task. The algorithm needs to explore and process an environment only once and can then perform all sorts of navigation tasks efficiently. We support this algorithm by mathematical theory and apply it to different problems.
\end{abstract}

\pagebreak

\section{Introduction} \label{sec:introduction}

The original motivation of this work is based on the idea to apply the unsupervised learning algorithm Slow Feature Analysis (SFA) \cite{WiskottSejnowski-2002} to interactive scenarios. The motivation for this idea is based on the experience that SFA was successfully used in various (passive) analysis tasks that closely relate to such scenarios, e.g.\ learning place cells \cite{FranziusSprekelerEtAl-2007e, Schoenfeld2015}, identifying objects invariant under spacial transformations \cite{10.1007/978-3-540-87536-9_98, 10.1007/3-540-46084-5_14, FranziusWilbertEtAl-2011}, blind source separation \cite{SprekelerZitoEtAl-2014}, visual tasks like face recognition and age estimation \cite{Escalante-B.Wiskott-2013b}.
In previous work, we recognized predictability as a crucial feature for tackling interactive scenarios, as these require estimation of consequences of possible actions. This lead to the invention of Predictable Feature Analysis (PFA) \cite{DBLP:conf/icmla/RichthoferW15, 2017arXiv171200634R}, an algorithm strongly inspired by SFA -- while SFA selects features by slowness, PFA selects them by predictability. Before we get into more detail of these algorithms, we briefly collect possible application fields.

Path planning of mobile robots is an application area that closely fits our implicit prototype assumptions. We imagine a robot in an environment that perceives sensory input of some kind and emits an action signal that controls its motors (c.f.\ Figure~\ref{fig:cycle}). A naturally arising task is to control the robot such that it reaches a desired state in the environment.

\begin{figure}[h!]
	\centering
	\includegraphics[width=0.6\hsize]{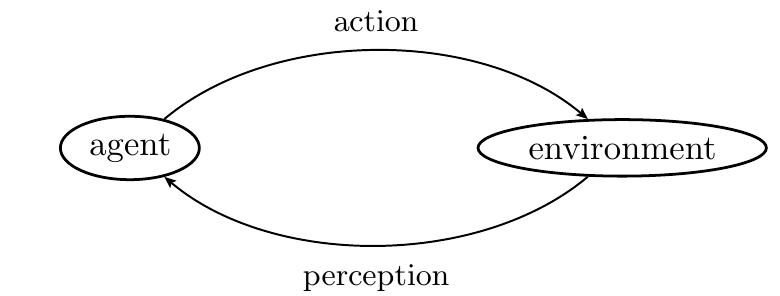}
	\caption{Perception/action cycle}
	\label{fig:cycle}
\end{figure}

SFA has been frequently applied to model a rather similar scenario concerning a rat instead of a robot. Of course, it was not attempted to control the rat, but to obtain biologically plausible phenomena like place cells.
A more general interactive problem setting is reinforcement learning (RL), where also an agent is acting in an environment, aiming for a maximal accumulated reward over time. In this fashion, that setting extends our notion of a sensor signal and a control signal by a reward signal.
Also control theory of dynamical systems, involving tasks like pendulum swing up, pole and cart balancing, fits into the notion of an action/perception loop illustrated in figure~\ref{fig:cycle}. The phase space of the system can be seen as environment in this case.
A rich repertoire of work exists that links these fields in various ways. Traditionally, RL algorithms are applied to path planning, or dynamical systems. We list a selection of such articles throughout this section.

As a unifying notion of the named areas' essentials we stick to the idea of controlling an agent in an environment, aiming for a specific goal state. Environment, agent and control are represented as abstract, continuous sensor and control signals. 
Based on this, we perceive the navigation into a goal state as an optimization problem.
This matches the setting we tackled with PFAx in \cite{2017arXiv171200634R} and we continue with a comprehension of that approach.

\subsection{Predictable Feature Analysis (Extended)}
Predictable Feature Analysis (PFA) \cite{DBLP:conf/icmla/RichthoferW15} is an unsupervised learning algorithm that was developed to efficiently turn high-dimensional input data into a low-dimensional model consisting of well predictable features.

In \cite{2017arXiv171200634R} we have shown that by using an extension to PFA -- namely PFAx -- it is possible to learn well controllable features that are sufficient to solve local navigation tasks. By taking supplementary information into account for prediction, PFAx can find features that are ideally predictable based on themselves and under the assumption that a supplementary signal can be used as a helper for prediction. Such a supplementary signal does -- however -- not participate in feature extraction. Providing the control signal from the RL setting (specific action chosen at each time-step) as supplementary information, we can obtain features that strongly depend on the supplementary information in terms of predictability. By inverting that relation we can compute the control signal that would most likely yield a specific desired outcome, given the agent's current state. In this sense the obtained features are well controllable.

Solving a complex navigation task usually cannot be achieved within a single time step, so we transformed the agent's state as far as possible towards the goal state in each time step (greedy optimization), using least squares distance in feature space as a cost function. The resulting approximate gradient descent easily gets stuck in a local optimum, thus PFAx is only suitable to perform local navigation.

\subsection{Approach in this work}


A key observation is that Slow Feature Analysis (SFA) \cite{WiskottSejnowski-2002} can be used to decompose a given environment into features that are represented as monotonic signals across the environment (see \cite{SprekelerWiskott-2008}). We refer to these monotonic features as \textsl{sources}. Computing them from the usual slow features found by SFA requires additional processing. This is provided by the xSFA algorithm \cite{SprekelerZitoEtAl-2014}. The theoretical analysis of xSFA only scopes the case that statistically independent sources exist. We extend this analysis in section \ref{sec:xSFA_on_manifolds} and establish a geometrical characterization of the solutions in terms of potential, monotonicity, geodesics and representation of the data manifold. A major contribution of this work is to clarify what (x)SFA-induced monotonicity means in higher dimensions. These results motivate the navigation algorithm proposed in section \ref{sec:Global-navigation-algorithm}.

Obtaining the sources involves an extensive unsupervised exploration phase in which PFAx can learn the effects of the control signal and (x)SFA can learn a model of the environment.
Based on the learned representation, any navigation problem can be solved by local descent on the monotonic feature representation, using the learned effects of the control signal. In figure \ref{fig:sfa-goal-dist-illustration} we illustrate this with the interval $[0, 100]$ serving as a 1D environment. As a sensor representation we model five differently parametrized grid cells using overlapping Gaussians. The component (\textsl{sf1}) obtained by xSFA yields a monotonic representation of the environment. Concerning the goal distance measures on the right, imagine we wanted to move an agent from e.g.\ position $80$ to the goal at $40$. Based on local techniques (imagine a limited perception range, e.g.\ one or two units) it is impossible to efficiently find the goal regarding sensor space. Measuring distance by \textsl{sf1}, the same task -- actually any navigation task -- is well feasible.

\begin{figure}[!ht]
	\centering
	\captionsetup{width=.95\linewidth}
	\includegraphics[width=1.\hsize]{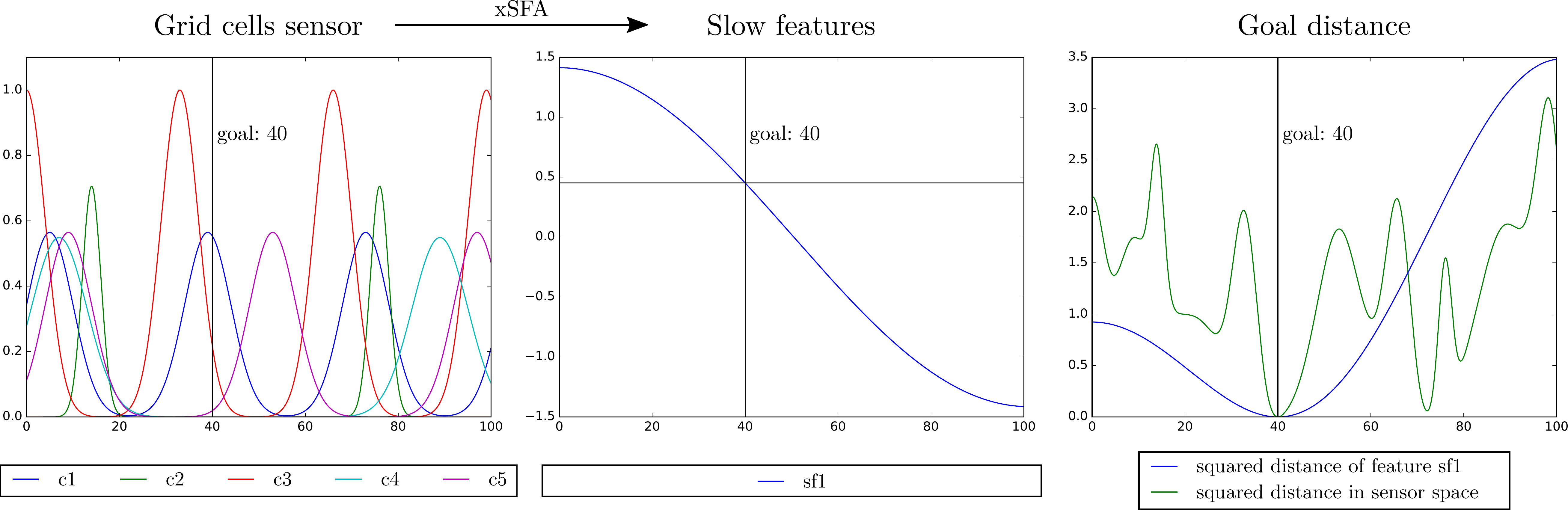}
	\caption{Illustration of how xSFA obtains a cost function suitable for efficient global optimization on $[0, 100]$. \textsl{Left:} Overlapping Gaussians are used to model grid cells as non-monotonic representation. \textsl{Center:} Results of xSFA applied to the representation on the left. The extracted component (\textsl{blue}) yields a monotonic representation of the environment. \textsl{Right:} Comparison of goal distance functions. Squared goal distance in sensor space (\textsl{green}) \textsl{cannot} be optimized globally by local methods. Squared goal distance in feature space (\textsl{blue}) \textsl{can} be optimized globally by local methods.}
	\label{fig:sfa-goal-dist-illustration}
\end{figure}

A special case of the approach in this work was studied in \cite{8202307}. Leveraging the monotonicity of the slowest SFA components, a robot is navigated in an approximately open environment around an obstacle. This asserts the feasibility of the method in principle.
A major difference to the approach presented here is that the control of the robot is not learned, but assumed to be known. Also the estimation of the gradient is done in a different manner. The navigation is based on a fixed selection of SFA components, which limits it to environments yielding spatial dimensions of roughly the same size, e.g.\ with quadratic or circular boundary. In that sense, our paper presents a generalization of that approach, vastly relaxing the geometrical requirements on the environment. However, we still require some non-geometric limitations on the environment:

\begin{itemize}
	\item The environment is fully observable, i.e.\ each position in the environment yields a unique representation in sensor space
	\item The environment is stationary, i.e.\ constant over time, contains no blinking lights, no flickering colors or moving objects
\end{itemize}

The named limitations are not inherent and in section \ref{sec:conclusion} we suggest extensions of the algorithm to overcome each of them. They are just simplifying assumptions to focus on the core method in this work.

\subsection{Connection to optimization} \label{sec:conn-opt}
Using a distance measure (e.g.\ least squares distance in sensor or feature space) of the agent's current state to a goal state as a cost function, a navigation task can be seen as an optimization problem. Complex tasks usually cannot be achieved within a single time step. This can be resolved in several ways, e.g.:
\begin{itemize}
	\item An optimization problem could be defined on the space of possible paths rather than on a space of possible actions.
	\item Greedy optimization can be used, i.e.\ transforming the agent's state as far as possible towards the goal state in each time step (c.f.\ gradient descent).
	\item Optimization can scope on finding a good policy.
\end{itemize}
Finding a policy is the typical approach in RL, which is discussed in the next section.
Greedy optimization easily gets stuck in a local optimum and finding the globally optimal path in an arbitrary environment is a general black-box global optimization problem. Without further assumptions on the environment, such a problem would in general not be convex, quadratic or anything significantly useful. Historically, a wide range of techniques has been developed to deal with such black-box problems, e.g.\ evolutionary algorithms, swarm algorithms, convex relaxations, cutting plane methods, branch and bound methods, stochastic methods and many more. These techniques have in common that they are magnitudes slower than methods for efficiently optimizable problem domains like convex, quadratic or linear problems. Apart from that, they usually cannot guarantee convergence or actual optimality of the obtained result.

The method in this work can be seen as a reverse approach applicable to our specific setting. Instead of attempting to solve the resulting difficult optimization problem, we create the optimization problem such that it is efficiently solvable. Given that we already use a model to represent the agent's state, we have a certain degree of freedom in forming this model.
The presented method can intentionally form the model such that all possible goal states in the environment are efficiently achievable from any initial state.

In this context, we exploit SFA-induced monotonicity as a link between local and global optimization: Exactly on monotonic functions, the distance measure to a goal value has only a single optimum. Thus, exactly on these functions, local techniques like PFAx are sufficient for global optimization of goal-distance. However, for multidimensional environments there does not exist a single strictly monotonic component that can cover all possible goal states. Instead we will find a hierarchical decomposition into components to cover the whole environment, defining a precise sequence of efficiently and globally solvable optimization tasks to reach any desired goal state that exists in the environment, provided that it has been sufficiently explored.


\subsection{Connection to Reinforcement Learning}

Navigating an agent in an environment has significant links to RL. However, RL usually considers an arbitrary reward signal, typically in discrete state and action spaces, while approaches for continuous RL exist. Navigation tasks and path planning can be encoded in the RL setting by measuring the distance of the agent's current state from the goal state by a certain metric (e.g.\ euclidean distance in sensor or feature space). Using an inversion of this measure (e.g.\ $\frac{1}{x}$ or $\max - x$) as reward function would yield maximum reward at the goal. The main formal difference to the optimization view is that the objective function, i.e.\ the accumulated future reward, in RL terms the \textsl{value function} is unknown and is usually learned in popular methods such as Q-Learning.
Apart from that, having a reward function that is non-zero across wider ranges of the environment is untypical for RL. Usually reward is only given right at the goal and the value function is learned across multiple sessions.

In terms of RL, the approach in this work would mean that during an initial exploration phase, reward would be completely ignored and instead the topology of the environment and the dynamics of the agent would be exhaustively learned. Based on this, any reward signal that measures distance to some goal position can be maximized efficiently. Goal and start position can be arbitrary and any number of such tasks with varying start and goal can be performed efficiently based on a single initial exploration. This is possible, because the model is aligned to the environment and not the specific reward function. Thus, our approach is especially valuable if the goal is not known during exploration and if many tasks with different goals need to be performed in the same environment. Note that in terms of RL, our approach does not account for the exploration-vs-exploitation issue. It rather performs exhaustive exploitation after exhaustive exploration.

Proto value functions (PVFs) \cite{MahadevanMaggioni-2007} are an RL concept that shares some characteristics with the presented approach. They are frequently used to discover bottleneck states and options in RL settings. SFA has been used earlier to approximate PVFs, e.g.\ in \cite{DBLP:conf/icann/LuciwS12, JMLR:v14:boehmer13a}. 
The extracted slow features can then be used as basis functions for linear models that solve the RL problem, such as LSTD \cite{Lagoudakis:2002:LMR:645861.670291}.

Originally, the central building block of PVFs are Laplacian eigenmaps (LEMs) rather than slow features. LEMs are traditionally more affected by the curse of dimensionality in RL as the dimensionality of the graph Laplacian depends on the number of data points.
\cite{doi:10.1162/NECO_a_00214} provides the missing link between SFA and LEMs. Specifically, that work identifies conditions under which the problem settings of SFA and LEMs become equivalent and describes how LEMs can be approximated by slow features. The application of SFA as replacement for LEMs proposed in \cite{DBLP:conf/icann/LuciwS12, JMLR:v14:boehmer13a} was originally enabled by the named article.
For our purpose, slow features yield another crucial advantage over LEMs: SFA's central optimization problem can be analyzed by Sturm-Liouville theory, which allows to formally prove monotonicity results for slow features \cite{SprekelerWiskott-2008}. 

In this work we primarily take the optimization perspective onto the navigation setting. Our approach is rather driven by xSFA and its theoretical implications and we propose that based on results of a sufficiently converged xSFA processing, it is already feasible to fully solve our setting by local and efficient optimization techniques.


\subsection{Connection to path planning}

Path planning of mobile robots closely fits our assumed setting as it deals with finding a safe path to navigate a robot through a complex environment. Work in this area usually assumes a specific goal location. A key difference is that the robot's dynamics are usually known and focus is fully on planning the path. Our approach on the other hand does not incorporate a safety criterion, but this could be modeled as an additional part of the sensor.

The optimization issues we stated in section \ref{sec:conn-opt} are widely recognized in this field, especially avoidance of local optima is a central concern \cite{100007}. We pointed out the nature of a black box optimization problem and indeed various typical black box optimization approaches have been applied to path planning:
Evolutionary methods \cite{870304}, neural networks \cite{5286557}, particle swarm optimization \cite{Su_robotpath}, ant colony optimization \cite{TAN2007279, 8242802} and others \cite{4058742}.
Path planning has also been frequently approached using RL \cite{7955160, 6974463, NIPS1993_843, Igarashi2002, doi:10.1177/0278364907087426}.

\subsection{Other related work}

We give an overview of various other more or less closely related approaches. Some papers are listed because they apply a hierarchical decomposition of some sort to RL scenarios, others are listed because they deal with slowness or predictability.

\cite{DBLP:conf/icml/McGovernB01} and \cite{10.1007/3-540-45622-8_16} use diverse density to discover bottleneck states as useful subgoals for RL tasks. This has some parallels to the algorithm in this paper in the sense that bottlenecks occur as special states. In \ref{sec:bottleneck} we explicitly study the behavior of SFA around a bottleneck and suggest in a side note how slow features can be used as a bottleneck detector.

\cite{NIPS2014_5340} suggests how a hierarchical decomposition of a problem space can be achieved using the successor representation and its eigenvalue decomposition. That work draws a number of links to studies of animal behavior, observations in the hippocampus and to cognitive maps. It contains many interesting notes regarding biological plausibility of decomposition approaches.

\cite{BOTVINICK2009262} addresses the scaling problem/curse of dimensionality in RL. They develop a hierarchical notion of RL (HRL) in a model-free actor-critic approach. The work features an extensive discussion of implications for neuroscience and psychology.

\cite{DBLP:journals/ki/BohmerSBRO15} gives an overview of methods for autonomous RL directly based on sensor-observations. The mainly discussed algorithms are deep auto encoders and SFA. They mention slow distractors (e.g.\ the position of a slowly moving sun in an outdoor scenario) as a typical issue of SFA. This supports our idea of a combination of SFA with predictability in form of PFAx, which should be able to discard slow distractors. However, PFAx can be affected by predictable distractors, but these can be identified by the coefficients of the matrix incorporating the control signal.
Finally they explicitly point out the idea of combining notions of slowness with predictability, referencing the follwoing work:

\cite{Jonschkowski-13-ERLARS} combines notions of slowness and predictability to learn state representations for RL using a neural network. To combine these notions they propose a hybrid cost function consisting of arbitrarily weighted terms for slowness, predictability and non-constantness.

There are a number of approaches related to SFA, PFA or PFAx we discussed in a little more detail in \cite{2017arXiv171200634R}: 
Contingent Feature Analysis (CFA) \cite{DBLP:conf/icann/Sprague14}, Forecastable Component Analysis (ForeCA) \cite{goerg13}, Graph-based Predictable Feature Analysis (GPFA) \cite{Weghenkel:2017:GPF:3140707.3140724}, Predictive Projections \cite{DBLP:conf/ijcai/Sprague09}, Neighborhood Components Analysis (NCA) \cite{DBLP:conf/nips/GoldbergerRHS04}, A Canonical Analysis of Multiple Time Series \cite{boxTiao1977}.

\section{Local navigation using predictable features with supplementary information (PFAx)} \label{sec:extraction}
We start with a comprehension of the PFAx algorithm \cite{2017arXiv171200634R} which extends the PFA algorithm \cite{DBLP:conf/icmla/RichthoferW15} to incorporate supplementary information. Later we extend the method to enable global navigation. Given an $n$-dimensional input-signal $\mathbf{x}(t)$, PFA's objective is to find $r$ most predictable output components, referred to as “predictable features”. PFAx additionally considers a signal $\mathbf{u}(t)$ and extracts $r$ components from $\mathbf{x}$ such that they are most predictable if $\mathbf{u}$ can be used as an additional helper for the prediction.

Like SFA, PFAx performs a linear extraction, but can incorporate a non-linear expansion $\mathbf{h}(\mathbf{x})$ as a preprocessing step. For this we usually use monomials up to a certain degree, which essentially yields a polynomial extraction overall\footnote[1]{For higher degree, Legendre or Bernstein polynomials should be favored over monomials because of better numerical stability.}. Note that by the Stone-Weierstrass theorem this technique can approximate any continuous function and moreover also regulated functions (piecewise continuous). However, high degree expansion can require significant cost in terms of training data and computation. Applying PFA hierarchically like is done with SFA in \cite{FranziusSprekelerEtAl-2007e, Schoenfeld2015} can help to keep these costs tractable.

\subsection{Recall PFAx} \label{sec:pfax}
In the PFAx setting, predictability is measured by \textbf{linear, auto-regressive processes} which are widely used to model time-related problems.
That means, each value of an extracted signal should be as predictable as possible by a linear combination of $p$ recent values.

This yields the problem of finding vectors $\mathbf{a} \in \mathbb{R}^{n}$ and $\mathbf{b} \in \mathbb{R}^{p}$ such that
\begin{align}
\mathbf{a}^T \mathbf{z}(t) \quad \appr^! \;& \quad b_{1} \mathbf{a}^T \mathbf{z}(t-1) + \ldots + b_{p} \mathbf{a}^T \mathbf{z}(t-p) \;
= \; \mathbf{a}^T \hist{\mathbf{z}}{p}(t) \; \mathbf{b} \; \eqcolon \; \mathbf{a}^T \hat{\mathbf{z}}_\mathbf{b}(t) \label{pfa-criterionARScalar}
\end{align}

\begin{figure}[ht]
	\centering
	\captionsetup{width=.7\linewidth}
	\includegraphics[width=0.75\hsize]{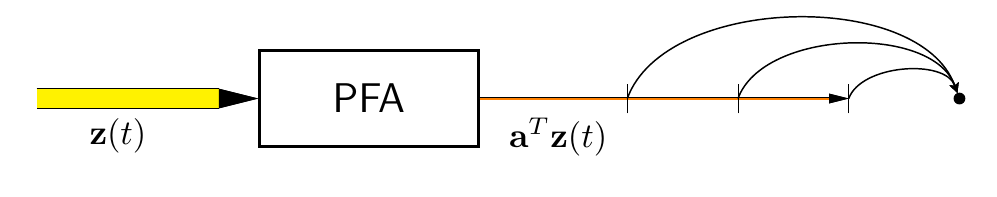}
	\caption{Illustration of PFA with extraction vector $\mathbf{a}$. Components are selected to be well predictable.}
	\label{fig:PFAIllustration}
\end{figure}

where $\lag$ denotes the \textsl{lag operator} (also \textsl{backshift operator}), i.e.\ $\lag^k \mathbf{z}(t) = \mathbf{z}(t-k)$,
and $\mathbf{z}$ is the expanded representation of our input signal $\mathbf{x}$, sphered over a finite training phase $\trph$ with average notation $\av{s(t)}~\coloneq~\frac{1}{\abs{\trph}} \sum_{t\in \trph} s(t)$:
\begin{align} \label{sphering1}
\tilde{\mathbf{z}}(t) \quad &\coloneq \quad \mathbf{h}(\mathbf{x}(t)) - \av{\mathbf{h}(\mathbf{x}(t))} &&\text{(make mean-free)}\\
\mathbf{z}(t) \quad &\coloneq \quad \mathbf{S} \tilde{\mathbf{z}}(t) \qquad \text{with} \quad \mathbf{S} \coloneq \av{\tilde{\mathbf{z}} \tilde{\mathbf{z}}^T}^{-\frac{1}{2}}, \; \mathbf{z}(t) \in \mathbb{R}^{n} &&\text{(normalize covariance)} \label{sphering2}
\end{align}

\eqref{pfa-criterionARScalar} can be formulated as a least squares optimization problem over the training phase $\trph$. We extend the problem to multiple output components $\mathbf{a}_1^T \mathbf{z}, \ldots, \mathbf{a}_r^T \mathbf{z}$ and to avoid trivial or repeated solutions we add constraints that require them to have unit variance and to be pairwise uncorrelated:

\begin{align} \label{pfa}
\text{For} \; i \in \{1, \ldots, r\} \notag \\
\begin{split} 
\optmin{\mathbf{a}_i \in \mathbb{R}^{n}, \mathbf{b} \in \mathbb{R}^{p}} & \av{\norm{\mathbf{a}_i^T (\mathbf{z} - \hat{\mathbf{z}}_\mathbf{b})}^2} \\
\subjectto	& \mathbf{a}_i^T \av{\mathbf{z}} \hphantom{\mathbf{a}_i \mathbf{a}_j \mathbf{z}^T} \, = \quad 0 \quad \hphantom{\forall \; j < i} \quad \text{(zero mean)}\\
& \mathbf{a}_i^T \av{\mathbf{z} \mathbf{z}^T} \mathbf{a}_i \hphantom{\mathbf{a}_j} = \quad 1 \quad \hphantom{\forall \; j < i} \quad \text{(unit variance)}\\
& \mathbf{a}_i^T \av{\mathbf{z} \mathbf{z}^T} \mathbf{a}_j \hphantom{\mathbf{a}_i} = \quad  0 \quad \forall \; j < i \quad \text{(pairwise decorrelation)}
\end{split}
\end{align}

Because of the sphering $\av{\mathbf{z}} = 0$ and $\av{\mathbf{z} \mathbf{z}^T} = \id$, the constraints simplify to
\begin{equation} \label{PFA-constraints-no-matrix}
\mathbf{a}_i^T \mathbf{a}_j = \delta_{ij}
\end{equation}
With $\mathbf{A}_r~\coloneq~\left(\mathbf{a}_1,~\ldots,~\mathbf{a}_r~\right)~\in~\mathbb{R}^{n~\times~r}$, constraint \eqref{PFA-constraints-no-matrix} is automatically fulfilled if we choose
\begin{equation} \label{PFA-constraints-matrix}
\mathbf{A}_r \quad = \quad \mathbf{A}\mathbf{I}_r \quad \text{with} \quad \mathbf{A} \in \orth(n)
\end{equation}
$\orth(n) \subset \mathbb{R}^{n \times n}$ denotes the space of orthogonal transformations, i.e.\ $\mathbf{A}\mathbf{A}^T = \mathbf{I}$ and $\mathbf{I}_r \in \mathbb{R}^{n \times r}$ denotes the reduced identity matrix consisting of the first $r$ Euclidean unit vectors as columns.

Problem \eqref{pfa} is not readily solvable. As a prerequisite for a solvable relaxation we define \linebreak $\mathbf{m}(t) \coloneq \mathbf{A}_r^T \mathbf{z}(t)$ and extend the prediction model to matrix notation \eqref{pfa-predictingWithB}. To keep things compact we directly switch to the PFAx notion by incorporating supplementary information $\mathbf{u}$ in \eqref{pfa-predictingWithU}. 

\begin{align} \label{pfa-predictingWithB}
\mathbf{m}(t) \quad \appr^! &\quad \mathbf{B}_{1} \hphantom{\mathbf{U}\mathbf{u}}\!\!\!\!\!\!\! \mathbf{m}(t-1)  + \; \ldots \; + \mathbf{B}_{p} \hphantom{\mathbf{U_q}\mathbf{u}}\!\!\!\!\!\!\!\!\!\! \mathbf{m}(t-p) \qquad \text{with}\quad \mathbf{B}_i \hphantom{\mathbf{U}}\!\!\!\!\! \in \mathbb{R}^{r \times r} \\
\label{pfa-predictingWithU}
+ &\quad \mathbf{U}_{1} \hphantom{\mathbf{B}\mathbf{m}}\!\!\!\!\!\!\! \mathbf{u}(t-1)  + \; \ldots \; + \mathbf{U}_{q} \hphantom{\mathbf{B_p}\mathbf{m}}\!\!\!\!\!\!\!\!\!\! \mathbf{u}(t-q) \qquad \text{with}\quad \mathbf{U}_i \hphantom{\mathbf{B}}\!\!\!\!\! \in \mathbb{R}^{r \times n_{\mathbf{u}}} \\
\label{pfa-predictingWithBU}
= &\quad \mathbf{B} \; \underbrace{\mvec(\hist{\mathbf{z}}{p}(t))}_{\eqcolon \; \mathbf{\zeta}(t)} \;\; + \;\; \mathbf{U} \; \underbrace{\mvec(\hist{\mathbf{u}}{q}(t))}_{\eqcolon \; \mathbf{\mu}(t)}
\end{align}
\eqref{pfa-predictingWithBU} uses block matrix notation $\mathbf{B}~=(\mathbf{B}_1, \ldots, \mathbf{B}_p)~\in~\mathbb{R}^{r \times rp}$ and $\mathbf{U}~=(\mathbf{U}_1, \ldots, \mathbf{U}_q)~\in~\mathbb{R}^{r \times n_{\mathbf{u}}q}$.

\begin{figure}[!ht]
	\centering
	\captionsetup{width=.8\linewidth}
	\includegraphics[width=0.95\hsize]{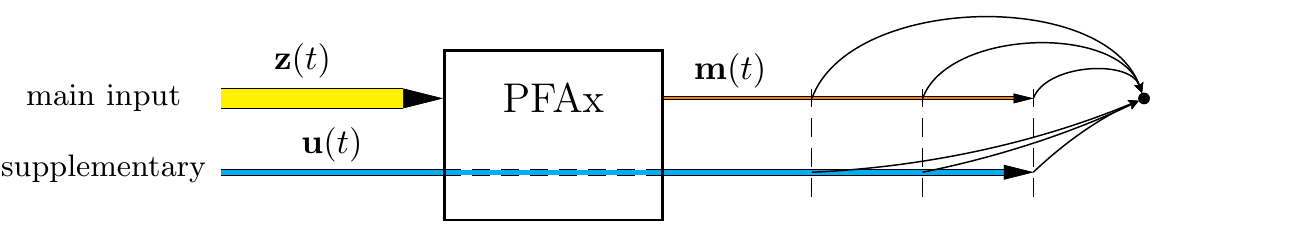}
	\caption{Illustration of PFAx. Components are selected to be well predictable if supplementary information is taken into account.}
\end{figure}

In \eqref{pfaNonAutoRegressiveWhitened} we will state the PFAx optimization problem in terms of \eqref{pfa-predictingWithBU}, but that formulation requires a formula to express the prediction matrices $\mathbf{B}$ and $\mathbf{U}$ in terms of the extraction matrix $\mathbf{A}_r$.
Matrix calculus allows us to compute the ideal prediction matrices for a given extraction $\mathbf{A}_r$:

\begin{align} \label{pfa-fittingB}
\mathbf{B}(\mathbf{A}_r) \quad \coloneq &\quad \Big( \mathbf{A}_r^T \av{\mathbf{z} \mathbf{\zeta}^T} - \mathbf{U}(\mathbf{A}_r) \av{\mathbf{\mu} \mathbf{\zeta}^T} \Big) \underline{\mathbf{A}_r} \;\; \Big( \underline{\mathbf{A}_r^T} \av{\mathbf{\zeta} \mathbf{\zeta}^T} \underline{\mathbf{A}_r} \Big)^{-1} \\
\mathbf{U}(\mathbf{A}_r) \quad \coloneq &\quad \Big( \mathbf{A}_r^T \av{\mathbf{z} \mathbf{\mu}^T} - \mathbf{B}(\mathbf{A}_r) \; \underline{\mathbf{A}_r^T} \av{\mathbf{\zeta} \mathbf{\mu}^T} \Big) \; \av{\mathbf{\mu} \mathbf{\mu}^T}^{-1} \label{pfa-fittingU}
\end{align}

This uses the shortcut notation defined for any matrix $\mathbf{M} \; \in \; \mathbb{R}^{n \times m}$:
\begin{equation} \label{multiA}
\underline{\mathbf{M}} \quad \coloneq \quad \mathbf{I}_{p, p} \otimes \mathbf{M} \quad = \qquad \underbrace{\!\!\!\!\!\!\left( \begin{smallmatrix} \mathbf{M}&  & \mathbf{0} \\  & \ddots &  \\ \mathbf{0} &  & \mathbf{M} \end{smallmatrix} \right)\!\!\!\!\!\!}_{\text{$p$ times $\mathbf{M}$}} \qquad \in \; \mathbb{R}^{np \times mp}
\end{equation}

Equations \eqref{pfa-fittingB} and \eqref{pfa-fittingU} are derived as follows. For a given $\mathbf{A}_r$ the optimal $\mathbf{B}$, $\mathbf{U}$ must solve
\begin{equation} \label{BU-ReducedDimRankTransformSolution-optProblem}
\optmin{\subalign{ \qquad\mathbf{B} &\in \mathbb{R}^{r \times rp} \\ \qquad\mathbf{U} &\in \mathbb{R}^{n_{\mathbf{u}} \times n_{\mathbf{u}}q}}} \;
\pfaerrob{\mathbf{B} \underline{\mathbf{A}_r^T} \zeta - \mathbf{U} \mu}{\mathbf{A}_r^T \mathbf{z}} \;\quad \eqcolon \quad f(\mathbf{B}, \mathbf{U})
\end{equation}
Writing $f(\mathbf{B}, \mathbf{U}) \;=\; \tr \Big( \bigav{ \big( \mathbf{A}_r^T \mathbf{z} - \mathbf{B} \underline{\mathbf{A}_r^T} \zeta - \mathbf{U} \mu \big) \big( \mathbf{A}_r^T \mathbf{z} - \mathbf{B} \underline{\mathbf{A}_r^T} \zeta - \mathbf{U} \mu \big)^T } \Big)$ we can expand $f$ to
\begin{align} \label{BU-ReducedDimRankTransformSolution-expand}
\quad
f(\mathbf{B}, \mathbf{U}) \;
= \; \tr \Big( \; \hphantom{-} &&\mathbf{A}_r^T \av{\mathbf{z} \mathbf{z}^T} \mathbf{A}_r
\quad-&& \mathbf{A}_r^T \av{\mathbf{z} \mathbf{\zeta}^T} \underline{\mathbf{A}_r} \mathbf{B}^T
\quad-&& \mathbf{A}_r^T \av{\mathbf{z} \mathbf{\mu}^T} \mathbf{U}^T
\nonumber \; \hphantom{\Big)} \quad \\
- && \mathbf{B} \underline{\mathbf{A}_r^T} \av{\mathbf{\zeta} \mathbf{z}^T} \mathbf{A}_r
\quad+&& \mathbf{B} \underline{\mathbf{A}_r^T} \av{\mathbf{\zeta} \mathbf{\zeta}^T} \underline{\mathbf{A}_r} \mathbf{B}^T
\quad+&& \mathbf{B} \underline{\mathbf{A}^T_r} \av{\mathbf{\zeta} \mathbf{\mu}^T} \mathbf{U}^T
\; \hphantom{\Big)} \quad \\
- && \mathbf{U} \av{\mathbf{\mu} \mathbf{z}^T} \mathbf{A}_r
\quad+&& \mathbf{U} \av{\mathbf{\mu} \mathbf{\zeta}^T} \underline{\mathbf{A}_r} \mathbf{B}^T
\quad+&& \mathbf{U} \av{\mathbf{\mu} \mathbf{\mu}^T} \mathbf{U}^T \; \Big)
\nonumber \quad
\end{align}
and set its matrix derivatives to zero:
\begin{align} \label{B-ReducedDimRankTransformSolution-derive}
\frac{\partial}{\partial \mathbf{B}} f(\mathbf{B}, \mathbf{U}) \quad &= \quad -\; 2 \mathbf{A}_r^T \av{\mathbf{z} \mathbf{\zeta}^T} \underline{\mathbf{A}_r} \;+\; 2 \mathbf{U} \av{\mathbf{\mu} \mathbf{\zeta}^T} \underline{\mathbf{A}_r} \;+\; 2 \mathbf{B} \underline{\mathbf{A}_r^T} \av{\mathbf{\zeta} \mathbf{\zeta}^T} \underline{\mathbf{A}_r} & \eq^! \quad \mathbf{0} \\ \label{U-ReducedDimRankTransformSolution-derive}
\frac{\partial}{\partial \mathbf{U}} f(\mathbf{B}, \mathbf{U}) \quad &= \quad -\; 2 \mathbf{A}_r^T \av{\mathbf{z} \mathbf{\mu}^T} \hphantom{\underline{\mathbf{A}_r}} \;+\; 2 \mathbf{B} \underline{\mathbf{A}^T_r} \av{\mathbf{\zeta} \mathbf{\mu}^T} \;+\; 2 \mathbf{U} \av{\mathbf{\mu} \mathbf{\mu}^T} \hphantom{\underline{\mathbf{A}_r}\underline{\mathbf{A}_r^T}} & \eq^! \quad \mathbf{0}
\end{align}
Solving \eqref{B-ReducedDimRankTransformSolution-derive} for $\mathbf{B}$ yields \eqref{pfa-fittingB} and solving \eqref{U-ReducedDimRankTransformSolution-derive} for $\mathbf{U}$ yields \eqref{pfa-fittingU}.
In \eqref{pfa-fittingB} and \eqref{pfa-fittingU}, $\mathbf{B}(\mathbf{A}_r)$ and $\mathbf{U}(\mathbf{A}_r)$ are defined implicitly. By inserting \eqref{pfa-fittingU} into \eqref{pfa-fittingB} and solving for $\mathbf{B}(\mathbf{A}_r)$ we get the explicit formula
\begin{equation} \label{pfa-fittingBUExplicit}
\mathbf{B}(\mathbf{A}_r) = \mathbf{A}_r^T \Big( \av{\mathbf{z} \mathbf{\zeta}^T} - \av{\mathbf{z} \mathbf{\mu}^T} \av{\mathbf{\mu} \mathbf{\mu}^T}^{-1} \av{\mathbf{\mu} \mathbf{\zeta}^T} \Big) \underline{\mathbf{A}_r} \; \Big( \underline{\mathbf{A}_r^T} \left( \av{\mathbf{\zeta} \mathbf{\zeta}^T} - \av{\mathbf{\zeta} \mathbf{\mu}^T} \av{\mathbf{\mu} \mathbf{\mu}^T}^{-1} \av{\mathbf{\mu} \mathbf{\zeta}^T} \right) \underline{\mathbf{A}_r} \Big)^{-1}
\end{equation}

If $\av{\mathbf{\mu}\mathbf{\mu}^T}$ is not (cleanly) invertible due to very small or zero-valued eigenvalues, we recommend to project away the eigenspaces corresponding to eigenvalues below a critical threshold. These indicate redundancies in the signal and can therefore be dropped: In an eigenvalue decomposition of $\av{\mathbf{\mu}\mathbf{\mu}^T}$ replace eigenvalues below the threshold by $0$ and invert the others. Use the resulting matrix as a proxy for $\av{\mathbf{\mu}\mathbf{\mu}^T}^{-1}$. Proceed equivalently with other matrices where arising inversions are not computable due to near-zero-eigenvalues.

We define the ideal linear predictor $\hat{\mathbf{z}}^{(0)}$ for the original signal without extraction, i.e.\ $\mathbf{A}_r = \mathbf{I}$:
\begin{equation} \label{pfaBUpredictor}
\hat{\mathbf{z}}^{(0)}(t) \quad \coloneq \quad \mathbf{B}(\mathbf{I}) \; \mathbf{\zeta}(t) \;\; + \;\; \mathbf{U}(\mathbf{I}) \; \mathbf{\mu}(t)
\end{equation}
and can now refine problem \eqref{pfa} to
\begin{equation} \label{pfaNonAutoRegressiveWhitened}
\optmin{\mathbf{A} \in \orth(n)} \; \quad \bigav{\; \norm{\mathbf{A}_r^T(\mathbf{z} - \hat{\mathbf{z}}^{(0)})}^2 \; } \quad = \quad  \tr \Big( \mathbf{A}_r^T \bigav{\; (\mathbf{z} - \hat{\mathbf{z}}^{(0)}) (\mathbf{z} - \hat{\mathbf{z}}^{(0)})^T \; } \mathbf{A}_r \Big)
\end{equation}
which can be solved by choosing $\mathbf{A}$ such that it diagonalizes
$\bigav{\; (\mathbf{z} - \hat{\mathbf{z}}^{(0)}) (\mathbf{z} - \hat{\mathbf{z}}^{(0)})^T \; }$
and sorts the $r$ smallest eigenvalues to the upper left. From this we obtain the prediction model by calculating $\mathbf{B}_{\mathbf{z}}(\mathbf{A}_r)$  and $\mathbf{U}_{\mathbf{z}}(\mathbf{A}_r)$.

In \cite{DBLP:conf/icmla/RichthoferW15}, we proposed an iterated prediction as a heuristic method to better avoid overfitting. In \cite{2017arXiv171200634R} we extended this method to comply with supplementary information as follows.
We define a matrix $\mathbf{V}$ which implements the autoregressive model and predicts $\mathbf{\zeta}(t+1)$ from $\mathbf{\zeta}(t)$:
\begin{align}
\mathbf{V} \hphantom{(t)}\;\;\, \quad \coloneq &\quad \Big( \av{\mathbf{\zeta}(t+1) \mathbf{\zeta}^T} - \mathbf{I}_{np,n}\mathbf{U}(\mathbf{I}) \av{\mathbf{\mu} \mathbf{\zeta}^T} \Big) \av{\mathbf{\zeta} \mathbf{\zeta}^T}^{-1} \\
\hat{\mathbf{z}}^{(i)}(t) \quad \coloneq &\quad \mathbf{B}(\mathbf{I})\mathbf{V}^i \mathbf{\zeta}(t-i) + \mathbf{I}^T_{np, n} \sum^i_{j=0} \mathbf{V}^j \mathbf{I}_{np,n} \mathbf{U}(\mathbf{I}) \mathbf{\mu}(t-j)
\end{align}
Note that $\hat{\mathbf{z}}^{(i)}$ is consistent with $\hat{\mathbf{z}}^{(0)}$ in \eqref{pfaBUpredictor} for $i=0$. Based on $\hat{\mathbf{z}}^{(i)}$ we proposed the optimization problem
\begin{equation} \label{ipfaNonAutoRegressiveWhitenedRewrite}
\optmin{\mathbf{A} \in \orth(n)} \sum_{i=0}^k \bigav{\; \norm{\mathbf{A}_r^T(\mathbf{z}-\hat{\mathbf{z}}^{(i)})}^2 \; } \quad = \quad \tr \Bigg( \mathbf{A}_r^T \sum_{i=0}^k \bigav{\big(\mathbf{z} - \hat{\mathbf{z}}^{(i)}\big) \big(\mathbf{z} - \hat{\mathbf{z}}^{(i)}\big)^T} \mathbf{A}_r \Bigg)
\end{equation}
It can be solved by the same procedure as \eqref{pfaNonAutoRegressiveWhitened}:
Choose $\mathbf{A}$ such that it diagonalizes $\sum_{i=0}^k \bigav{\left(\mathbf{z} - \hat{\mathbf{z}}^{(i)}\right) \left(\mathbf{z} - \hat{\mathbf{z}}^{(i)}\right)^T}$ and sort the lowest $r$ eigenvalues to the upper left. Apply $\mathbf{B}_{\mathbf{z}}(\mathbf{A}_r)$  and $\mathbf{U}_{\mathbf{z}}(\mathbf{A}_r)$ to get the prediction matrices for the obtained extraction matrix.

\subsection{Generating a control signal for local navigation} \label{sec:controlPFA}

Considering an agent exploring an environment, we present the agent's perception as main input $\mathbf{z}$ to PFAx and provide the preceding control command as supplementary information $\mathbf{u}$. This way the extracted predictable features will be a compact representation of perception aspects that are influenced by the control commands. We assume that the control signal is somehow generated during training phase, e.g.\ randomly within some constraints.
After training phase, PFAx provides $\mathbf{A}_r$, $\mathbf{B}(\mathbf{A}_r)$ and $\mathbf{U}(\mathbf{A}_r)$ and we want to reach a goal position $\mathbf{m}^*$ in feature space, assuming that feature space is sufficiently representative to let us actually reach the associated goal in our environment. As explained earlier, we will apply SFA on top of PFAx, so in contrast to the original PFAx setting, we must consider an additional extraction matrix $\mathbf{A}_{\text{SFA}}$. This is a true generalization as $\mathbf{A}_{\text{SFA}} = \id$ yields the original setting. Note that SFA also incorporates a mean shift, which is omitted here for simplicity and considering that PFAx output should be already mean free. Further more this would only result in a shift component for the cost function and is thus irrelevant for optimization.
To calculate the ideal control command we minimize the least square distance between predicted features and goal features w.r.t.\ a proceeding linear SFA step:

\begin{align} \label{ipfaOptControl}
\optmin{\mathbf{u}(t) \; \in \; \mathbb{R}^{n_{\mathbf{u}}}} &\norm{\mathbf{m}^* - \mathbf{A}_{\text{SFA}}^T \big( \mathbf{B}(\mathbf{A}_r) \mathbf{A}_r^T \mathbf{\zeta}(t+1) - \mathbf{U}(\mathbf{A}_r) \mathbf{\mu}(t+1) \big)}^2 \\
= \quad &\norm{\underbrace{\mathbf{m}^* - \mathbf{A}_{\text{SFA}}^T \Big( \mathbf{B}(\mathbf{A}_r) \mathbf{A}_r^T \mathbf{\zeta}(t+1) - \Big(\sum_{j=2}^q \mathbf{U}_j(\mathbf{A}_r) \mathbf{u}(t-j+1)\Big) \Big)}_{\eqcolon \; \mathbf{u}^* \; \in \; \mathbb{R}^{r}} - \mathbf{A}_{\text{SFA}}^T \mathbf{U}_1 \mathbf{u}(t)}^2 \\
= \quad &\norm{\mathbf{u}^*  - \underbrace{\mathbf{A}_{\text{SFA}}^T \mathbf{U}_1}_{
		\eqcolon \; \tilde{\mathbf{U}}_1
		} \mathbf{u}(t)}^2
\end{align}
This problem is readily solved by choosing $\mathbf{u}(t) \coloneq \tilde{\mathbf{U}}_1^{-1} \mathbf{u}^*$ (or $\mathbf{u}(t)~\coloneq~(\tilde{\mathbf{U}}_1^T\tilde{\mathbf{U}}_1)^{-1} \tilde{\mathbf{U}}_1^T \mathbf{u}^*$, if $\tilde{\mathbf{U}}_1$ is not square or not invertible).
Note that this would also minimize $\norm{\mathbf{u}^*  - \tilde{\mathbf{U}}_1 \mathbf{u}(t)}$. However, to incorporate constraints on $\mathbf{u}$, the squared distance is much friendlier for optimization.

\begin{figure}[ht]
	\centering
	\captionsetup{width=.8\linewidth}
	\includegraphics[width=0.9\hsize]{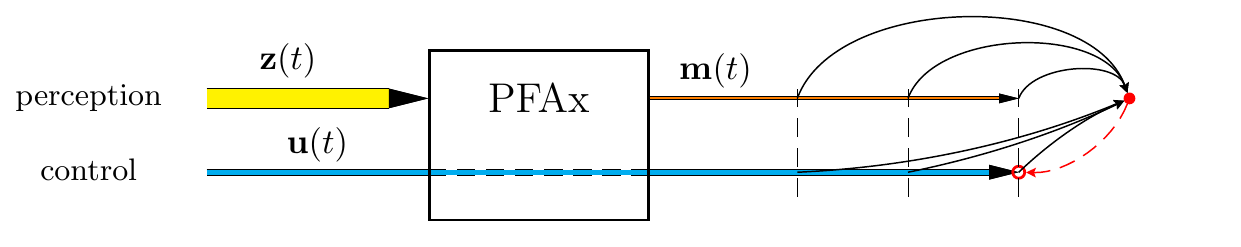}
	\caption{Illustration of controlling predictable features. The relation of control signal and prediction is inverted to obtain the control command that most likely yields the desired outcome.}
\end{figure}

Later we will model an agent moving with constant speed, which involves a normalized-length-constraint:
\begin{equation}
\optmin{\substack{\mathbf{u}(t) \; \in \; \mathbb{R}^{n_{\mathbf{u}}}\\ \norm{\mathbf{u}(t)} \; = \; c}} \norm{\mathbf{u}^*  - \tilde{\mathbf{U}}_1 \mathbf{u}(t)}^2
\end{equation}
This is equivalent to the inhomogeneous eigenvalue problem
\begin{align} \label{ipfaOptControlInhomEVP}
\tilde{\mathbf{U}}_1^T \tilde{\mathbf{U}}_1 \mathbf{u}(t) \quad = \quad &\lambda \mathbf{u}(t) + \tilde{\mathbf{U}}_1^T \mathbf{u}^* \\
\norm{\mathbf{u}(t)} \quad = \quad &c
\end{align}
In \cite{MATTHEIJ1987507} such problems are approached. One method from there can also be found in the appendix of \cite{2017arXiv171200634R}. In that work we provide some experiments indicating that this method is suitable for local navigation, but cannot readily navigate its way globally, e.g.\ around obstacles or through doors connecting multiple rooms. The following section extends this method such that it is capable of solving these kind of global navigation tasks.

\section{From local to global navigation} \label{sec:localToGlobal}

To achieve global navigation, the Slow Feature Analysis algorithm (SFA) \cite{WiskottSejnowski-2002} and its extension xSFA \cite{SprekelerZitoEtAl-2014} for blind source separation play an important role. Especially the mathematical foundation of xSFA, which is grounded on the mathematical analysis of SFA in \cite{SprekelerWiskott-2008} forms a key component for the navigation approach presented here. So we first comprehend the original SFA algorithm and then sketch its mathematical foundation, also stating key results of the theory that xSFA is based on. Finally we apply these results to our navigation setting, yielding an efficient algorithm for global navigation.

\subsection{Recall SFA} \label{sec:sfa}
Like PFA selects components by predictability, SFA selects them by slowness. As it was a central inspiration for PFA, SFA has some more similarities to it:
The extraction is also optimized over a training phase $\trph$ and to avoid trivial/constant or repeated solutions, the output signals must have unit variance, zero mean and must be pairwise uncorrelated. We refer to the transformation as $\mathbf{g}(\mathbf{x})$, i.e.\ the $i$th extracted signal is given as $\mathbf{y}_i(t)\coloneq\mathbf{g}_i(\mathbf{x(t)})$. Note that these depend instantaneously on the input signal $\mathbf{x}(t)$, so SFA cannot just fulfill its goal by forming a lowpass filter. Operating on a general function space $\mathcal{F}$ that fulfills the necessary mathematical requirements of integrability and differentiability, the SFA optimization problem can be formulated as follows:
\begin{align} \label{sfa-F}
\text{For} \; i \in \{1, \ldots, r\} \notag \\
\begin{split} 
\optmin{\mathbf{g}_i \in \mathcal{F}} & \av{\dot{\mathbf{y}}_i^2} \\
\subjectto	& \av{\mathbf{y}_i}
\hphantom{\mathbf{y}_i^2 \mathbf{y}_j} \, \!\!
= \quad 0 \quad \hphantom{\forall \; j < i} \quad \text{(zero mean)}\\
& \av{\mathbf{y}_i^2}
\hphantom{\mathbf{y}_i \mathbf{y}_j} \!\!
= \quad 1 \quad \hphantom{\forall \; j < i} \quad \text{(unit variance)}\\
& \av{\mathbf{y}_i \mathbf{y}_j }
\hphantom{ \mathbf{y}_i^2} \, \!\!
= \quad  0 \quad \forall \; j < i \quad \text{(pairwise decorrelation)}
\end{split}
\end{align}

Restricting $\mathcal{F}$ to be finite dimensional, e.g.\ to the space of polynomials up to a certain degree, transforms \eqref{sfa-F} into an efficiently solvable eigenvalue problem. With the notation familiar from the PFA description in section \ref{sec:pfax}, let $\mathbf{h}$ denote a basis of $\mathcal{F}$. Then using $\mathbf{h}$ as a nonlinear expansion on the input signal $\mathbf{x}$, extraction can be performed by linear transformation and projection. With an initial sphering, i.e.\ \eqref{sphering1} and \eqref{sphering2} from section \ref{sec:pfax} we can set $\mathbf{g}_i(\mathbf{x(t)})~\coloneq~\mathbf{a}_i^T\mathbf{z}(t)$ for extraction vectors $\mathbf{a}_i~\in~\mathbb{R}^{n}$. SFA then becomes the following linearized version of \eqref{sfa-F}:
\begin{align} \label{sfa}
\text{For} \; i \in \{1, \ldots, r\} \notag \\
\begin{split} 
\optmin{\mathbf{a}_i \in \mathbb{R}^{n}} & \mathbf{a}_i^T \av{\dot{\mathbf{z}}\dot{\mathbf{z}}^T} \mathbf{a}_i\\
\subjectto	& \mathbf{a}_i^T \av{\mathbf{z}} \hphantom{\mathbf{a}_i \mathbf{a}_j \mathbf{z}^T} \, = \quad 0 \quad \hphantom{\forall \; j < i} \quad \text{(zero mean)}\\
& \mathbf{a}_i^T \av{\mathbf{z} \mathbf{z}^T} \mathbf{a}_i \hphantom{\mathbf{a}_j} = \quad 1 \quad \hphantom{\forall \; j < i} \quad \text{(unit variance)}\\
& \mathbf{a}_i^T \av{\mathbf{z} \mathbf{z}^T} \mathbf{a}_j \hphantom{\mathbf{a}_i} = \quad  0 \quad \forall \; j < i \quad \text{(pairwise decorrelation)}
\end{split}
\end{align}
Like in section \ref{sec:pfax}, the sphering yields $\av{\mathbf{z}} = 0$ and $\av{\mathbf{z} \mathbf{z}^T} = \id$, transforming the constraints to \eqref{PFA-constraints-no-matrix} and its associated matrix notation \eqref{PFA-constraints-matrix}.
Choosing $\mathbf{a}_i$ as eigenvectors of $\av{\dot{\mathbf{z}}\dot{\mathbf{z}}^T}$, corresponding to the eigenvalues in ascending order, yields $\mathbf{A}_r$ solving \eqref{sfa} globally. \cite{WiskottSejnowski-2002} describes this procedure in detail.

\eqref{sfa} is an important approximation \eqref{sfa-F} for practical solvability. To get an idea to what solutions \eqref{sfa} would converge if we increase the dimension of $\mathbf{h}$, we focus again on the SFA version concerning an unrestricted function space $\mathcal{F}$ and the ideal solutions one would expect there. More specifically, we focus on the scenario where $\mathbf{x(t)}$ is composed of statistically independent sources $\mathbf{s}_{\alpha}$. \cite{SprekelerWiskott-2008} and \cite{SprekelerZitoEtAl-2014} analyze this case, proposing xSFA as an extension to SFA that can identify such sources. We comprehend some theory and results:

Assuming that $\mathbf{x(t)}$ is an ergodic process, SFA can be formulated in terms of the ensemble (i.e.\ the set of possible values of $\mathbf{x}$ and $\dot{\mathbf{x}}$) using the probability density $p_{\mathbf{x}, \dot{\mathbf{x}}}(\mathbf{x}, \dot{\mathbf{x}})$. The corresponding marginal and conditional densities are defined as $p_{\mathbf{x}}(\mathbf{x})~\!\!\coloneq~\!\!\int p_{\mathbf{x}, \dot{\mathbf{x}}}(\mathbf{x}, \dot{\mathbf{x}}) d^n \dot{x}$ and $p_{\dot{\mathbf{x}} | \mathbf{x}}(\dot{\mathbf{x}} | \mathbf{x})~\!\!\coloneq~\!\!\frac{p_{\mathbf{x}, \dot{\mathbf{x}}}(\mathbf{x}, \dot{\mathbf{x}})}{p_{\mathbf{x}}(\mathbf{x})}$.
Further assuming that the ensemble averages $\av{f(\mathbf{x}, \dot{\mathbf{x}})}_{\mathbf{x}, \dot{\mathbf{x}}}~\!\!\coloneq~\!\!\int p_{\mathbf{x}, \dot{\mathbf{x}}}(\mathbf{x}, \dot{\mathbf{x}}) f(\mathbf{x}, \dot{\mathbf{x}}) d^n x d^n \dot{x}$,
$\av{f(\mathbf{x})}_{\mathbf{x}}~\!\!\coloneq~\!\!\int p_{\mathbf{x}}(\mathbf{x}) f(\mathbf{x}) d^n x$ and $\av{f(\mathbf{x}, \dot{\mathbf{x}})}_{\dot{\mathbf{x}} | \mathbf{x}}(\mathbf{x})~\!\!\coloneq~\!\!\int p_{\dot{\mathbf{x}} | \mathbf{x}}(\dot{\mathbf{x}}|\mathbf{x}) f(\mathbf{x}, \dot{\mathbf{x}}) d^n \dot{x}$ all exist and using the chain rule, the SFA optimization problem can be stated in terms of the ensemble as well:
\begin{align} \label{sfa-ensemble}
\text{For} \; i \in \{1, \ldots, r\} \notag \\
\begin{split} 
\optmin{\mathbf{g}_i \in \mathcal{F}} & \sum_{\gamma, \nu} \av{\partial_{\gamma}\mathbf{g}_i(\mathbf{x}) \av{\dot{\mathbf{x}}_{\gamma} \dot{\mathbf{x}}_{\nu}}_{\dot{\mathbf{x}} | \mathbf{x}} \partial_{\nu}\mathbf{g}_i(\mathbf{x})}_{\mathbf{x}} \\
\subjectto	& \av{\mathbf{g}_i(\mathbf{x})}_{\mathbf{x}}
\hphantom{\mathbf{g}_i^2(\mathbf{x}) \mathbf{g}_j(\mathbf{x})} \, \!\!
= \quad 0 \quad \hphantom{\forall \; j < i} \quad \text{(zero mean)}\\
& \av{\mathbf{g}_i^2(\mathbf{x})}_{\mathbf{x}}
\hphantom{\mathbf{g}_i(\mathbf{x}) \mathbf{g}_j(\mathbf{x})} \!\!
= \quad 1 \quad \hphantom{\forall \; j < i} \quad \text{(unit variance)}\\
& \av{\mathbf{g}_i(\mathbf{x}) \mathbf{g}_j(\mathbf{x}) }_{\mathbf{x}}
\hphantom{ \mathbf{g}_i^2(\mathbf{x})} \, \!\!
= \quad  0 \quad \forall \; j < i \quad \text{(pairwise decorrelation)}
\end{split}
\end{align}
A key result from \cite{SprekelerWiskott-2008} is that the ideal solutions for SFA on an unrestricted function space can be found by solving the following eigenvalue equation given the partial differential operator $\mathcal{D} \coloneq -\frac{1}{p_{\mathbf{x}}(\mathbf{x})} \sum_{\gamma, \nu} \partial_{\gamma}p_{\mathbf{x}}(\mathbf{x}) \av{\dot{\mathbf{x}}_{\gamma} \dot{\mathbf{x}}_{\nu}}_{\dot{\mathbf{x}} | \mathbf{x}}(\mathbf{x}) \partial_{\nu}$:

\begin{equation} \label{dgl-x}
\mathcal{D} \mathbf{g}_i(\mathbf{x}) \quad = \quad \lambda_i \mathbf{g}_i(\mathbf{x})
\end{equation}
under the von Neumann boundary conditions
\begin{equation} \label{neumann-x}
\sum_{\gamma, \nu} n_{\gamma}(\mathbf{x}) p_{\mathbf{x}}(\mathbf{x}) \av{\dot{\mathbf{x}}_{\gamma} \dot{\mathbf{x}}_{\nu}}_{\dot{\mathbf{x}} | \mathbf{x}}(\mathbf{x}) \partial_{\nu} \mathbf{g}_i(\mathbf{x})
\end{equation}
where $n_{\gamma}(\mathbf{x})$ is the $\gamma$th component of the normal vector at the boundary point $\mathbf{x}$. Assuming the input signal $\mathbf{x}(t)$ is composed of statistically independent sources $\mathbf{s}_{\alpha}$ for $\alpha \in \{1, \ldots, S \}$, this result can be formulated in terms of the sources. Because of statistical independence we have $p_{\mathbf{s}, \dot{\mathbf{s}}}(\mathbf{s}, \dot{\mathbf{s}}) = \prod_{\alpha}  p_{\mathbf{s}_{\alpha}, \dot{\mathbf{s}}_{\alpha}}(\mathbf{s}_{\alpha}, \dot{\mathbf{s}}_{\alpha})$,
$p_{\mathbf{s}}(\mathbf{s}) = \prod_{\alpha}  p_{\mathbf{s}_{\alpha}}(\mathbf{s}_{\alpha})$ and
$\av{\dot{\mathbf{s}}_{\alpha} \dot{\mathbf{s}}_{\beta}}_{\dot{\mathbf{s}} | \mathbf{s}}(\mathbf{s}) = \delta_{\alpha \beta} \av{\dot{\mathbf{s}}_{\alpha}^2}_{\dot{\mathbf{s}}_{\alpha} | \mathbf{s}_{\alpha}}(\mathbf{s}_{\alpha})$.
 $\mathcal{D}(\mathbf{s})$ can be decomposed as
\begin{equation}
\mathcal{D} (\mathbf{s}) \quad = \quad \sum_{\alpha} \mathcal{D}_{\alpha} (\mathbf{s}_{\alpha})
\end{equation}
Regarding this decomposition, \eqref{dgl-x} and \eqref{neumann-x} can be reformulated such that, with an additional normalization constraint, the following equations formulate SFA in terms of the sources:
\begin{align}
\mathcal{D}_{\alpha} \mathbf{g}_{\alpha i} \quad &= \quad \lambda_{\alpha i} \mathbf{g}_{\alpha i} \label{dgl-s}\\ 
p_{\alpha} \av{\dot{\mathbf{s}}_{\alpha}^2}_{\dot{\mathbf{s}}_{\alpha} | \mathbf{s}_{\alpha}} \partial_{\alpha} \mathbf{g}_{\alpha i} \quad &= \quad 0 \quad\quad\quad \text{on the boundary}\label{neumann-s} \\ 
\av{\mathbf{g}_{\alpha i}^2}_{\mathbf{s}_{\alpha}} \quad &= \quad 1
\end{align}
\textsl{Theorem 2} in \cite{SprekelerWiskott-2008} / \textsl{Theorem 1} in \cite{SprekelerZitoEtAl-2014} states that the solutions of \eqref{dgl-x} are composed from solutions of \eqref{dgl-s}:
\begin{align}
\mathbf{g}_{\mathbf{i}}(\mathbf{s}) \quad &= \quad \prod_{\alpha} \; \mathbf{g}_{\alpha \mathbf{i}_{\alpha}}(\mathbf{s}_{\alpha}) \label{x-from-s} \\
\lambda_{\mathbf{i}} \quad &= \quad \sum_{\alpha} \; \lambda_{\alpha \mathbf{i}_{\alpha}}
\end{align}
with $\mathbf{i} = (i_1, \ldots, i_S) \in \mathbb{N}^S$ denoting a multi index to select the right combination of sources. Choosing the $r$ smallest eigenvalues $\lambda_{\mathbf{i}}$ yields the $r$ slowest output signals.

Another crucial result from \cite{SprekelerWiskott-2008} and \cite{SprekelerZitoEtAl-2014} states 
monotonicity of each first harmonic $\mathbf{g}_{\alpha 1}(\mathbf{s}_{\alpha})$ w.r.t.\ $\mathbf{s}_{\alpha}$. For later reference we denote this result as Lemma \ref{lem1} and comprehend the proof. We extend the lemma by remarking that it does not require $p_{\alpha}$ to be a probability distribution. It rather works for any strictly positive weighting function. We will make use of this fact later on.

\begin{lem} \label{lem1}
If $\mathbf{s}$ consists of statistically independent components $\mathbf{s}_{\alpha}$ like introduced above, then
for each source $\mathbf{s}_{\alpha}$ the first harmonic $\mathbf{g}_{\alpha 1}(\mathbf{s}_{\alpha})$ is a monotonic signal of the source $\mathbf{s}_{\alpha}$.
This also holds if the distribution $p_{\alpha}$ of $\mathbf{s}_{\alpha}$ is not a probability distribution, but any strictly positive weighting function.
\end{lem}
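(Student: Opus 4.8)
The plan is to recognize \eqref{dgl-s} as a regular Sturm--Liouville eigenvalue problem in the single variable $\mathbf{s}_\alpha$ and then to exploit classical oscillation theory together with an elementary sign-chasing argument. Writing the operator in divergence form, $\mathcal{D}_\alpha \mathbf{g}_{\alpha i} = -\frac{1}{p_\alpha}\,\partial_\alpha\!\big(P_\alpha\,\partial_\alpha \mathbf{g}_{\alpha i}\big)$ with $P_\alpha \coloneq p_\alpha\,\av{\dot{\mathbf{s}}_\alpha^2}_{\dot{\mathbf{s}}_\alpha|\mathbf{s}_\alpha}$, equation \eqref{dgl-s} becomes $-\partial_\alpha(P_\alpha\,\partial_\alpha \mathbf{g}_{\alpha i}) = \lambda_{\alpha i}\,p_\alpha\,\mathbf{g}_{\alpha i}$, i.e.\ a Sturm--Liouville problem with coefficient $P_\alpha$ and weight $p_\alpha$. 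The decisive structural fact, which I would state up front, is that both $P_\alpha$ and the weight $p_\alpha$ are strictly positive: $p_\alpha>0$ by assumption and $\av{\dot{\mathbf{s}}_\alpha^2}_{\dot{\mathbf{s}}_\alpha|\mathbf{s}_\alpha}>0$ as the variance of a non-degenerate velocity. Moreover \eqref{neumann-s} is exactly the von Neumann condition $P_\alpha\,\partial_\alpha \mathbf{g}_{\alpha i}=0$ at the endpoints of the source range.

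First I would invoke Sturm oscillation theory for this problem: the eigenvalues are real and can be ordered $0=\lambda_{\alpha 0}<\lambda_{\alpha 1}<\lambda_{\alpha 2}<\cdots$, and the eigenfunction belonging to $\lambda_{\alpha n}$ has exactly $n$ zeros in the interior of the domain. With Neumann boundary conditions the ground state $\mathbf{g}_{\alpha 0}$ is constant and is discarded by the zero-mean constraint, so the first harmonic $\mathbf{g}_{\alpha 1}$ is the eigenfunction of the smallest positive eigenvalue and therefore possesses exactly one interior zero $z$; in particular it changes sign there, say from positive to negative.

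The monotonicity then follows by chasing the sign of $P_\alpha\,\partial_\alpha \mathbf{g}_{\alpha 1}$. On the subinterval where $\mathbf{g}_{\alpha 1}>0$, the right-hand side $\lambda_{\alpha 1}\,p_\alpha\,\mathbf{g}_{\alpha 1}$ is strictly positive, so $\partial_\alpha(P_\alpha\,\partial_\alpha \mathbf{g}_{\alpha 1})<0$, i.e.\ $P_\alpha\,\partial_\alpha \mathbf{g}_{\alpha 1}$ is strictly decreasing; since it vanishes at the left endpoint by \eqref{neumann-s}, it stays negative up to $z$. Symmetrically, on the subinterval where $\mathbf{g}_{\alpha 1}<0$ the quantity $P_\alpha\,\partial_\alpha \mathbf{g}_{\alpha 1}$ is strictly increasing and vanishes at the right endpoint, hence it is negative up to $z$ as well. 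Because $P_\alpha>0$, this forces $\partial_\alpha \mathbf{g}_{\alpha 1}<0$ throughout the interior, so $\mathbf{g}_{\alpha 1}$ is strictly monotonic. The extension claimed in the lemma is then immediate: nowhere in this argument did I use $\int p_\alpha=1$; only the strict positivity of $p_\alpha$ (entering as the weight and controlling the sign of the right-hand side) and of $P_\alpha$ was used, so the conclusion persists verbatim for any strictly positive weighting function.

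I expect the main obstacle to be the rigorous justification of the oscillation/node-counting step, i.e.\ checking that the problem is genuinely regular (self-adjoint with a discrete spectrum and the stated nodal structure) rather than the sign-chasing, which is elementary once one interior node is known. In particular I would need to address possible singular endpoints---where $p_\alpha$ or $P_\alpha$ degenerates, or where the source range is unbounded---by appealing to the limit-point/limit-circle version of Sturm--Liouville theory, and to confirm that the Neumann condition \eqref{neumann-s} selects the constant as the unique ground state so that $\mathbf{g}_{\alpha 1}$ is indeed the first nonconstant harmonic.
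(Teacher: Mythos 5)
Your proof is correct and follows essentially the same route as the paper's: cast \eqref{dgl-s}/\eqref{neumann-s} as a regular Sturm--Liouville problem, invoke oscillation theory to obtain the single interior zero of $\mathbf{g}_{\alpha 1}$, chase the sign of the flux $p_\alpha \av{\dot{\mathbf{s}}_\alpha^2}_{\dot{\mathbf{s}}_\alpha | \mathbf{s}_\alpha} \partial_\alpha \mathbf{g}_{\alpha 1}$ from the Neumann endpoints on each side of the zero, and observe at the end that only strict positivity of $p_\alpha$ (never its normalization) was used. If anything your version is slightly more careful than the paper's: you justify the node count explicitly (constant Neumann ground state eliminated by the zero-mean constraint) and you state the monotone direction of the flux correctly on each subinterval, where the paper's displayed chain asserts the flux is ``monotonically increasing'' on $(\xi, b_\alpha]$ although its own sign computation gives a decreasing flux there.
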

\begin{proof}
In standard form of a Sturm-Liouville problem and assuming that $\mathbf{s}_{\alpha}$ maps to the interval $I_{\alpha}~\!\!=~\!\![a_{\alpha}, b_{\alpha}]$, \eqref{dgl-s}/\eqref{neumann-s} are stated as
\begin{align}
\partial_{\alpha} p_{\alpha} \av{\dot{\mathbf{s}}_{\alpha}^2}_{\dot{\mathbf{s}}_{\alpha} | \mathbf{s}_{\alpha}} \partial_{\alpha} \mathbf{g}_{\alpha i} + \lambda_{\alpha i} p_{\alpha} \mathbf{g}_{\alpha i} \quad &= \quad  0 \label{dgl-SL}\\ 
p_{\alpha} \av{\dot{\mathbf{s}}_{\alpha}^2}_{\dot{\mathbf{s}}_{\alpha} | \mathbf{s}_{\alpha}} \partial_{\alpha} \mathbf{g}_{\alpha i} \quad &= \quad 0 \quad \forall \; \mathbf{s}_{\alpha} \in \{a_{\alpha}, b_{\alpha}\} \label{neumann-SL}
\end{align}
With Sturm-Liouville theory stating that $\mathbf{g}_{\alpha 1}$ has only one zero $\xi~\!\!\in~\!\!(a_{\alpha},~\!\!b_{\alpha})$ we assume that
without loss of generality
$\mathbf{g}_{\alpha 1}~\!\!<~\!\!0$ for $\mathbf{s}_{\alpha}~\!\!<~\!\!\xi$ and $\mathbf{g}_{\alpha 1}~\!\!>~\!\!0$ for $\mathbf{s}_{\alpha}~\!\!>~\!\!\xi$.

\begin{align}
\eqref{dgl-SL} \quad \ra_{\hphantom{\eqref{neumann-SL}}}&\quad
\partial_{\alpha} p_{\alpha} \av{\dot{\mathbf{s}}_{\alpha}^2}_{\dot{\mathbf{s}}_{\alpha} | \mathbf{s}_{\alpha}} \partial_{\alpha} \mathbf{g}_{\alpha 1} \quad = \quad  -\underbrace{\lambda_{\alpha 1} p_{\alpha}}_{> 0} \mathbf{g}_{\alpha 1} \quad < \quad 0 \quad \forall \; \mathbf{s}_{\alpha} > \xi \\
\ra_{\hphantom{\eqref{neumann-SL}}}&\quad
p_{\alpha} \av{\dot{\mathbf{s}}_{\alpha}^2}_{\dot{\mathbf{s}}_{\alpha} | \mathbf{s}_{\alpha}} \partial_{\alpha} \mathbf{g}_{\alpha 1} \quad \text{monotonically increasing on } (\xi, b_{\alpha}] \\
\ra_{\eqref{neumann-SL}}&\quad
\! \underbrace{p_{\alpha} \av{\dot{\mathbf{s}}_{\alpha}^2}_{\dot{\mathbf{s}}_{\alpha} | \mathbf{s}_{\alpha}}}_{> 0} \partial_{\alpha} \mathbf{g}_{\alpha 1} \quad > \quad 0 \quad \text{on } (\xi, b_{\alpha}) \\
\ra_{\hphantom{\eqref{neumann-SL}}}&\quad
\partial_{\alpha} \mathbf{g}_{\alpha 1} \quad > \quad 0 \quad \text{on } (\xi, b_{\alpha}) \\
\lra_{\hphantom{\eqref{neumann-SL}}}&\quad
\mathbf{g}_{\alpha 1} \quad \text{monotonically increasing on } (\xi, b_{\alpha}] \label{xSFA-monotonicity}
\end{align}
Equivalently it holds that $\mathbf{g}_{\alpha 1}$ is monotonically increasing on $[a_{\alpha}, \xi)$, implying that $\mathbf{g}_{\alpha 1}$ is monotonically increasing on the whole interval $I_{\alpha}$.

The calculation above does not require $p_{\alpha}$ to be a probability distribution, but only to be a strictly positive weighting function.
\end{proof}

We list some additional important results from \cite{SprekelerWiskott-2008} and \cite{SprekelerZitoEtAl-2014}:

\begin{itemize}
\item If the sources are normally distributed, i.e.\ $p_{\alpha}(\mathbf{s}_{\alpha}) = \frac{1}{\sqrt{2 \pi}} e^{\frac{1}{2}\mathbf{s}_{\alpha}^2}$, then $\av{\dot{\mathbf{s}}_{\alpha}^2}_{\dot{\mathbf{s}}_{\alpha} | \mathbf{s}_{\alpha}}$ is constant and the Hermite polynomials $\Hermite_i$ yield the solutions $\mathbf{g}_{\alpha i}(\mathbf{s}_{\alpha}) = \frac{1}{\sqrt{2^i i!}} \Hermite_i(\frac{\mathbf{s}_{\alpha}}{\sqrt{2}})$ with $\lambda_{\alpha i} = \frac{i}{\av{\dot{\mathbf{s}}_{\alpha}^2}_{\dot{\mathbf{s}}_{\alpha} | \mathbf{s}_{\alpha}}}$.

\item If the sources are uniformly distributed, then $\av{\dot{\mathbf{s}}_{\alpha}^2}_{\dot{\mathbf{s}}_{\alpha} | \mathbf{s}_{\alpha}}$ is constant and the solutions are given by Sturm-Liouville theory as harmonic oscillations $\mathbf{g}_{\alpha i}(\mathbf{s}_{\alpha}) = \sqrt{2} \cos \big(i \pi \frac{\mathbf{s}_{\alpha}}{L_{\alpha}} \big)$ with $\lambda_{\alpha i}~\!\!=~\!\!\av{\dot{\mathbf{s}}_{\alpha}^2}_{\dot{\mathbf{s}}_{\alpha} | \mathbf{s}_{\alpha}} \big(\frac{\pi}{L_{\alpha}} i \big)^2$, assuming that $\mathbf{s}_{\alpha}$ takes values in the interval $[0, L_{\alpha}]$. Therefore one refers to $\mathbf{g}_{\alpha i}$ as the $i$th harmonic of the source $\mathbf{s}_{\alpha}$.
Note that in this case, all higher harmonics can be calculated from the first harmonic using the Chebyshev polynomials $\Tschebyschow_i$: $\mathbf{g}_{\alpha i} = \Tschebyschow_i(\mathbf{g}_{\alpha 1})$

\item The slowest signal found by SFA is plainly the first harmonic $\mathbf{g}_{\alpha 1}$ of the slowest source. This result is a corner stone of xSFA as it allows to clean subsequent signals from the first source. Iterating this procedure finally yields all sources.


\item \eqref{x-from-s} implies that each output component $\mathbf{g}_{\mathbf{i}}$ of SFA is a product of harmonics $\mathbf{g}_{\alpha i}$ of earlier obtained sources.
\end{itemize}

\subsubsection{SFA harmonics illustrated on a 1D random walk} \label{sec:harmonics-1D}

To illustrate the role of harmonics for one specific source, we demonstrate the theory on a simple 1D random walk on the interval $[0, 100]$. An agent starts at position $50$ and each step is chosen by a uniform distribution over the interval $[-\frac{1}{2}, \frac{1}{2}]$. Steps exceeding the left or right boundary are simply cut off.

\begin{figure}[ht]
	\centering
	\captionsetup{width=.90\linewidth}
	\includegraphics[width=1.0\hsize]{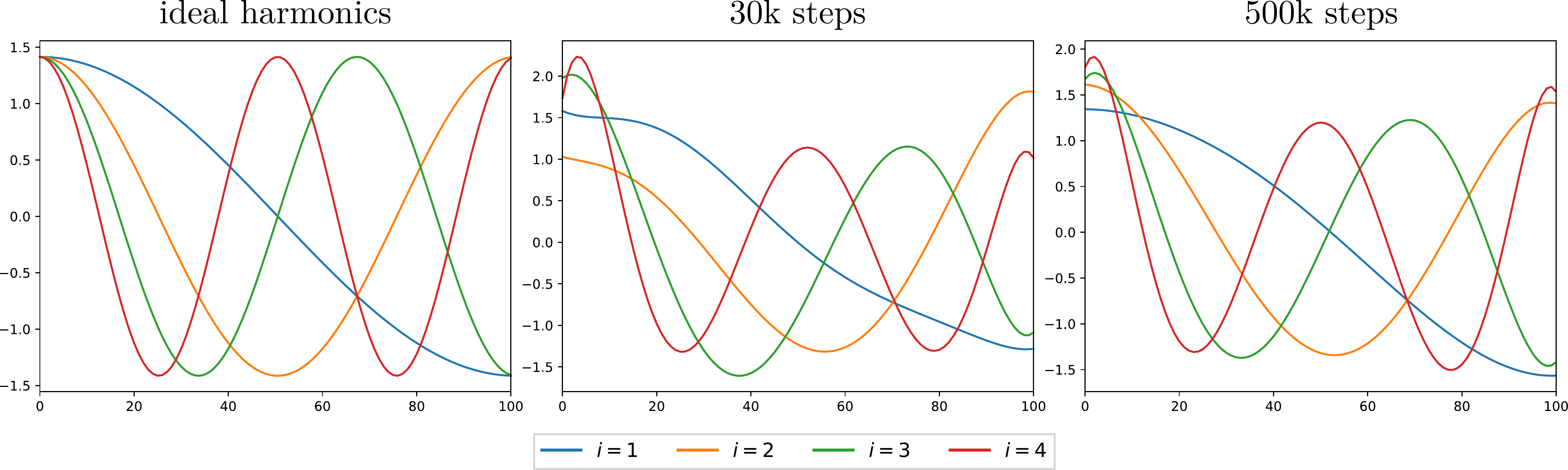}
	\caption{Illustration of the first four harmonics. The ideal harmonics predicted by the theory (left) and harmonics extracted by SFA from a 1D random walk on $[0, 100]$, i.e.\ from a single source. A longer random walk (right) approximates uniform distribution -- and thus the ideal harmonics -- better than a shorter (middle).}
	\label{fig:1D-harmonics}
\end{figure}

We provide the plain position as input to SFA, using monomials up to the sixth degree as expansion. We extract the first four harmonics and compare them to those predicted by the theory. Although a 1D random walk usually yields a normal distribution around the starting point, for long training phases and due to the boundaries, the distribution actually approaches uniformity with some bias near the boundary. Figure \ref{fig:1D-harmonics} illustrates this effect, comparing a shorter walk consisting of $30000$ steps with a longer walk consisting of $500000$ steps. Another notable effect is that lower harmonics are usually extracted cleaner than higher harmonics, which is due to the limited monomial expansion. Further note that the monotonicity of the first harmonic is mostly preserved even if the harmonic itself was not cleanly extracted. The algorithm presented in this paper benefits from this effect as it mainly exploits the first harmonic of each source and especially its monotonicity.

\subsection{xSFA on manifolds} \label{sec:xSFA_on_manifolds}

For the blind source separation setting in \cite{SprekelerZitoEtAl-2014} it is assumed that the input is composed from statistically independent sources. This assumption is not necessarily appropriate for the setting studied in this paper. A closer fit can be found in \cite{FranziusSprekelerEtAl-2007e} where the agent's state space, denoted \textsl{configuration space} $\mathcal{V}$, is considered a manifold embedded in the sensor space that yields the data subject of study.
In this section we extend xSFA theory from \cite{SprekelerZitoEtAl-2014} to such a manifold setting and establish a geometrical characterization of the solutions in terms of potential, monotonicity, geodesics and manifold representation. These results motivate the navigation algorithm proposed in section \ref{sec:Global-navigation-algorithm}.

\subsubsection{Slowest features are monotonic flows on the state space}

Lemma \ref{lem1} shows monotonicity with respect to the slowest source, but does not characterize the source itself in context of the state space $\mathcal{V}$. We can show that under certain assumptions, the slowest features actually correspond to monotonic flows across $\mathcal{V}$.
In the notation from \cite{FranziusSprekelerEtAl-2007e} the sources in terms of xSFA are the agent's possible configurations $\mathbf{s} \in \mathcal{V}$. For a fully observable environment and a sufficiently rich sensor, each sensor input value $\mathbf{x}(t)$ can be identified with a value $\mathbf{s} = \psi(\mathbf{x}) \in \mathcal{V}$ such that $\psi$ is an bijective map.
Each output component $\mathbf{g}_i$ of xSFA is a scalar field on $\mathcal{V}$, mapping $\mathbf{s}$ to a real number. We can assume $\mathbf{g}_i$ is bounded so it actually maps to an interval $I_i = [\text{min}_i, \text{max}_i]$, i.e.\
$\mathbf{g}_i \colon \; \mathcal{V} \longrightarrow I_i$. Let $\mathbf{g}^{-1}_i(\theta)$ denote the fiber of $\theta \in I_i$. These are also known as level sets or equipotential sets.
With a plain $I$ we denote the unit interval $I \coloneq [0, 1]$.
Mapping from $\mathcal{V}$ to $\mathbb{R}$, $\mathbf{g}_i$ usually performs a dimensionality reduction, unless $\mathcal{V} \subset \mathbb{R}$. It proves advantageous to study the layout of this reduction, i.e.\ of the fibers of $\mathbf{g}_i$ separately from its value. We achieve this separation by splitting
\begin{equation} \label{tigi}
\mathbf{g}_i(\mathbf{s}) \quad \eqcolon \quad \tilde{\mathbf{g}}_i( t_i(\mathbf{s}) ) \qquad \text{with} \qquad
\tilde{\mathbf{g}}_i \colon \hphantom{\mathcal{V}} I \longrightarrow I_i \; , \qquad
t_i \colon \hphantom{I} \mathcal{V} \longrightarrow I
\end{equation}
$t_i$ is some scalar field on $\mathcal{V}$ realizing the level sets of $\mathbf{g}_i$, while $\tilde{\mathbf{g}}_i$ is a real-valued function realizing the value of $\mathbf{g}_i$ on top of $t_i$. We refer to $t_i$ as the coordinate function of $\mathbf{g}_i$, because it defines a one-dimensional coordinate for $\tilde{\mathbf{g}}_i$ on $\mathcal{V}$.
If $\tilde{\mathbf{g}}_i$ is injective, $t_i$ must have the same level sets as $\mathbf{g}$. Otherwise $t_i$ assigns distinct values to separate connectivity components of level sets of $\mathbf{g}$ whenever such components are induced by $\tilde{\mathbf{g}}_i$ being non-injective.
Also note that $t_i$ can differ from $\mathbf{g}$ in velocity and that $t_i$ only underlies Von Neumann boundary conditions where its fibers hit the boundary orthogonally. Note that the choice of $t_i$ is not unique. E.g.\ every composition of $t_i$ with a bijective function yields another valid $t_i$. This gives us the freedom to assume additional properties on $t_i$, most notably uniform velocity of its integral curves.



\begin{thm} \label{thm1}
Let $\mathbf{g}_i$ be the solution components of xSFA. For every $\tilde{\mathbf{g}}_i$, $t_i$ like defined in \eqref{tigi} with $\mathbf{g}_i = \tilde{\mathbf{g}}_i \circ t_i$ the following holds.
Let $\varphi_{\mathbf{a}, \mathbf{b}} \colon \; I \longrightarrow \mathcal{V}$ be an integral curve of $\nabla t_i$
from $\mathbf{a} \in \mathbf{g}^{-1}_i(\text{min}_i)$ to $\mathbf{b} \in \mathbf{g}^{-1}_i(\text{max}_i)$ or contrary.
Then it runs through $\mathcal{V}$ strictly monotonically w.r.t.\ $\mathbf{g}_i$, i.e.\ $\mathbf{g}_i \circ \varphi_{\mathbf{a}, \mathbf{b}}$ is a strictly monotonic function.

\end{thm}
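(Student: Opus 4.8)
The plan is to exploit the decomposition $\mathbf{g}_i = \tilde{\mathbf{g}}_i \circ t_i$ from \eqref{tigi} and reduce the claim to two one-dimensional monotonicity statements whose composition yields the result. Writing $h \coloneq \mathbf{g}_i \circ \varphi_{\mathbf{a}, \mathbf{b}} = \tilde{\mathbf{g}}_i \circ (t_i \circ \varphi_{\mathbf{a}, \mathbf{b}})$, it suffices to show (i) that the coordinate $t_i$ is strictly monotonic along the integral curve, i.e.\ that $t_i \circ \varphi_{\mathbf{a}, \mathbf{b}}$ is strictly increasing on $I$, and (ii) that the profile $\tilde{\mathbf{g}}_i$ is strictly monotonic on the traversed range $[t_i(\mathbf{a}), t_i(\mathbf{b})]$. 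A composition of strictly monotonic maps is strictly monotonic, so $h$ is then strictly monotonic, with the orientation fixed by whether $\varphi_{\mathbf{a}, \mathbf{b}}$ runs from $\text{min}_i$ to $\text{max}_i$ or contrary.

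For step (i) I would differentiate along the flow. Since $\varphi_{\mathbf{a}, \mathbf{b}}$ is an integral curve of $\nabla t_i$, its velocity satisfies $\dot{\varphi}_{\mathbf{a}, \mathbf{b}} = \nabla t_i(\varphi_{\mathbf{a}, \mathbf{b}})$, hence
\begin{equation}
\frac{d}{d\tau}\, t_i(\varphi_{\mathbf{a}, \mathbf{b}}(\tau)) \quad = \quad \scp{\nabla t_i}{\dot{\varphi}_{\mathbf{a}, \mathbf{b}}} \quad = \quad \norm{\nabla t_i(\varphi_{\mathbf{a}, \mathbf{b}}(\tau))}^2 \quad \geq \quad 0 .
\end{equation}
This is strictly positive wherever $\nabla t_i \neq \mathbf{0}$. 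The curve connects the extremal fibers $\mathbf{g}^{-1}_i(\text{min}_i)$ and $\mathbf{g}^{-1}_i(\text{max}_i)$, so the only critical points of $t_i$ on the curve occur at its endpoints $\mathbf{a}, \mathbf{b}$; on the open interval $\nabla t_i \neq \mathbf{0}$, giving strict monotonicity of $t_i \circ \varphi_{\mathbf{a}, \mathbf{b}}$.

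Step (ii) is the crux and is where Lemma \ref{lem1} enters. The coordinate $t_i$ collapses $\mathcal{V}$ along the fibers of $\mathbf{g}_i$ onto the single direction traced by the flow of $\nabla t_i$. Using the freedom noted after \eqref{tigi} to pick $t_i$ with uniform-velocity integral curves, I would integrate the eigenvalue equation $\mathcal{D}\mathbf{g}_i = \lambda_i \mathbf{g}_i$ over the fibers (level sets) of $t_i$, together with \eqref{neumann-x}, to obtain an effective one-dimensional Sturm-Liouville problem for the profile $\tilde{\mathbf{g}}_i$ in the variable $t_i$. The fiber integration of the positive density $p_{\mathbf{x}}$ weighted by the conditional velocity covariance produces a strictly positive effective weight, which is exactly the generality under which Lemma \ref{lem1} was stated (any strictly positive weighting function, not merely a probability distribution). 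Since $\mathbf{g}_i$ is a slowest nonconstant solution component, its profile $\tilde{\mathbf{g}}_i$ is the first harmonic of this induced problem, so Lemma \ref{lem1} yields a strictly positive derivative of $\tilde{\mathbf{g}}_i$ on the open interval; the hypothesis that $\mathbf{a}, \mathbf{b}$ realize the global extrema $\text{min}_i, \text{max}_i$ guarantees we traverse this single monotonic branch rather than a partial arc.

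The main obstacle is the fiber reduction in step (ii): rigorously deriving the one-dimensional Sturm-Liouville operator for $\tilde{\mathbf{g}}_i$ by integrating the multidimensional operator $\mathcal{D}$ transverse to the flow, and verifying that (a) the induced weight is strictly positive so that the extended Lemma \ref{lem1} applies, and (b) the transverse boundary terms vanish so that the effective problem inherits von Neumann conditions at the endpoints where the integral curve meets $\partial\mathcal{V}$. Step (i) and the final composition are then routine. Combining the two monotone factors gives that $\mathbf{g}_i \circ \varphi_{\mathbf{a}, \mathbf{b}} = \tilde{\mathbf{g}}_i \circ (t_i \circ \varphi_{\mathbf{a}, \mathbf{b}})$ is strictly monotonic, which is the assertion.
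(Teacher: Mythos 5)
Your overall strategy---reduce the claim to a one-dimensional Sturm--Liouville problem attached to the flow of $\nabla t_i$ and invoke the extended Lemma \ref{lem1} with a strictly positive weight---is the same idea the paper uses, but the paper executes the reduction variationally rather than by operator averaging: it rewrites the SFA objective and the unit-variance constraint as an outer integral over the family $R_i$ of integral curves and an inner integral along each curve, and concludes that for the optimal component each restriction $\mathbf{g}_1 \circ \varphi$ must itself solve a one-dimensional SFA problem with weight $\tilde{p}(\theta)$ and per-curve variance budget $v_\varphi^*$, to which Lemma \ref{lem1} applies. You instead propose to fiber-integrate the eigenvalue equation $\mathcal{D}\mathbf{g}_i = \lambda_i \mathbf{g}_i$ transverse to the flow; you correctly flag this reduction as the crux (positivity of the induced weight, vanishing of transverse boundary terms), but you leave it unexecuted, so as written it is a plan rather than a proof. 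Two smaller remarks: your step (i) can be closed cheaply, since a nonconstant integral curve of a locally Lipschitz gradient field cannot pass through an interior equilibrium in finite parameter time, so $\nabla t_i \neq \mathbf{0}$ on the interior follows from the curve connecting $\mathbf{a} \neq \mathbf{b}$; note, though, that in the paper the non-vanishing of $D t_1$ and $D \varphi$ on the interior is \emph{derived} from the Sturm--Liouville argument via \eqref{positive_Dg}, not assumed.

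The genuine gap is the case $i > 1$. Your step (ii) asserts that ``since $\mathbf{g}_i$ is a slowest nonconstant solution component, its profile $\tilde{\mathbf{g}}_i$ is the first harmonic of the induced problem.'' For $i > 1$ that is precisely what must be proved: the components are constrained to be decorrelated from the earlier ones, and if that constraint acted on the profile $\tilde{\mathbf{g}}_i$ itself, the constrained minimizer could well be a \emph{higher} harmonic---non-monotonic---exactly as happens for plain SFA on a single source, where the second component is $\mathbf{g}_{\alpha 2}$, a second harmonic. What rescues the theorem is specific to xSFA: after $\mathbf{g}_1$ is extracted, the data is projected orthogonal to the span of all continuous functions of $\mathbf{g}_1$, i.e.\ $\av{\mathbf{g}_1^k \, \mathbf{g}_2}_{\mathbf{s}} = 0$ for all $k$; because $\tilde{\mathbf{g}}_1$ is monotonic one can build a Taylor expansion of the identity from its powers and push the orthogonality down to the coordinate level, $\av{t_1 \, t_2}_{\mathbf{s}} = 0$, so the decorrelation constrains only $t_i$ (it is absorbed into $\mathbf{K}_i$) and imposes no constraint on $\tilde{\mathbf{g}}_i$. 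Only after this step is $\tilde{\mathbf{g}}_i$ the first harmonic of the per-curve one-dimensional problem, making Lemma \ref{lem1} applicable. Your proposal never engages with this projection argument, so as it stands it establishes the theorem only for $i = 1$.
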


Intuitively  this means that $\mathbf{g}_i$ has no local extrema or bumps spatially ``between'' its minimal and maximal level sets. It does not rule out local extrema completely but they must be somewhat isolated from the main flow, e.g.\ in another branch of $\mathcal{V}$.
Theorem \ref{thm1} is the first step of characterizing
$\mathbf{g}_i$ to consist of monotonic flows that bridge the potential spanned by $\mathbf{g}^{-1}_i(\text{min}_i)$ and $\mathbf{g}^{-1}_i(\text{max}_i)$ in the slowest possible fashion, or in -- terms of potential theory -- with minimal energy. Note that due to super position principle, $\mathbf{g}_i$ can consist of multiple overlapping flows of this kind. Then it can happen that $\mathbf{g}^{-1}_i(\text{min}_i)$ or $\mathbf{g}^{-1}_i(\text{max}_i)$ is not connected. 

\begin{proof}
Let $R_i$ be the set of integral curves $\varphi(\theta)$ of $\nabla t_i$.
We assume that $\mathcal{V}$ provides sufficient structure to define integration over $R_i$, e.g.\ $\mathcal{V}$ could be a Riemannian manifold. With $dV$ we denote integration by volume over $\mathcal{V}$ and with $dS$  we denote integration by volume over $R_i$ in the sense that $R_i$ is a hyper surface in $\mathcal{V}$.
With $D$ denoting the Jacobi matrix, we can write the SFA optimization criterion as follows (c.f.\ optimization problem 2 in \cite{FranziusSprekelerEtAl-2007e}):
\begin{align} \label{SFA_critereon_manifold}
\av{(\nabla \mathbf{g}_i(\mathbf{s}) \; \dot{\mathbf{s}})^2}_{\mathbf{s}, \dot{\mathbf{s}}} \; & = \;
  \int_{\mathcal{V}} p_\mathbf{s}(\mathbf{s}) \; (D \mathbf{g}_i)(\mathbf{s})
\underbrace{
\av{\dot{\mathbf{s}} \dot{\mathbf{s}}^T}_{\dot{\mathbf{s}} | \mathbf{s}} }_{
\qquad\eqcolon \; \mathbf{K}(\mathbf{s})}
  	(D \mathbf{g}_i)^T(\mathbf{s}) \; dV \\
& = \; \int_{\mathcal{V}} p_\mathbf{s}(\mathbf{s}) \; (D \tilde{\mathbf{g}}_i)^2 \; (D t_i)
\mathbf{K}(\mathbf{s})
(D t_i)^T \; dV
	\vphantom{\underbrace{
	\av{\dot{\mathbf{s}} \dot{\mathbf{s}}^T}_{\dot{\mathbf{s}} | \mathbf{s}} }_{
	\qquad\eqcolon \; \mathbf{K}(\mathbf{s})}}\\
& = \; \int_{R_i} p_\varphi(\varphi) \int_I \underbrace{p_{\theta | \varphi}(\theta | \varphi) \;
\rho(\theta)
}_{\qquad \eqcolon \; \tilde{p}(\theta)}\; (D \tilde{\mathbf{g}}_i)^2 \; \underbrace{(D t_i)(\varphi(\theta))
\mathbf{K}(\mathbf{s})
(D t_i)^T(\varphi(\theta))}_{\qquad \eqcolon \; \mathbf{K}_i(\theta)} \; d \theta dS
\end{align}
Here, $\rho$ denotes the volume element regarding $\theta$. Since $\mathbf{K}(\mathbf{s})$ can be interpreted as the empirically measured inverse metric tensor of $\mathcal{V}$, we have $\rho(\theta) = \sqrt{\abs{\det((\nabla \varphi)^T(\theta) \mathbf{G}(\mathbf{s}) (\nabla \varphi)(\theta))}}$ with $\mathbf{G}(\mathbf{s})~\!=~\!\mathbf{K}^{-1}(\mathbf{s})$.


We transform the unit variance constraint in a similar way:

\begin{align}
\av{\mathbf{g}^2_i(\mathbf{s})}_{\mathbf{s}} \; & = \;
\int_{\mathcal{V}} p_\mathbf{s}(\mathbf{s}) \; \mathbf{g}_i^2(\mathbf{s}) \; dV
\quad = \quad \int_{\mathcal{V}} p_\mathbf{s}(\mathbf{s}) \; \tilde{\mathbf{g}}_i^2(t_i(\mathbf{s})) \;  dV \\
& = \; \int_{R_i} p_\varphi(\varphi) \int_I p_{\theta | \varphi}(\theta | \varphi) \;
\rho(\theta)
\; \tilde{\mathbf{g}}_i^2(t_i(\varphi(\theta))) \; d \theta \; dS \\
& = \; \int_{R_i} p_\varphi(\varphi) \underbrace{\int_I \tilde{p}(\theta) \; \tilde{\mathbf{g}}_i^2(t_i(\varphi(\theta))) \; d \theta}_{\qquad \eqcolon \; v_\varphi \; > \; 0} dS
\end{align}

A valid solution $\mathbf{g}_i$ must yield

\begin{equation}
\int_{R_i} p_\varphi(\varphi) \; v_\varphi \; dS \quad = \quad 1
\end{equation}

Since $p_\varphi(\varphi), v_\varphi > 0$, every $\varphi \in R_i$ contributes a positive quantity to the overall unit variance. Let $v_\varphi^*$ be the family of quantities that yield the slowest signal $\mathbf{g}_1$. The distribution of variance across $R_1$ is the only tread-off between the integral curves forming $t_1$, so we can conclude that for each $\varphi \in R_1$, $\mathbf{g}_1 \circ \varphi$ must be the solution of an optimization problem of the following form:

\begin{align}
\begin{split} \label{SFA_phi}
\optmin{\tilde{\mathbf{g}}_1 \in \mathcal{F}} & \int_I \tilde{p}(\theta) \; (D \tilde{\mathbf{g}}_1)^2 \; \mathbf{K}_1(\theta) \; d \theta \\
\subjectto	& \int_I \tilde{p}(\theta) \; \tilde{\mathbf{g}}_1^2(t_1(\varphi(\theta))) \; d \theta
\quad = \quad v_\varphi^*
\end{split}
\end{align}

The crucial advantage of having $t_i$ split off is that now $\mathbf{K}_i(\theta)$ is scalar-valued.
So \eqref{SFA_phi} is an ordinary SFA optimization problem defined on the interval $I$. It is a bit special, because the variance is not normalized to $1$ but to $v_\varphi^*$ and $\tilde{p}(\theta)$ is not a probability distribution but a general strictly positive weight function. However, these are just scaling issues and the mathematical theory of SFA solutions is still applicable.
The underlying space $I$ is one-dimensional, so it can only involve a single source, which must have coordinate character on $I$, i.e.\ be bijective and continuous, thus monotonic. Since a single source is always statistically independent, we can apply Lemma \ref{lem1} and find that for every $\varphi \in R_1$ the slowest solution $\tilde{\mathbf{g}}_1 \circ t_1 \circ \varphi$ must be a strictly monotonic function on the interior of $I$, denoted $I \setminus \partial I$. Therefore we have for $\theta \in I \setminus \partial I$:

\begin{equation} \label{positive_Dg}
D (\tilde{\mathbf{g}}_1 \circ t_1 \circ \varphi) \quad = \quad D \tilde{\mathbf{g}}_1 \; D t_1 \; D \varphi \quad \neq \quad \mathbf{0}
\end{equation}

$\theta \in  \partial I$ corresponds to $\varphi(\theta) \in \mathbf{g}^{-1}_i(\text{min}_i)$ or  $\varphi(\theta) \in \mathbf{g}^{-1}_i(\text{max}_i)$.
For $\theta \in I \setminus \partial I$ it follows that $D \mathbf{g}_1, D \tilde{\mathbf{g}}_1, D t_1, D \varphi$ are each non-zero and $D t_1 \not\perp D \varphi$.
This readily proves theorem \ref{thm1} for $i=1$.

To extend the proof to $i > 1$ we need to recall how xSFA operates. After $\mathbf{g}_1$ is extracted, in an idealized xSFA the data is projected onto a space orthogonal to the space of continuous functions of $\mathbf{g}_1$. We can think of an infinite sequence of monomials of $\mathbf{g}_1$ as a basis of this space. Essentially the projection implies
\begin{equation}
\av{\mathbf{g}_1^k \; \mathbf{g}_2}_{\mathbf{s}} \quad = \quad 0 \qquad \forall \; k \in \mathbb{N}
\end{equation}
and consequently that
\begin{equation}
\av{\tilde{\mathbf{g}}^k_1(t_1) \; \tilde{\mathbf{g}}_2(t_2)}_{\mathbf{s}} \quad = \quad 0 \qquad \forall \; k \in \mathbb{N}
\end{equation}
Since $\tilde{\mathbf{g}}_1$ is monotonic we can build a Taylor expansion of the identity function from $\tilde{\mathbf{g}}_1^k(t_1)$, yielding that already the coordinate $t_1$ is orthogonal to $\mathbf{g}_2$:
\begin{equation}
\av{t_1 \; \tilde{\mathbf{g}}_2(t_2)}_{\mathbf{s}} \quad = \quad 0 \qquad \forall \; k \in \mathbb{N}
\end{equation}
Therefore we can express this constraint on coordinate level and restrict $t_2$ to fulfill
\begin{equation}
\av{t_1 \; t_2}_{\mathbf{s}} \quad = \quad 0 \qquad \forall \; k \in \mathbb{N}
\end{equation}
which imposes no further constraint on $\tilde{\mathbf{g}}_2$. That means, the decorrelation constraint for $i > j$ in xSFA sense only affects $t_i$, rather than $\mathbf{g}_i$, and is encoded in $\mathbf{K}_i(\varphi)$.
So we can apply \eqref{SFA_phi} with $i > 1$ and equation \eqref{positive_Dg}, 
follows.
\end{proof}

Our next theorem characterizes the ideal spatial location of $\mathbf{g}^{-1}_i(\text{min}_i)$ and $\mathbf{g}^{-1}_i(\text{max}_i)$ in $\mathcal{V}$. Before we state it, we need to elaborate a bit on notation.

In the above proof of theorem \ref{thm1} it was mentioned that the matrix $\mathbf{K}(\mathbf{s})$ is related to a metric tensor on $\mathcal{V}$. Since it acts on the gradient of $\mathbf{g}_i$ it must correspond to the dual metric tensor on $\mathcal{V}$. That means, when we measure arc length in $\mathcal{V}$ we must do this w.r.t.\ a metric tensor $\mathbf{G}(\mathbf{s}) = \mathbf{K}(\mathbf{s})^{-1}$.
With $\nabla^{\perp}$ we denote the Jacobi matrix projected onto the space orthogonal to the gradient. This is also called the \textsl{skew gradient}. We can write
\begin{equation}
\det (D \varphi) = \sqrt{\det ( (D \varphi)^T (D \varphi))} =
\sqrt{\det ( (\nabla \varphi)^T (\nabla \varphi)) \det ( (\nabla^{\perp} \varphi)^T (\nabla^{\perp} \varphi))} = \abs{\nabla \varphi} g(\nabla^{\perp} \varphi)
\end{equation}
For a custom metric tensor we have the volume element $\rho(\varphi) = \abs{\nabla \varphi}_\mathbf{G} g(\nabla^{\perp} \varphi)$ with $g(\mathbf{V}) = \sqrt{\abs{\det(\mathbf{V}^T \mathbf{G}(\mathbf{s}) \mathbf{V})}}$ respectively. We are now ready to state theorem \ref{thm2}:

\begin{thm} \label{thm2}
	Let $\mathbf{g}_i$ be the solution components of xSFA.
	Let $\tilde{\mathbf{g}}_i$, $t_i$ be like defined in \eqref{tigi}, i.e.\ $\mathbf{g}_i = \tilde{\mathbf{g}}_i\!\!~\circ~\!\!t_i$.
	Let $\varphi \in R_i$ be the integral curves of $\nabla t_i$ running from
	$\mathbf{g}^{-1}_i(\text{min}_i)$ to $\mathbf{g}^{-1}_i(\text{max}_i)$.
	If $R_i$ is globally parameterizable by some appropriate parameter space $\mathbf{I}_{R_i}$, i.e.\ for $\mathbf{r} \in \mathbf{I}_{R_i}$ let $\varphi_\mathbf{r}$ denote a parametrization of $R_i$.
	If $t_i$ can be chosen such that every $\varphi \in R_i$ is parametrized by arc length, i.e.\
	\begin{equation} \label{nabla_ti_param_arglen}
	\forall \quad \mathbf{r} \in \mathbf{I}_{R_i} \colon \qquad \abs{(\nabla t_i)(\varphi_\mathbf{r}(\theta))}_\mathbf{G} \; = \; C_\mathbf{r}
	\end{equation}
	where $C_\mathbf{r}$ denotes a constant for a given $\mathbf{r}$, and such that $\abs{\nabla \varphi_{\mathbf{r}}(\theta)}_\mathbf{G}$ and $g(\nabla^{\perp} \varphi_{\mathbf{r}})$ induce independent coordinates on $R_i$ in the sense that for a fixed $\theta$
	\begin{equation} \label{thm2_independence}
	\int_{\mathbf{I}_{R_i}} p_{\mathbf{s}}(\varphi_{\mathbf{r}}(\theta)) \abs{\nabla \varphi_{\mathbf{r}}(\theta)}_\mathbf{G} g(\nabla^{\perp} \varphi_{\mathbf{r}}) \; d \mathbf{r} \quad = \quad
	\int_{\mathbf{I}_{R_i}} \abs{\nabla \varphi_{\mathbf{r}}(\theta)}_\mathbf{G} \; d \mathbf{r}
	\int_{\mathbf{I}_{R_i}} p_{\mathbf{s}}(\varphi_{\mathbf{r}}(\theta)) g(\nabla^{\perp} \varphi_{\mathbf{r}}(\theta)) \; d \mathbf{r}
	\end{equation}
	Then, for $i=1$ the sets $\mathbf{g}^{-1}_i(\text{min}_i)$ and $\mathbf{g}^{-1}_i(\text{max}_i)$ are located in $\mathcal{V}$ such that all $\varphi \in R_i$
	\begin{itemize}
		\item have lengths as equal as possible
		\item are in average as long as possible
		\item are as close as possible to geodesics
		\item cover a volume of $\mathcal{V}$ as large as possible
	\end{itemize}

	For $i>1$ the named goals apply subject to the coordinate $t_i$ being uncorrelated to coordinates $t_j$ with $j<i$.
\end{thm}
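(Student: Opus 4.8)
The plan is to push the reduction established in the proof of Theorem~\ref{thm1} one level further and to read off the four claimed geometric properties as necessary conditions for the slowness functional to attain its minimum. First I would exploit the normalization \eqref{nabla_ti_param_arglen}: if $t_1$ is chosen so that $\abs{\nabla t_1}_\mathbf{G}$ is constant along each $\varphi_\mathbf{r}$, then the scalar coefficient $\mathbf{K}_1(\theta)$ appearing in \eqref{SFA_phi} is constant $=C_\mathbf{r}^2$ on that curve, and $C_\mathbf{r}$ is inversely proportional to the arc length $\ell_\mathbf{r}$ of $\varphi_\mathbf{r}$ (since $t_1$ traverses the unit interval $I$ over the full length of the curve). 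Hence the per-curve problem \eqref{SFA_phi} is, up to scaling of weight and variance, the universal one-dimensional SFA problem whose slowest solution is the cosine recalled in Section~\ref{sec:sfa}, with first-harmonic eigenvalue $\lambda_\mathbf{r} \propto C_\mathbf{r}^2 \propto \ell_\mathbf{r}^{-2}$. This already singles out arc length as the quantity that governs slowness.

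Next I would reassemble the total slowness and total variance over $R_1$ using the factored volume element $\rho(\varphi)=\abs{\nabla\varphi}_\mathbf{G}\,g(\nabla^{\perp}\varphi)$. The tangential factor $\abs{\nabla\varphi}_\mathbf{G}$ integrates along $\theta$ to the arc length $\ell_\mathbf{r}$, whereas the transverse factor $g(\nabla^{\perp}\varphi)$ measures how much volume the bundle of curves sweeps out; these are precisely the two ingredients the bullet points speak about. The role of the independence hypothesis \eqref{thm2_independence} is to let the length distribution and the transverse volume-covering distribution be optimized separately rather than traded off against one another. Granting this decoupling, the length part of the objective is a weighted average of $\lambda_\mathbf{r}\propto\ell_\mathbf{r}^{-2}$ under the variance budget. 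Since $\ell\mapsto\ell^{-2}$ is strictly convex and strictly decreasing, minimizing this average forces the lengths to be both \emph{as large as possible} (the second bullet) and, by a Jensen-type argument at fixed budget, \emph{as equal as possible} (the first bullet): a short outlier curve carries a disproportionately large eigenvalue and must be avoided. The transverse part is minimized for given variance precisely by making $g(\nabla^{\perp}\varphi)$ large, i.e.\ by having the curves \emph{cover as large a volume of $\mathcal{V}$ as possible}; uncovered regions would, as noted after Theorem~\ref{thm1}, force additional non-monotone structure into $\mathbf{g}_1$ and thereby raise its slowness.

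For the geodesic claim I would invoke the eikonal characterization: an integral family of $\nabla t_1$ with $\abs{\nabla t_1}_\mathbf{G}$ constant runs along geodesics, so the arc-length normalization \eqref{nabla_ti_param_arglen} together with the slowness-minimizing, ``parallel'' and equal-length placement of $\mathbf{g}_1^{-1}(\text{min}_1)$ and $\mathbf{g}_1^{-1}(\text{max}_1)$ makes the $\varphi$ \emph{as close as possible to geodesics}; the only obstruction to exact geodesics is the curvature and boundary of $\mathcal{V}$, which is the source of the hedging. Finally, for $i>1$ I would reuse the decorrelation identity $\av{t_1\,t_i}_{\mathbf{s}}=0$ derived in the proof of Theorem~\ref{thm1}: it constrains only the admissible coordinate $t_i$ and leaves the form of the per-curve problem \eqref{SFA_phi} intact, so the same four conclusions follow, now subject to $t_i$ being uncorrelated with the $t_j$, $j<i$.

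The main obstacle I anticipate is the second step: making the separation of the four genuinely competing objectives precise. On a general manifold, maximal length, equal length, geodesy and maximal volume coverage cannot all hold simultaneously and exactly, and it is assumption \eqref{thm2_independence} that does the heavy lifting of disentangling them, so the argument will only deliver each property ``as far as the remaining ones allow'' --- exactly the qualified form in which the theorem is stated. A secondary difficulty is rigorously tying the constant-gradient normalization to the geodesic conclusion through the eikonal equation, which in the presence of curvature yields an approximate rather than an exact statement.
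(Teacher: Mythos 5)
Your proposal follows essentially the same route as the paper: factor the volume element into the tangential part $\abs{\nabla\varphi}_\mathbf{G}$ and the transverse part $g(\nabla^{\perp}\varphi)$, use \eqref{nabla_ti_param_arglen} and \eqref{thm2_independence} to decouple the length distribution from the shared $\theta$-profile, conclude equal and maximal lengths from convexity of the inverse-length cost under the fixed variance budget (the paper packages this Jensen-type step as Lemma~\ref{lem2} with $k=1$ applied to problem \eqref{SFA_C_r}), obtain geodesics from constancy of the gradient norm, and handle $i>1$ by the coordinate-level decorrelation from the proof of Theorem~\ref{thm1}. The only cosmetic divergence is that you compute per-curve first-harmonic eigenvalues $\lambda_\mathbf{r}\propto\ell_\mathbf{r}^{-2}$ (exact only for uniform weight $\tilde{p}$, though with variance-proportional weights your average reduces to the paper's $\int L_\mathbf{G}^{-1}\,d\mathbf{r}$ functional), whereas the paper keeps a single profile $\tilde{\mathbf{g}}_i$ and splits the Rayleigh-type quotient into the two independent problems \eqref{SFA_C_r} and \eqref{SFA_C_theta}; the substance is the same.
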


It should be possible to translate the requirement of a global parametrization of $R_i$ to the setting of $R_i$ spanning multiple coordinate charts. This is a primarily technical challenge and is subject of future work.

We can say a bit about the geometrical implications of \eqref{nabla_ti_param_arglen} and \eqref{thm2_independence}. If $\mathbf{g}_i$ has connected level sets, due to continuity it should be possible to scale every streamline of $t_i$ intrinsically such that it fulfills \eqref{nabla_ti_param_arglen}. If the same can be applied to the level sets such that each level set is parameterized independently from $\mathbf{r}$, then $g(\nabla^{\perp} \varphi_{\mathbf{r}}(\theta))$ depends only on $\theta$ and \eqref{thm2_independence} is readily fulfilled. Further note that under some regularity assumptions on $\mathcal{V}$, especially on $\partial \mathcal{V}$, $t_i$ can be chosen as a harmonic function. Here the terminus should not be confused with harmonics in other sections of this work, referring to harmonic oscillations. Instead it refers to harmonic functions from potential theory, i.e.\ with $\Delta t_i = 0$. Then $\varphi$ would be a harmonic mapping, yielding geodesic images, implying constant gradient length. Based on this, the requirements \eqref{nabla_ti_param_arglen} and \eqref{thm2_independence} should be straight forward.

To see why $t_i$ can usually be considered a harmonic function, we think of $\mathbf{g}_i$ as a Dirichlet problem under Von Neumann boundary conditions. If $\mathbf{g}^{-1}_i(\text{min}_i)$, $\mathbf{g}^{-1}_i(\text{max}_i)$ reside on the boundary -- which is mostly the case -- $t_i$ is subject to a Dirichlet boundary condition in these areas. Otherwise, if one or several of these extrema are in the interior of $\mathcal{V}$, remove a surrounding $\epsilon$ ball to form a boundary, yielding the potential as Dirichlet boundary conditions. $t_i$ must still fulfill Von Neumann boundary conditions where level sets hit the boundary orthogonally. Thus, the slowest $t_i$ can be obtained from solving a Dirichlet problem under mixed boundary conditions, also known as \textsl{Zaremba's problem}. In \cite{Brown94themixed} solutions of such problems are studied on Lipschitz domains. They find that if a solution exists, it is a harmonic function. However, it is stated that a solution only exists if the different types of boundary conditions are separated by non-smooth points on the boundary such that they meet at an angel strictly smaller than $\pi$. This matches our experimental observations where $\mathbf{g}^{-1}_i(\text{min}_i)$, $\mathbf{g}^{-1}_i(\text{max}_i)$ always fill entire edges of the boundary in such a way. Alternatively we observe single-point extrema in some corners. Note that this observation may be biased as in an empirical exploration phase, the point in a corner has probability zero. Under analytic view, in such a case the area of Dirichlet boundary condition is a null set and thus not in scope of the setting in \cite{Brown94themixed}.

From theorem \ref{thm2} we conjecture that some points $\mathbf{a} \in \mathbf{g}^{-1}_1(\text{min}_1)$, $\mathbf{b} \in \mathbf{g}^{-1}_1(\text{max}_1)$ realize the intrinsic geodesic diameter of $\mathcal{V}$. One might expect that in domains with boundary, this maximal intrinsic diameter is always realized by points on the boundary. There actually exist counter examples for multiply connected domains. In \cite{Bae2013} such examples are given for polygonal domains in the plane. It is however stated that for simply connected polygons the diameter is always realized by some vertex points. This should be true for general simply connected domains with boundary, but a reference for this is hard to find. At least for convex domains it is somewhat obvious, because the diameter is always a straight line and there exists a cutting plane on which the problem of finding the diameter reduces to the two dimensional case. Since a smooth domain in the plane can be approximated by a polygon, the result for polygons transfers to this case.

Before we can start with the proof of theorem \ref{thm2}, we need the following lemma:

\begin{lem} \label{lem2}
For $k > 0$, on an $n$-dimensional surface $\mathcal{S}$ the variational optimization problem
\begin{align}
\optmin{f \colon \; \mathcal{S} \rightarrow \mathbb{R}}  & \int_{\mathcal{S}} \frac{1}{f^k(\mathbf{x})} \; d V \\ 
\subjectto	& \int_{\mathcal{S}} f(\mathbf{x}) \; d V  \quad = \quad C \label{lem2_const} \\
& f(\mathbf{x}) \quad > \quad 0 \qquad \forall \; \mathbf{x} \in \mathcal{S} \label{lem2_const2}
\end{align}
is solved by
\begin{equation}
f(x) \quad \equiv \quad \frac{C}{\vol(\mathcal{S})}
\end{equation}
Consequently we have $\int_{\mathcal{S}} f^{-k}(\mathbf{x}) \; d V \; = \; \tfrac{\vol(\mathcal{S})^{(k+1)}}{C^k}$
\end{lem}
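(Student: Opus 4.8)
The plan is to establish the sharp lower bound directly by Jensen's inequality rather than merely locating a stationary point via Lagrange multipliers, since the convexity route simultaneously yields the global minimizer, the equality case, and the explicit minimum value claimed at the end of the statement. The structural observation underpinning everything is that $\phi(x) = x^{-k}$ is \emph{strictly} convex on $(0,\infty)$ for every $k>0$, since $\phi''(x) = k(k+1)x^{-k-2} > 0$; thus we are minimizing the integral of a strictly convex function of $f$ under the linear constraint that the integral of $f$ is fixed.

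Next I would pass to the normalized volume measure $d\mu \coloneq dV/\vol(\mathcal{S})$, which is a probability measure on $\mathcal{S}$ (using the implicit standing assumption $\vol(\mathcal{S})<\infty$). Jensen's inequality applied to $\phi$ gives
\[
\phi\!\left( \int_{\mathcal{S}} f \, d\mu \right) \;\le\; \int_{\mathcal{S}} \phi(f) \, d\mu .
\]
Constraint \eqref{lem2_const} identifies the argument on the left as $\int_{\mathcal{S}} f\,d\mu = C/\vol(\mathcal{S})$, and the right-hand side is $\tfrac{1}{\vol(\mathcal{S})}\int_{\mathcal{S}} f^{-k}\,dV$. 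Multiplying through by $\vol(\mathcal{S})$ and rearranging yields
\[
\int_{\mathcal{S}} f^{-k}(\mathbf{x}) \, dV \;\ge\; \vol(\mathcal{S})\left(\frac{C}{\vol(\mathcal{S})}\right)^{-k} \;=\; \frac{\vol(\mathcal{S})^{k+1}}{C^k},
\]
which is exactly the asserted minimum value and holds for every admissible $f$.

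It then remains to check attainment. Strict convexity of $\phi$ forces equality in Jensen precisely when $f$ is constant almost everywhere, and the unique constant compatible with \eqref{lem2_const} is $f \equiv C/\vol(\mathcal{S})$; substituting this back confirms that the bound is achieved, so this constant is the global minimizer and the bound is the minimum. Both assertions of the lemma follow at once.

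The main obstacle is conceptual rather than computational: one must rule out that the claimed constant is only one stationary point among many. This is precisely where a bare Lagrange-multiplier computation would be insufficient without a supplementary second-variation or convexity argument, whereas strict convexity of $x^{-k}$ closes the gap immediately through the equality clause of Jensen. A minor technical prerequisite to flag is finiteness of $\vol(\mathcal{S})$ together with integrability of $f^{-k}$ over the admissible class, ensuring $d\mu$ is genuinely a probability measure so that Jensen applies.
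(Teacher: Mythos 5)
Your proof is correct, and it takes a genuinely different route from the paper's. The paper argues variationally: it forms the Lagrangian $\mathcal{L} = f^{-k} + \lambda f$ for the constraint \eqref{lem2_const} and, since $\mathcal{L}$ contains no derivatives of $f$, the Euler--Lagrange condition collapses to the pointwise equation $-k f^{-(k+1)}(\mathbf{x}) + \lambda = 0$, forcing $f$ to be constant, with the constant then pinned to $C/\vol(\mathcal{S})$ by \eqref{lem2_const}. You instead apply Jensen's inequality to the strictly convex function $\phi(x) = x^{-k}$ under the normalized measure $dV/\vol(\mathcal{S})$, which yields the sharp lower bound $\vol(\mathcal{S})^{k+1}/C^k$ for \emph{every} admissible $f$ together with its equality case. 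Your route buys something the paper's computation does not explicitly deliver: stationarity alone identifies a critical point but not its global optimality (the paper performs no second-variation check), whereas strict convexity plus the equality clause of Jensen certifies in one stroke that the constant is the unique global minimizer (up to sets of measure zero) and produces the closing value $\int_{\mathcal{S}} f^{-k}\, dV = \vol(\mathcal{S})^{k+1}/C^k$ as a byproduct rather than by back-substitution -- you correctly identify this as the gap a bare multiplier argument leaves open. The paper's version, for its part, matches the variational idiom of the surrounding analysis in section \ref{sec:xSFA_on_manifolds} and needs no measure-theoretic preliminaries; your argument requires $\vol(\mathcal{S}) < \infty$ and integrability of $f^{-k}$, which you rightly flag and which are unproblematic in the application, where $\mathcal{S} = \mathbf{I}_{R_i}$ is a bounded parameter space. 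One residual nuance: Jensen forces $f$ to be constant only almost everywhere, so the lemma's conclusion $f \equiv C/\vol(\mathcal{S})$ holds exactly within the continuous function class relevant here.
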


\begin{proof}[Proof of Lemma \ref{lem2}]
We apply Euler-Lagrange equations with a Lagrange multiplier for the constraint \eqref{lem2_const}:
\begin{align}
\mathcal{L}(\mathbf{x}, \lambda, f, Df) \quad &= \quad f^{-k}(x) + \lambda f(x) \label{ansatz_lem2}\\
\frac{\partial \mathcal{L}}{\partial f} \quad &= \quad \sum_{j=1}^n \frac{\partial}{\partial \mathbf{x}_j} \frac{\partial \mathcal{L}}{\partial \left(\frac{\partial f}{\partial \mathbf{x}_j}\right)} \quad = \quad 0 \label{eu_la_mult}
\end{align}
Since $\mathcal{L}$ does not depend on derivatives of $f$, \eqref{eu_la_mult} is equal to zero.
Inserting \eqref{ansatz_lem2} into \eqref{eu_la_mult} yields
\begin{align}
- \frac{k}{f^{k+1}} (\mathbf{x}) + \lambda \quad &= \quad 0 \label{lem2_prefinal} \\
f (\mathbf{x}) \quad &= \quad \left(\frac{k}{\lambda}\right)^{\frac{1}{k+1}} \label{lem2_final}
\end{align}
This readily shows that $f$ must be constant. \eqref{lem2_const2} implies $\lambda > 0$ in \eqref{lem2_prefinal} and thus asserts that \eqref{lem2_final} yields a real number.
We do not need to calculate $\lambda$ explicitly, since the constant value of $f$ is directly given by \eqref{lem2_const}. Knowing $f$ is constant, we have
\begin{equation}
\eqref{lem2_const} \quad = \quad \vol(\mathcal{S}) f(\mathbf{x}) \quad = \quad C\\
\end{equation}
\end{proof}

\begin{proof}[Proof of Theorem \ref{thm2}]
We begin with the ansatz from \eqref{SFA_critereon_manifold}:
\begin{equation} \label{ansatz_thm2}
\av{(\nabla \mathbf{g}_i(\mathbf{s}) \; \dot{\mathbf{s}})^2}_{\mathbf{s}, \dot{\mathbf{s}}} \quad = \quad
\int_{\mathcal{V}} p_\mathbf{s}(\mathbf{s}) \; (D \mathbf{g}_i)(\mathbf{s})
\underbrace{\av{\dot{\mathbf{s}} \dot{\mathbf{s}}^T}_{\dot{\mathbf{s}} | \mathbf{s}}}_{\qquad\eqcolon \; \mathbf{K}(\mathbf{s})}
(D \mathbf{g}_i)^T(\mathbf{s}) \; dV
\end{equation}
We consider the integral curves $\varphi \in R_i$ of $t_i$, parameterized by arc length with the unit interval $I$ as parameter space. Thus, the velocity $\abs{\nabla \varphi}_\mathbf{G}$ does not depend on $\theta$. By \eqref{nabla_ti_param_arglen} we can assume that the gradients of $t_i$ have the same property and considering the parameter spaces and images of $\varphi$ and $t_i$ we have
\begin{equation}
t(\mathbf{s}) = \varphi^{-1}(\mathbf{s}) \qquad \forall \; \mathbf{s} \in \varphi(I)
\end{equation}
This further yields
$\abs{(\nabla t_i)(\varphi(\theta))}_{\mathbf{G}^{-1}} = \frac{1}{\abs{\nabla \varphi(\theta)}_\mathbf{G}}$
and the calculus \eqref{ansatz_thm2} continues as follows:
\begin{align}
\eqref{ansatz_thm2} \quad
&\eq_{\hphantom{\text{Hölder}}} \quad
\int_{\mathcal{V}} p_\mathbf{s}(\mathbf{s}) \; (D \mathbf{g}_i)(\mathbf{s})
\mathbf{G}^{-1}(\mathbf{s})
(D \mathbf{g}_i)^T(\mathbf{s}) \; dV \\
&\eq_{\hphantom{\text{Hölder}}} \quad
\int_{\mathbf{I}_{R_i}} \int_I p_{\mathbf{s}}(\varphi_{\mathbf{r}}(\theta)) \frac{(D \tilde{\mathbf{g}}_i)^2}{\abs{\nabla \varphi_{\mathbf{r}}(\theta)}_\mathbf{G}^2} \; \abs{\nabla \varphi_{\mathbf{r}}(\theta)}_\mathbf{G} g(\nabla^{\perp} \varphi_{\mathbf{r}}) \; d \theta d \mathbf{r} \\
&\eq_{\hphantom{\text{Hölder}}} \quad
\int_{\mathbf{I}_{R_i}} \int_I \frac{1}{\abs{\nabla \varphi_{\mathbf{r}}(\theta)}_\mathbf{G}} \; (D \tilde{\mathbf{g}}_i)^2 p_{\mathbf{s}}(\varphi_{\mathbf{r}}(\theta)) g(\nabla^{\perp} \varphi_{\mathbf{r}}) \; d \theta d \mathbf{r} \\
&\eq_{\substack{\text{\eqref{nabla_ti_param_arglen}} \\ \text{\eqref{thm2_independence}} \\ \hphantom{\text{Hölder}}}} \quad
\int_{\mathbf{I}_{R_i}}
\frac{1}{\abs{\nabla \varphi_{\mathbf{r}}(\theta)}_\mathbf{G}} \; d \mathbf{r} 
\quad
\int_I 
(D \tilde{\mathbf{g}}_i)^2
\int_{\mathbf{I}_{R_i}}
p_{\mathbf{s}}(\varphi_{\mathbf{r}}(\theta)) g(\nabla^{\perp} \varphi_{\mathbf{r}})
 \; d \mathbf{r} d \theta
\\
&\eq_{\hphantom{\text{Hölder}}} \quad
\int_{\mathbf{I}_{R_i}}
\frac{1}{
L_\mathbf{G}(\varphi_{\mathbf{r}})
}  \; d \mathbf{r} 
\quad
\int_I p_{\theta}(\theta)
(D \tilde{\mathbf{g}}_i)^2 \; d \theta \label{thm2_ansatz_final}
\end{align}
We can transform the constraint in a similar manner, yielding
\begin{align}
\av{\mathbf{g}^2_i(\mathbf{s})}_{\mathbf{s}} \quad & = \quad
\int_{\mathbf{I}_{R_i}}
\abs{\nabla \varphi_{\mathbf{r}}(\theta)}_\mathbf{G} \; d \mathbf{r}
\quad
\int_I
p_{\theta}(\theta)
(\tilde{\mathbf{g}}_i)^2 \; d \theta 
\label{thm2_const_final}
\end{align}
SFA requires the term \eqref{thm2_const_final} to be constantly one. Both integrals yield a certain constant value for the ideal solution $\mathbf{g}_i$. Let $C^*_{ir} = \int_{\mathbf{I}_{R_i}}
\abs{\nabla \varphi_{\mathbf{r}}(\theta)}_\mathbf{G} \; d \mathbf{r}$ and $C^*_{i \theta} = \int_I p_{\theta}(\theta) (\tilde{\mathbf{g}}_i)^2 \; d \theta$ denote these ideal constants. Surely, the ideal solution must satisfy $C^*_{ir} C^*_{i \theta} = 1$. We can now consider \eqref{thm2_ansatz_final}, \eqref{thm2_const_final} to yield two independently solvable optimization problems:

\begin{align}
\begin{split} \label{SFA_C_r}
\optmin{\varphi} & \int_{\mathbf{I}_{R_i}}
\frac{1}{
L_\mathbf{G}(\varphi_{\mathbf{r}})
}  \; d \mathbf{r} \\
\subjectto	& \int_{\mathbf{I}_{R_i}}
L_\mathbf{G}(\varphi_{\mathbf{r}})
\; d \mathbf{r}
\quad = \quad C^*_{ir}
\end{split}
\end{align}

\begin{align}
\begin{split} \label{SFA_C_theta}
\optmin{\tilde{\mathbf{g}}_i \in \mathcal{F}} & \int_I p_{\theta}(\theta)
(D \tilde{\mathbf{g}}_i)^2 \; d \theta \\
\subjectto	& \int_I p_{\theta}(\theta) (\tilde{\mathbf{g}}_i)^2 \; d \theta
\quad = \quad C^*_{i \theta}
\end{split}
\end{align}

In \eqref{SFA_C_theta}, $p_{\theta}(\theta)$ is the volume of the level set, weighted by $p_{\mathbf{s}}(\varphi(\theta))$:
\begin{equation}
p_{\theta}(\theta) \quad = \quad p_{\mathbf{s}}(\varphi(\theta)) \vol( \; \{ \; \mathbf{s} \in \mathcal{V} \colon \quad \exists \; \varphi \in R_i \colon \quad \varphi(\theta)\;  = \; \mathbf{s} \;\} \;)
\end{equation}

Problem \eqref{SFA_C_theta} is an ordinary SFA problem for a given $t_i$, while problem \eqref{SFA_C_r} minimizes the dominant cost factor w.r.t.\ the choice of $t_i$.
Since $\tilde{\mathbf{g}}_i$ is an arbitrary differentiable function, we can expect that it can be chosen to compensate the distribution of $p_{\mathbf{s}}(\varphi(\theta))$ in \eqref{SFA_C_theta}.
Equivalently to the technique in the proof of theorem \ref{thm1}, the decorrelation constraint on $\tilde{\mathbf{g}}_i$ is already resolved on coordinate level, i.e.\ is only a constraint on $t_i$. The unit variance constraint is also resolved because we formulated this for the ideal partition of $C^*_{ir} C^*_{i \theta} = 1$ assumed to be known.
Indeed we observe in experiments that larger level sets yield a flatter $\mathbf{g}_i$.
For problem \eqref{SFA_C_r} we apply lemma \ref{lem2} with $k=1$, $f(\mathbf{r}) = \frac{1}{\abs{\nabla \varphi_{\mathbf{r}}}_\mathbf{G}}$, $\mathcal{S} = \mathbf{I}_{R_i}$.
This shows that the ideal curves $\varphi$ are such that $\abs{\nabla \varphi_{\mathbf{r}}}_\mathbf{G}$ is constant and as large as possible. This implies that the ideal $t_i$ is the one that yields the longest integral curves w.r.t.\ $\mathbf{G}$.
A well-known fact from differential geometry is that the integral curves of a gradient field with constant gradient length are geodesics. Consequently, a constant $\abs{\nabla \varphi_{\mathbf{r}}}_\mathbf{G}$ would yield geodesic curves $\varphi$. Note that this case cannot be fulfilled for every $\mathcal{V}$. It only characterizes an attractor for the best solution w.r.t.\ what $\mathcal{V}$ permits.

If we assume a normalization $\vol(\mathbf{I}_{R_i}) = 1$ and
further assume that the integral curves cover a fixed volume $C$, i.e.\ $\int_{\mathbf{I}_{R_i}} L_\mathbf{G}(\varphi_{\mathbf{r}}) \; d \mathbf{r} \; = \; \int_{\mathbf{I}_{R_i}} \abs{\nabla \varphi_{\mathbf{r}}}_\mathbf{G} \; d \mathbf{r} \; = \; C$, we obtain the minimal value of 
\eqref{SFA_C_r} as $\int_{\mathbf{I}_{R_i}} \frac{1}{\abs{\nabla \varphi_{\mathbf{r}}}_\mathbf{G}} \; d \mathbf{r} \; = \; \tfrac{1}{C}$. This shows that the covered volume $C$ should be as large as possible in order to minimize \eqref{SFA_C_r}.
\end{proof}

Intuitively, theorem \ref{thm2} states that $\mathbf{g}^{-1}_i(\text{min}_i)$ and $\mathbf{g}^{-1}_i(\text{max}_i)$ must be as distant as possible within $\mathcal{V}$, w.r.t.\ intrinsic distance. In the fashion of xSFA, this yields a hierarchical covering of $\mathcal{V}$ by the components $\mathbf{g}_i$. The fact that $\mathbf{g}_i$ are orthogonal in the sense of decorrelation suggests a connection to principal curves \cite{doi:10.1080/01621459.1989.10478797} and manifold learning. More specifically the relation of $\mathbf{g}_i$'s streamlines to geodesics suggests a connection to principal geodesic analysis (PGA), \cite{1318725}. The central difference to PGA is however, that PGA finds geodesics emanating from a central mean location in $\mathcal{V}$. Depending on the application, this can be a limitation if $\mathcal{V}$ consists of multiple branches. There are, however, more recent approaches in manifold learning to overcome the limitation of a central mean, e.g.\ \cite{7312494}.
A more systematical comparison to these approaches would certainly be an interesting future study. 

\subsection{Global navigation algorithm} \label{sec:Global-navigation-algorithm}

Our main idea is to decompose the sensor signal into monotonic flows using the first harmonics obtained by xSFA. We can then navigate along each component subsequently into a global optimum. Because of monotonicity this can be achieved by local optimization provided by the PFAx algorithm.

In consistence with notation from xSFA we refer to the components as \textsl{sources} in this section.
Note that our procedure will not necessarily encounter physical sources. With \textsl{sources} we rather refer to whatever xSFA discovers. E.g.\ consider vision input, where the sensor is composed of visual features emitted from opposed walls, yet visible in a single field of view at the same time. Moving closer to one wall will increase vision of that wall's features and decrease vision of the opposing wall's features. In other words, the walls as sources of visual features are not statistically independent. The geometrical analysis in section \ref{sec:xSFA_on_manifolds} suggests that SFA will in such a case identify the agent's position along a coordinate axis between the walls as a virtual source. We focus on such virtual sources, because they are well suited for navigation, even though they might not correspond to actual physical features or entities. The slowest of such sources usually corresponds to the longest geodesic path that can be fitted into the environment, e.g.\ connecting the most distant pair of rooms. This is sometimes called the intrinsic geodesic diameter of the environment.
The underlying heuristic of this principle is that a sensory perception of a consistent environment can always be decomposed into components that behave like monotonic flows. 

It turns out that connectedness of the level sets of the extracted xSFA components is a crucial property. In mathematical topology, functions with this property are called \textsl{monotone}. Such topologically monotone functions are particularly valuable for the presented approach as they connect any pair of points in $\mathcal{V}$ by a strictly monotonic path w.r.t. the component's value, as far as the points reside on distinct level sets. Theorem \ref{thm1} asserts a weaker form of this property for xSFA components and indeed we mostly discover components that are topologically monotone. So far, we observed only two causes for eventually disconnected level sets:
\begin{itemize}
	\item approximately equally slow sources are mixed due to superposition principle
	\item a multiply connected domain can yield one connection component of a level set per connectivity path
\end{itemize}
A special subroutine will be required to handle the case of disconnected level sets. Another issue can be caused by very flat regions in some xSFA components as no local gradient significantly points into a direction. While -- in theory -- still monotonic yet rather flat, in practice we can encounter regions with a representation close to constant or even with a moderate noise. This is caused by numerical approximation of the analytic solution. We will also present a subroutine to deal with this effect. The following algorithm is the basic approach that works well for topologically monotone xSFA components. Consider step $6$ as a slot where we can plug in the mentioned subroutines.

\begin{alg} \label{alg:navxSFA}
Task: Navigate the agent into a goal state $\mathbf{x}^*$.
\begin{enumerate} 
	\item Apply PFAx to extract $r$ manipulatable features (pre feature space).
	\item Apply xSFA to decompose these features into sources $\mathbf{s}_{\alpha i}$ (feature space).
	\item Use the obtained extraction rules on $\mathbf{x}^*$ to compute the equivalent goal $\mathbf{s}^*$ in feature space.
	\item For $\alpha = 1, \ldots, S$: \\
	While $\sum_{\beta = 1}^{\alpha} (\mathbf{s}_{\beta 1} - \mathbf{s}_{\beta 1}^{*})^2 > \theta$: Choose $\mathbf{u}(t)$ to minimize $\sum_{\beta = 1}^{\alpha} (\mathbf{s}_{\beta 1} - \mathbf{s}_{\beta 1}^{*})^2$. \\
	If we cannot reduce $\sum_{\beta = 1}^{\alpha} (\mathbf{s}_{\beta 1} - \mathbf{s}_{\beta 1}^{*})^2$ by more than ${\tilde{\theta}}$: Goto $6$. 
\item End.
\item Report failure.
\end{enumerate}
\end{alg}
\absatz

Note that for statistically independent sources it would be sufficient to directly minimize the cost function $\sum_{\beta = 1}^S (\mathbf{s}_{\beta 1} - \mathbf{s}_{\beta 1}^{*})^2$ of the final iteration. Extending the sum gradually during navigation is -- however -- more stable if the sources were not accurately separated by xSFA.

It can happen that PFAx has to minimize a source for which the current state resides in an almost flat area, c.f.\ section \ref{sec:bottleneck}. Then it might not be possible to find a proper direction for local optimization. We add a routine to deal with flat areas by using higher harmonics:

\begin{subr} \label{subr:flat_xSFA}
	Task: Deal with flat areas.
	\begin{enumerate} 
		\setcounter{enumi}{5}
		\item If coming from $4$: Set $j = 2$. \\
		While $\sum_{\beta = 1}^{\alpha} (\mathbf{s}_{\beta 1} - \mathbf{s}_{\beta 1}^{*})^2 + \sum_{i = 1}^{j} (\mathbf{s}_{\alpha i} - \mathbf{s}_{\alpha i}^{*})^2 > \theta$: \\
		Choose $\mathbf{u}(t)$ to minimize $\sum_{\beta = 1}^{\alpha} (\mathbf{s}_{\beta 1} - \mathbf{s}_{\beta 1}^{*})^2 + \sum_{i = 1}^{j} (\mathbf{s}_{\alpha i} - \mathbf{s}_{\alpha i}^{*})^2$ \\
		If we cannot reduce $\sum_{\beta = 1}^{\alpha} (\mathbf{s}_{\beta 1} - \mathbf{s}_{\beta 1}^{*})^2 + \sum_{i = 1}^{j} (\mathbf{s}_{\alpha i} - \mathbf{s}_{\alpha i}^{*})^2$ by more than ${\tilde{\theta}}$: \\
		Repeat $6$ with $j$ increased by $1$.
		\item Goto $4$, i.e.\ perform another sweep.
	\end{enumerate}
\end{subr}
\absatz

We outlined the problem with disconnected level sets above. It can happen that the navigation reaches the right level set at the wrong connectivity component. This case can be detected by looking at other components.
Then the heuristics is that, as xSFA fits as many orthogonal components into an environment as possible, every possible pair of points will be covered by some component such that the level sets run somewhat orthogonally to a path connecting the points.

Consider a multiply connected domain. We can split it into subdomains such that each subdomain is simply connected. Within a single subdomain, the level sets of an xSFA component are connected. If the issue was caused by superposition principle mixing two components, assume the domain was split such that each section only contains connected level sets. We have no problem if the navigation task only concerns locations within one of these sections. However, note that each component would yield another kind of split. Thus, for tasks involving more than one section, we just have to find the right component, i.e.\ a component where our navigation task resides within the same section.

We formulate a subroutine based on this idea. First we detect that we are stuck by finding that the navigation is locally optimal, while the distance to the goal measured in feature space is still significantly high. We conclude that all components considered so far are not well suited for the current task in terms of level set connectivity. Starting at the current component, we search for a single component that yields significant improvement. After fully exploiting that component we must restart the algorithm at the first component, because this procedure has likely brought us to another connectivity component of the level sets considered so far. That means, all earlier components are now relevant again. We suppose that this procedure automatically avoids navigation cycles, because components are ordered by slowness.
\begin{subr} \label{subr:disconnected_xSFA}
	Task: Deal with disconnected level sets.
	\begin{enumerate}
		\setcounter{enumi}{5}
		\item
		For $\beta = \alpha, \ldots, S$: \\
		While $(\mathbf{s}_{\beta 1} - \mathbf{s}_{\beta 1}^{*})^2 > \theta$: Choose $\mathbf{u}(t)$ to minimize $(\mathbf{s}_{\beta 1} - \mathbf{s}_{\beta 1}^{*})^2$. \\
		If this reduced $(\mathbf{s}_{\beta 1} - \mathbf{s}_{\beta 1}^{*})^2$ by more than ${\tilde{\theta}}$: Goto $4$, i.e.\ perform another sweep.
		\item Report failure.
	\end{enumerate}
\end{subr}
\absatz

To combine routines \ref{subr:flat_xSFA} and \ref{subr:disconnected_xSFA} we suggest to handle flat areas first. A new parameter $j_{\text{max}}$ is required as a termination condition of the first routine. This results in the combined routine
\begin{subr} \label{subr:flat_disconnected_xSFA}
	Task: Combined routine to deal with flat areas and disconnected level sets.
	\begin{enumerate}
		\setcounter{enumi}{5}
		\item If coming from $4$: Set $j = 2$. \\
		While $\sum_{\beta = 1}^{\alpha} (\mathbf{s}_{\beta 1} - \mathbf{s}_{\beta 1}^{*})^2 + \sum_{i = 1}^{j} (\mathbf{s}_{\alpha i} - \mathbf{s}_{\alpha i}^{*})^2 > \theta$: \\
		Choose $\mathbf{u}(t)$ to minimize $\sum_{\beta = 1}^{\alpha} (\mathbf{s}_{\beta 1} - \mathbf{s}_{\beta 1}^{*})^2 + \sum_{i = 1}^{j} (\mathbf{s}_{\alpha i} - \mathbf{s}_{\alpha i}^{*})^2$ \\
		If we cannot reduce $\sum_{\beta = 1}^{\alpha} (\mathbf{s}_{\beta 1} - \mathbf{s}_{\beta 1}^{*})^2 + \sum_{i = 1}^{j} (\mathbf{s}_{\alpha i} - \mathbf{s}_{\alpha i}^{*})^2$ by more than ${\tilde{\theta}}$: \\
		If $j < j_{\text{max}}$: Repeat $6$ with $j$ increased by $1$, else goto $8$.
		\item Goto $4$, i.e.\ perform another sweep.
		\item
		For $\beta = \alpha, \ldots, S$: \\
		While $(\mathbf{s}_{\beta 1} - \mathbf{s}_{\beta 1}^{*})^2 > \theta$: Choose $\mathbf{u}(t)$ to minimize $(\mathbf{s}_{\beta 1} - \mathbf{s}_{\beta 1}^{*})^2$. \\
		If this reduced $(\mathbf{s}_{\beta 1} - \mathbf{s}_{\beta 1}^{*})^2$ by more than ${\tilde{\theta}}$: Goto $4$, i.e.\ perform another sweep.
		\item Report failure.
	\end{enumerate}
\end{subr}
\absatz

Finding $S$ sources using xSFA involves $S$ runs of ordinary SFA plus a significant amount of computation to generate and filter nonlinearities of already obtained sources. 
We propose a modified approach that requires only one single ordinary SFA run.

The central observation to achieve this is that once $(\mathbf{s}_{\alpha 1} - \mathbf{s}_{\alpha 1}^{*})^2$ is globally minimal, also $(\mathbf{s}_{\alpha i} - \mathbf{s}_{\alpha i}^{*})^2 \; \forall \; i > 1$ are globally minimal, at least if $\mathbf{x}^*$ is a position that actually exists in the environment and not an artificial goal. Remember that each output component $\mathbf{g}_{\mathbf{i}}$ of plain SFA is composed of harmonics $\mathbf{s}_{\alpha i}$ of earlier obtained sources. The first component found by SFA is however not a mixture, but the first harmonic of the slowest source, i.e.\ $\mathbf{g}_{1} = \mathbf{s}_{1 1}$, assuming that $\alpha = 1$ indicates the slowest source. Once we have globally minimized $(\mathbf{g}_{1} - \mathbf{s}_{1 1}^{*})^2 = (\mathbf{s}_{1 1} - \mathbf{s}_{1 1}^{*})^2$, any further improvement potential in $\mathbf{g}_{j}$ for $j > 1$ must stem from a source with $\alpha > 1$. That means, we can alternatively optimize along $\mathbf{g}_{i}$ subsequently instead of $\mathbf{s}_{\alpha 1}$:

\begin{alg} \label{alg:navSFA}
	Task: Navigate the agent into a goal state $\mathbf{x}^*$.
	\begin{enumerate} 
		\item Apply PFAx to extract $r$ manipulatable features (pre feature space).
		\item Apply SFA to decompose these features into $R$ mixtures $\mathbf{g}_{i}$ of sources (feature space).
		\item Use the obtained extraction rules on $\mathbf{x}^*$ to compute the equivalent goal $\mathbf{g}^*$ in feature space.
		\item For $j = 1, \ldots, R$: \\
		While $\sum_{i = 1}^{j} (\mathbf{g}_{i} - \mathbf{g}_{i}^{*})^2 > \theta$: Choose $\mathbf{u}(t)$ to minimize $\sum_{i = 1}^{j} (\mathbf{g}_{i} - \mathbf{g}_{i}^{*})^2$. \\
		If PFAx cannot reduce $\sum_{i = 1}^{j} (\mathbf{g}_{i} - \mathbf{g}_{i}^{*})^2$ by more than ${\tilde{\theta}}$: Goto $6$. 
		\item End.
		\item Report failure.
	\end{enumerate}
\end{alg}
\absatz

We translate the routine for flat areas to this notion:

\begin{subr} \label{subr:flat_SFA} 
	Task: Deal with flat areas.
	\begin{enumerate} 
		\setcounter{enumi}{5}
		\item  If coming from $4$: Increase $j$ by $1$. \\
		While $\sum_{i = 1}^{j} (\mathbf{g}_{i} - \mathbf{g}_{i}^{*})^2 > \theta$: Choose $\mathbf{u}(t)$ to minimize $\sum_{i = 1}^{j} (\mathbf{g}_{i} - \mathbf{g}_{i}^{*})^2$. \\
		If PFAx cannot reduce $\sum_{i = 1}^{j} (\mathbf{g}_{i} - \mathbf{g}_{i}^{*})^2$ by more than ${\tilde{\theta}}$: Repeat $6$ with $j$ increased by $1$.
		\item Goto $4$, i.e.\ perform another sweep.
	\end{enumerate}
\end{subr}
\absatz

The routine to handle disconnected level sets translates as follows:

\begin{subr} \label{subr:disconnected_SFA}
	Task: Deal with disconnected level sets.
	\begin{enumerate}
		\setcounter{enumi}{5}
		\item For $i = j, \ldots, R$: \\
		While $(\mathbf{g}_{i} - \mathbf{g}_{i}^{*})^2 > \theta$: Choose $\mathbf{u}(t)$ to minimize $(\mathbf{g}_{i} - \mathbf{g}_{i}^{*})^2$. \\
		If this reduced $(\mathbf{g}_{i} - \mathbf{g}_{i}^{*})^2$ by more than ${\tilde{\theta}}$:  Goto $4$, i.e.\ perform another sweep.
		\item Report failure.
	\end{enumerate}
\end{subr}
\absatz

We conclude this section by providing the combined routine for algorithm \ref{alg:navSFA}. Here we do not need the parameter $j_{\text{max}}$, because we can use $R$ instead.

\begin{subr} \label{subr:flat_disconnected_SFA}
	Task: Combined routine to deal with flat areas and disconnected level sets.
	\begin{enumerate}
		\setcounter{enumi}{5}
		\item  If coming from $4$: Set $k = j + 1$. \\
		While $\sum_{i = 1}^{k} (\mathbf{g}_{i} - \mathbf{g}_{i}^{*})^2 > \theta$: Choose $\mathbf{u}(t)$ to minimize $\sum_{i = 1}^{k} (\mathbf{g}_{i} - \mathbf{g}_{i}^{*})^2$. \\
		If PFAx cannot reduce $\sum_{i = 1}^{k} (\mathbf{g}_{i} - \mathbf{g}_{i}^{*})^2$ by more than ${\tilde{\theta}}$: \\
		If $k < R$: Repeat $6$ with $k$ increased by $1$, else goto $8$.
		\item Goto $4$, i.e.\ perform another sweep.
		\item For $i = j, \ldots, R$: \\
		While $(\mathbf{g}_{i} - \mathbf{g}_{i}^{*})^2 > \theta$: Choose $\mathbf{u}(t)$ to minimize $(\mathbf{g}_{i} - \mathbf{g}_{i}^{*})^2$. \\
		If this reduced $(\mathbf{g}_{i} - \mathbf{g}_{i}^{*})^2$ by more than ${\tilde{\theta}}$:  Goto $4$, i.e.\ perform another sweep.
		\item Report failure.
	\end{enumerate}
\end{subr}
\absatz

\section{Experiments and Applications} \label{sec:experiments}

This section continues in a sense the experiments from \cite{2017arXiv171200634R}, but using the global navigation technique developed in this paper. We solve the problematic obstacle scenario from there and also tackle even more complex multiroom scenarios. We start with a comprehension of the general navigation setting.

Inspired by reinforcement learning (RL) we have an agent in an environment -- e.g.\ think of a virtual rat on a table. During a training phase it can explore the environment in order to solve navigation tasks.
In contrast to RL we do not consider an arbitrary reward signal for now, but focus on navigating the agent into a desired goal state. In terms of RL this can be seen as using a distant function as reward signal, measuring the distance between the agent's current state and the goal state. As distance measure we use least squares distance in feature space like it is denoted in the algorithms throughout section \ref{sec:Global-navigation-algorithm}.

For exploration we assume a random walk with a fixed step size, choosing a new direction by a uniform random distribution after each step. A more sophisticated exploration routine could be applied in future work, e.g.\ curiosity-driven exploration by aiming for the largest change in the so far discovered slow feature space.

\begin{figure}[ht]
	\centering
	\captionsetup{width=.60\linewidth}
	\includegraphics[width=0.6\hsize]{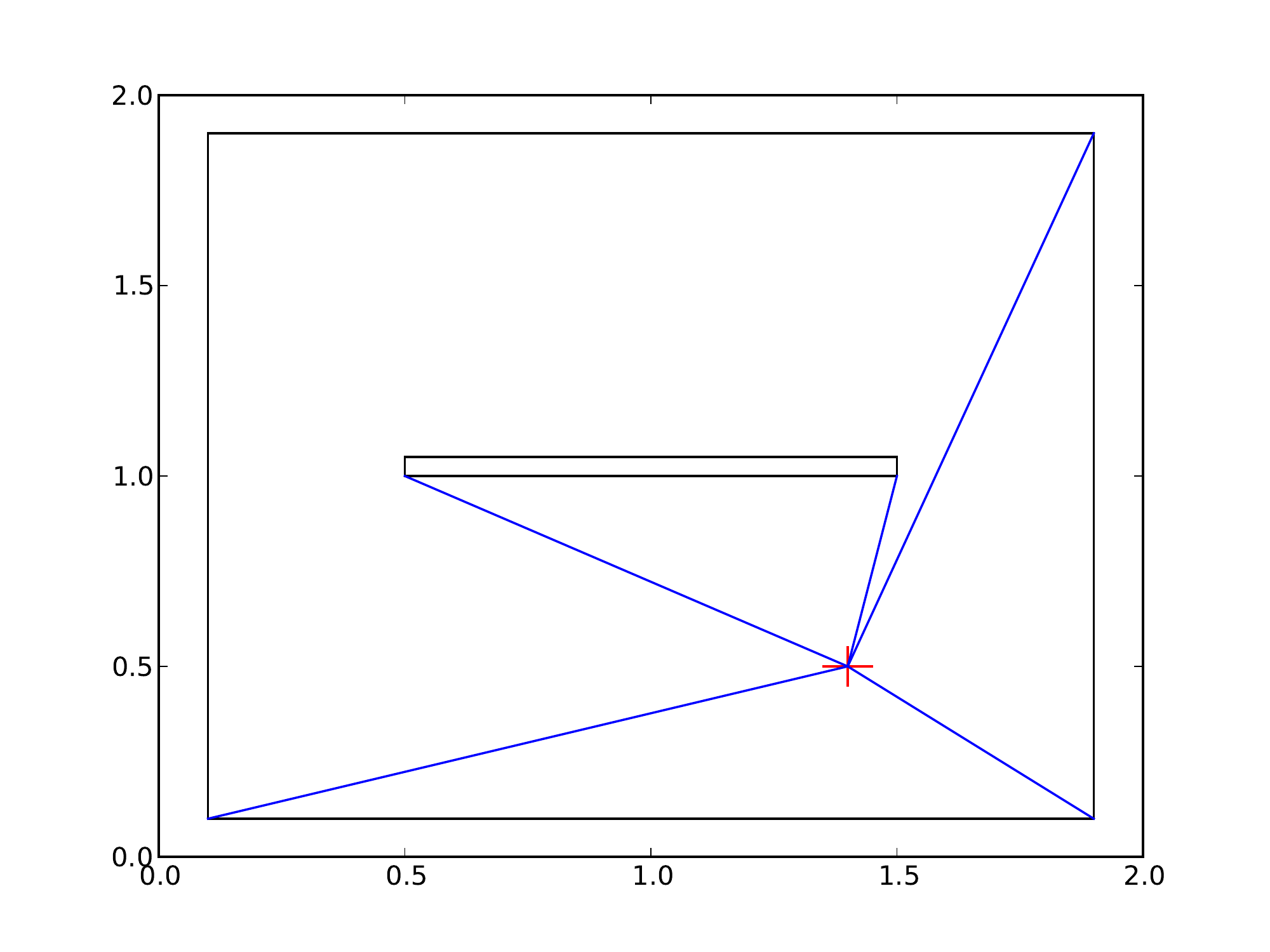}
	\caption{The full field of view is split up into sections occupied by each wall segment in a $360$° field of view. The proposed wall sensor measures the fraction of each wall segment visible from the agent's current location.}
	\label{fig:wall-sensor}
\end{figure}

To model the sensory input signal we mostly focus on the wall sensor introduced in \cite{2017arXiv171200634R} -- a virtual sensor that measures for a current location the visible fraction of each wall segment in an overall $360$° field of view, see figure \ref{fig:wall-sensor}.
E.g.\ for a plain square single room environment the sensor would emit four components, one for each wall.
Note that because of the $360$° field of view, the sensor is by construction invariant regarding head direction. This is a simplification, allowing us to focus on the navigation task itself. In \cite{FranziusSprekelerEtAl-2007e} it was shown that SFA is capable to find head direction invariant features, so this simplification is not a general restriction.


We apply a simplified version of PFAx in these experiments. If the relation between control signal and sensor signal is sufficiently simple, SFA can be used as a proxy for the extraction of predictable features. We applied \eqref{pfa-fittingBUExplicit} and \eqref{pfa-fittingU} to obtain prediction matrices from an SFA extraction matrix. This is mainly done for technical simplicity as this simplification turned out to be sufficient for the experiments presented here. It was studied in \cite{8353107} that SFA often is a good proxy for extracting predictable features. Experiments concerning a complexer control relationship that requires an actual dimensionality reduction in terms of PFAx may be part of future work.

For each environment studied in this section, we illustrate the features found by SFA, provide interpretations and discuss their suitability for navigation tasks.
Then we illustrate how algorithm \ref{alg:navSFA} would solve one or two exemplary navigation tasks by showing the paths that would arise during its first iterations.
In these plots, the yellow crosshair always indicates the goal and the colormap displays Euclidean/least squares distance of each point to the goal in feature space. The navigation path is rendered in white on top of the colormap. Beyond that, we illustrate a vector field-like navigation flow for the whole environment in cyan. This is computed by performing a few navigation steps for each starting point on a uniform lattice across the whole environment. The environment's bounding box is normalized to $[0, 1]^2 \subset \mathbb{R}^2$ with preserved aspect ratio. Throughout this section we use an overly exhaustive training phase of $200000$ steps. This shows that the algorithm cannot be over-trained and it yields very clean and interpretable results. We can sometimes even hypothesize that we gained results visually close to the unknown ideal SFA-solution for the respective environment.
We start by examining the trivial case of a plain square single room.

\subsection{SFA in two dimensions 
	} \label{sec:SFA-2D}

For a better understanding of multi room experiments we first illustrate the features found by SFA in case of a single room environment. These are closely related to the harmonics shown in section \ref{sec:harmonics-1D}. Especially the first two components in figure \ref{fig:SFA-2D-d4-8} are easily recognizable as the first harmonic along two distinct axes.

\begin{figure}[ht]
	\centering
	\captionsetup{width=.90\linewidth}
	\includegraphics[width=1.0\hsize]{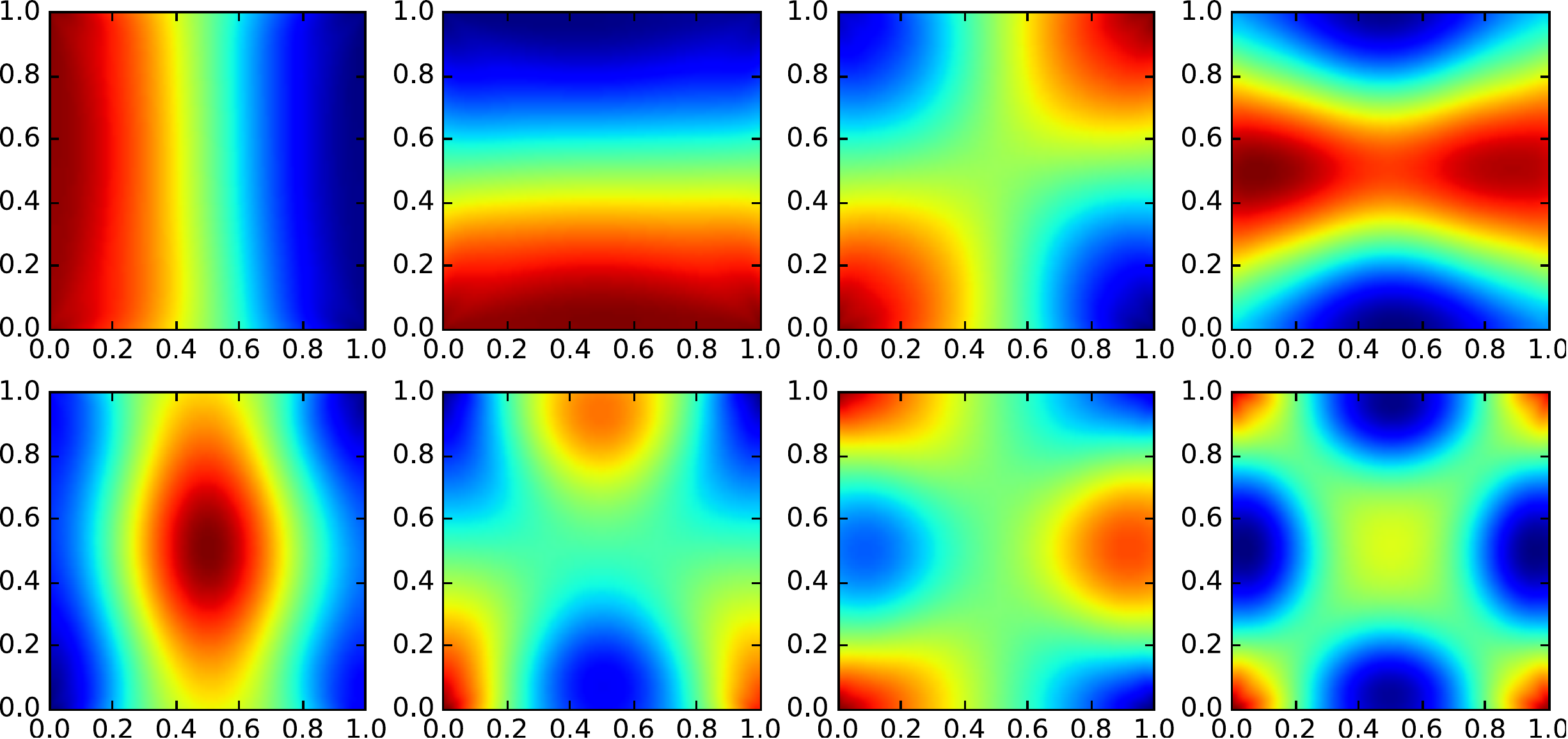}
	\caption{Illustration of SFA in two dimensions. The walk consists of $200$k steps with random direction and fixed step size of $0.02$. Nonlinear expansion was performed using monomials up to the fourth degree. The eight slowest components are shown.}
	\label{fig:SFA-2D-d4-8}
\end{figure}

We conclude that the algorithm recognizes the agent's $x$-coordinate as the primary source because the slowest feature displays the first 1D harmonic laid out along the $x$-axis of the environment. 
The second feature displays the same harmonic along the $y$-axis, which is therefore the second source. Note that these features are almost equally slow and their order of appearance is arbitrary for a square environment. In a rectangular but non-square environment, the longest edge would yield the slowest source. Also their sign, i.e\ their direction of descent may be flipped.
Subsequent components are mixtures of higher harmonics of these two sources and no third source can be discovered in this setting.
We do not provide a navigation run for this setting, because it was already solved in \cite{2017arXiv171200634R}.

In the following experiments this pattern will show up frequently in subregions, especially for each of the several rooms the complexer environments are composed of. The pattern will however show some perturbation close to doors and other bottlenecks and will be augmented by more global features spanning several rooms.

\subsection{Two rooms} \label{sec:twoRooms}

We investigate the simplest case with multiple rooms. Figure \ref{fig:SFA-two-rooms} displays the eight slowest components for a symmetrical environment that is split into two rooms connected by a central pathway. 

\begin{figure}[ht]
	\centering
	\captionsetup{width=.90\linewidth}
	\includegraphics[width=1.0\hsize]{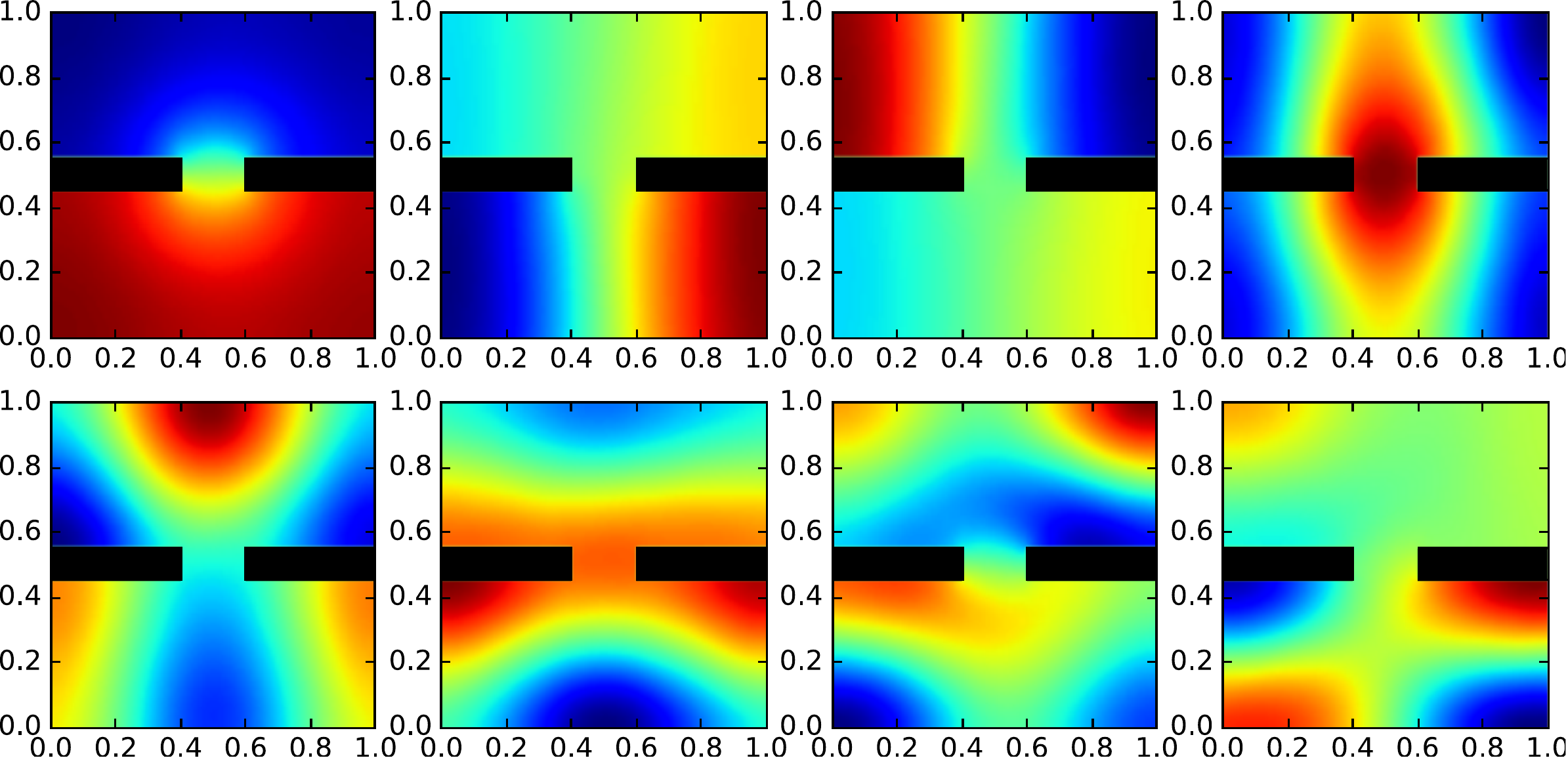}
	\caption{Illustration of SFA in two rooms. The walk consists of $200$k steps with random direction and fixed step size of $0.02$. Nonlinear expansion was performed using monomials up to the fifth degree. The eight slowest components are shown.}
	\label{fig:SFA-two-rooms}
\end{figure}

The first component is the slowest source and is the only source that spans both rooms. It is the crucial feature to navigate into the correct room and can even be seen as an indicator for room identity, linking this scenario to SFA based classification \cite{Escalante-B.Wiskott-2013b}. For our purpose, this component serves to guide the agent into the right room. Beneath its indicator characteristic the signal is still continuous and monotonically increasing/decreasing towards the pathway. Note the circular equipotential levels which serve to guide the agent to the door in case a room change is necessary.

The second and third components are equivalents of the first source we found in the previous section \ref{sec:SFA-2D}, but scoped on one room each. Indeed we would (more or less) find the whole decomposition from section \ref{sec:SFA-2D} for each room over time. E.g.\ the eighth component corresponds to the third component from figure \ref{fig:SFA-2D-d4-8}, scoped on the bottom room. The second component is an overlap of second harmonics of the sources.
No further sources are discovered. All we find in subsequent components are mixtures of higher harmonics of the first three sources. The vertical component for each room is already provided by the first component as a side effect of its room-crossing nature. Compared to the pure vertical component in figure \ref{fig:SFA-2D-d4-8} it shows expectable perturbation at the pathway.
Interestingly all three sources that exist in this environment are discovered almost cleanly unmixed, even though plain SFA was used and not xSFA. While we frequently observe rather unmixed initial occurrences of the first harmonics of the sources, this is not guaranteed. However, it is quite helpful for interpretation of the results.

The pathway itself is an attractor for steepness and we observe a similar perturbation of the harmonics as studied in section \ref{sec:bottleneck}. As concluded in that section this is not a big issue, but might require some special care if the signal should become too flat for proper navigation outside of the bottleneck. On the other hand there are use-cases for a detector of bottleneck states (\cite{DBLP:conf/icml/McGovernB01, 10.1007/3-540-45622-8_16}). 
A sudden increase of steepness in the first harmonic of each source can serve to detect such a bottleneck state, e.g.\ by applying a threshold on it's squared derivative. This principle can further be seen as a model for surprise, which is e.g.\ a central notion in \cite{Schapiro2013}.

The repeated occurrence of source characteristics and higher harmonics for each room is expectable as the rooms divide the environment into regions of low sensor correspondence. This observation supports the notion of a hierarchical decomposition of the environment into easier subtasks corresponding to independent sources.

\begin{figure}[ht]
	\centering
	\captionsetup{width=.90\linewidth}
	\includegraphics[width=1.0\hsize]{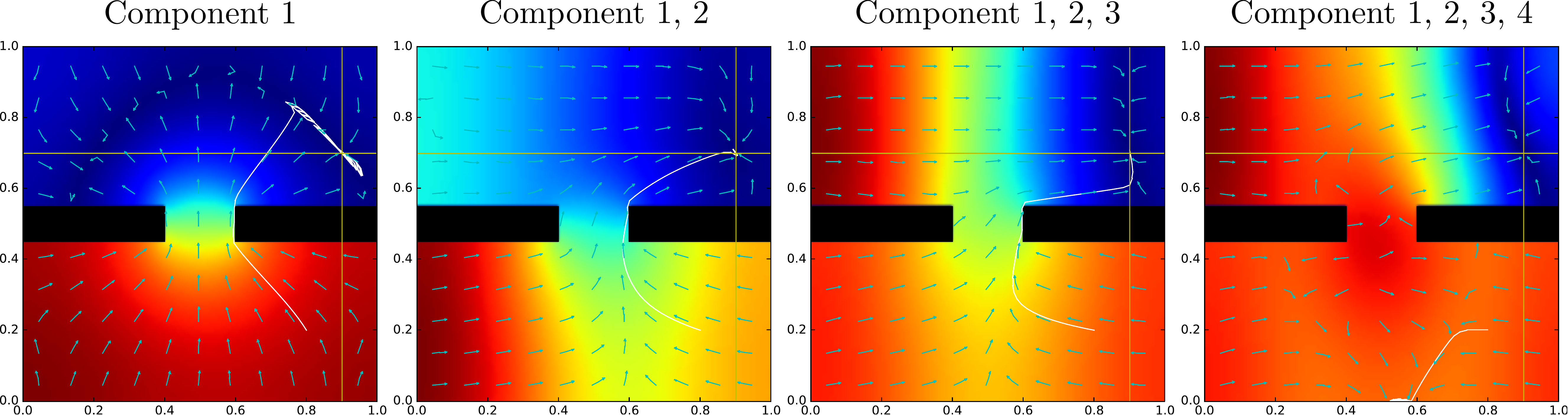}
	\caption{Illustration of SFA-based navigation using approximately independent sources. Combining the first three, i.e.\ the actual sources, yields successful navigation results. Adding a higher component right from the start would break navigation (right).}
	\label{fig:SFA-two-rooms_nav}
\end{figure}

Figure \ref{fig:SFA-two-rooms_nav} illustrates the combinations of sources the algorithm would use for navigation. Here we decompose the algorithm's iterations and navigate using only the sources from one iteration alone, thus illustrating its workability independently from $\theta$. Depending on $\theta$ the navigation pathways would be merged. Note that when the algorithm would start to consider the fourth component, it would be already fairly close to the goal. The navigation failure of the fourth component in figure \ref{fig:SFA-two-rooms_nav} only demonstrates that the component cannot be used right from the beginning.

%

\subsubsection{Using karthesian coordinates}

As an alternative to the wall sensor we take a short look at learning the two-room-scenario based on plain karthesian $(x, y)$-coordinates as a sensor, e.g.\ like a GPS signal would provide. One might intuitively expect that navigation based on such coordinates is trivial. While this is the case for a single room, plain coordinates are particularly ill-suited for multiple rooms. Imagine two nearby spots, separated by a wall. Karthesian coordinates would poorly represent the fact that in terms of navigation these spots might be actually fairly distant.

An even more significant issue with this setting becomes clear once we remember that SFA obtained three sources describing this scenario. That means, with karthesian coordinates, our sensor would be lower dimensional than the number of sources forming our model, i.e.\ $\dim (\mathbf{x}) < S$. This is a rather strange relation: Instead of a manifold being embedded into a higher dimensional sensor space, it is now encoded into a lower dimensional space.

It turns out that SFA can still retrieve exactly the same sources that we previously found based on the wall sensor, but this requires an extremely high degree of nonlinear expansion. We start seeing the correct structures at expansion degrees between $30$ and $40$. Using monomials, only expansion degrees $<20$ are numerically feasible with 64 bit floating point arithmetics. That means, this sensor representation requires a numerically more stable way of nonlinear expansion. Like in section \ref{sec:bottleneck}, we can leverage Legendre polynomials for this purpose.

\begin{figure}[ht]
	\centering
	\captionsetup{width=.90\linewidth}
	\includegraphics[width=1.0\hsize]{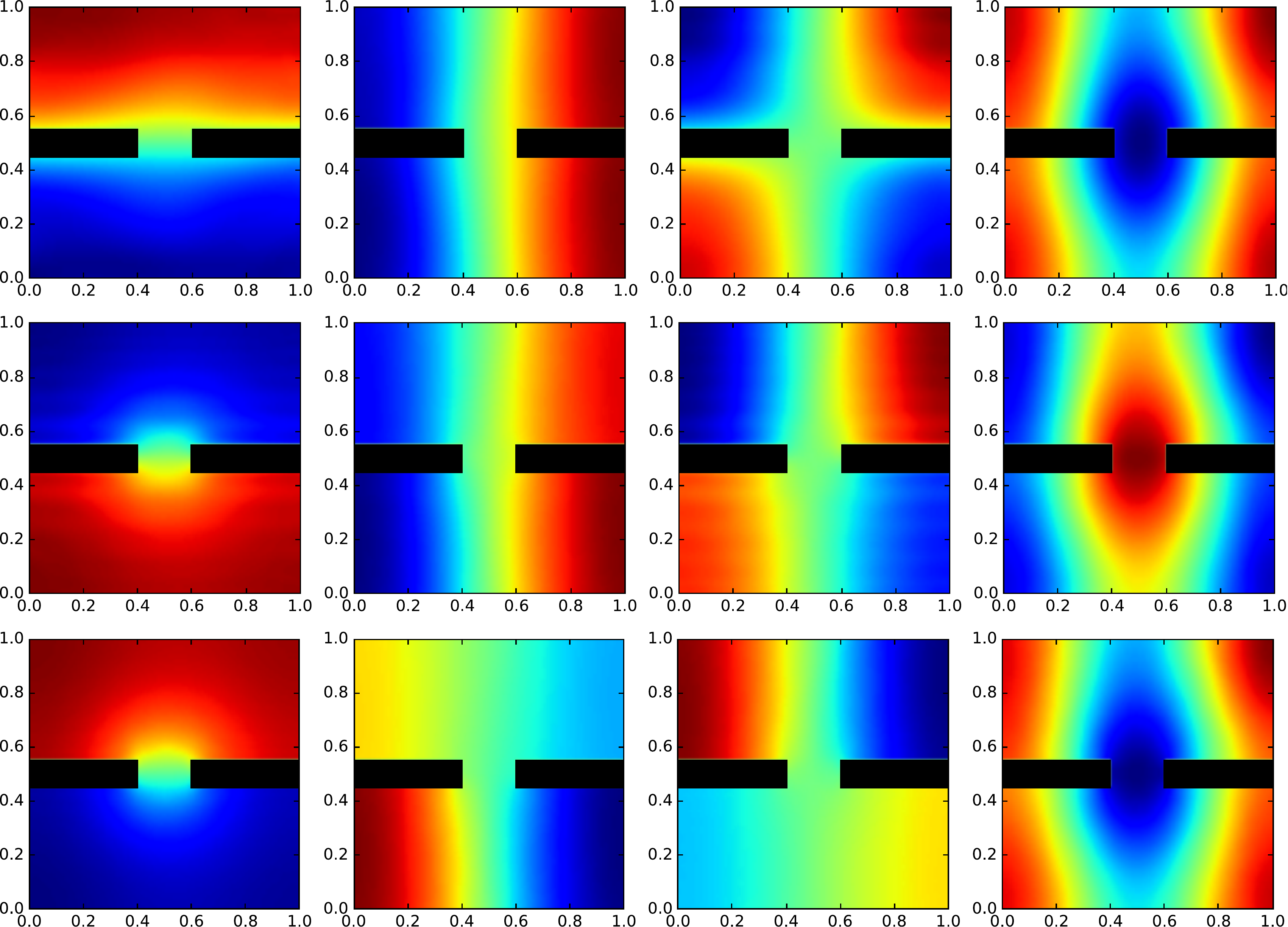}
	\caption{Illustration of SFA in two rooms based on karthesian coordinates as sensory input. The walk consists of $200$k steps with random direction and fixed step size of $0.02$. \textsl{Top}: Using monomials up to the 40th degree. \textsl{Center}: Using Legendre polynomials up to the 40th degree. \textsl{Bottom}: Using Legendre polynomials up to the 80th degree.}
	\label{fig:SFA-two-rooms-xy}
\end{figure}

Figure \ref{fig:SFA-two-rooms-xy} presents our results. Note that the top line, based on monomials up to the 40th degree, is visually approximately identical to monomials up to the 20th degree or Legendre polynomials up to the 20th degree. This illustrates the fact that monomials do not actually add new data representation from a certain degree onwards. Due to limited floating point precision they effectively compress all information to zero above that degree. As soon as we switch to Legendre polynomials, while keeping the same degree, we can clearly see how the features (center row) become closer to those based on the wall sensor in figure \ref{fig:SFA-two-rooms}. Finally, using Legendre polynomials up to degree 80 or higher, we get approximately the same features as previously in figure \ref{fig:SFA-two-rooms}. From this equivalence we conclude that the source-yielding components are visually close to the unknown ideal SFA-solutions of the two-room scenario. This hypothesis is supported by the high degree of nonlinear expansion that was applied.

Besides demonstrating the advantage of Legendre polynomials for nonlinear expansion, figure \ref{fig:SFA-two-rooms-xy} illustrates the transition from the single room case (Figure \ref{fig:SFA-2D-d4-8}) to the two-room-case in figure \ref{fig:SFA-two-rooms}. The lower the degree, the less perception for the wall is represented in the features.

\subsection{Three rooms} \label{sec:threeRooms}

In this section we study a more complex example. Our environment here consists of three rooms forking from a central corridor.

\begin{figure}[ht]
	\centering
	\captionsetup{width=.90\linewidth}
	\includegraphics[width=1.0\hsize]{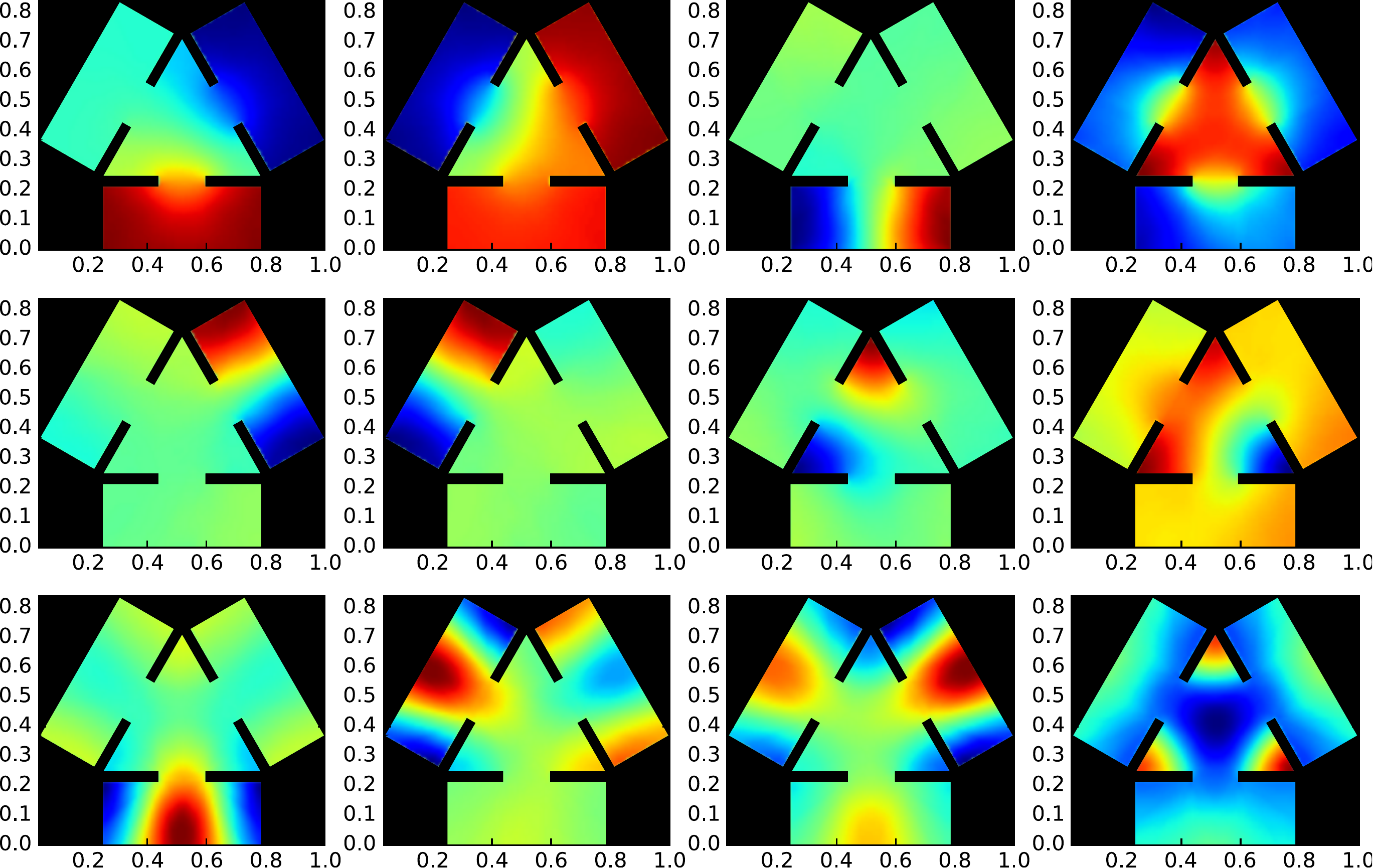}
	\caption{Illustration of SFA in three rooms with central corridor. The walk consists of $200$k steps with random direction and fixed step size of $0.02$. Nonlinear expansion was performed using monomials up to the second degree. The twelve slowest components are shown.}
	\label{fig:SFA-three-rooms}
\end{figure}

Figure \ref{fig:SFA-three-rooms} presents the features found by SFA. Like in the two-room example the first component is the slowest source and spans multiple rooms. It corresponds to the longest non-cyclic path that can be fitted into the environment. With non-cyclic we denote that the path must connect two points in environment space without detour.
Note that the first component leaves one of the rooms plain. This is because each single source is a one-dimensional feature, embedded into a higher dimensional space. Thus it cannot span all three rooms in the given layout.

So, in contrast to the two room setting, we find the second component to span multiple rooms as well. It orthogonally connects the room that was previously plain with the path indicated by the first component. However, the second source is not represented purely, but is intermixed with the second harmonic of the first source. Figure \ref{fig:SFA-three-rooms_nav} demonstrates the logic behind algorithm \ref{alg:navSFA} with subroutines,
i.e.\ that navigation still succeeds in this case. The figure further asserts that the first two components are in combination sufficient to navigate into the right room from any starting point.
The fourth component is an intermix of the second harmonics of the first two sources.

Components 3, 5 and 6 introduce the room-scoped sources we already found in earlier examples. We find the familiar pattern for each of the rooms, components 9, 10, 11 corresponding to the second harmonic of the room-internal sources.

\begin{figure}[ht]
	\centering
	\captionsetup{width=.90\linewidth}
	\includegraphics[width=1.0\hsize]{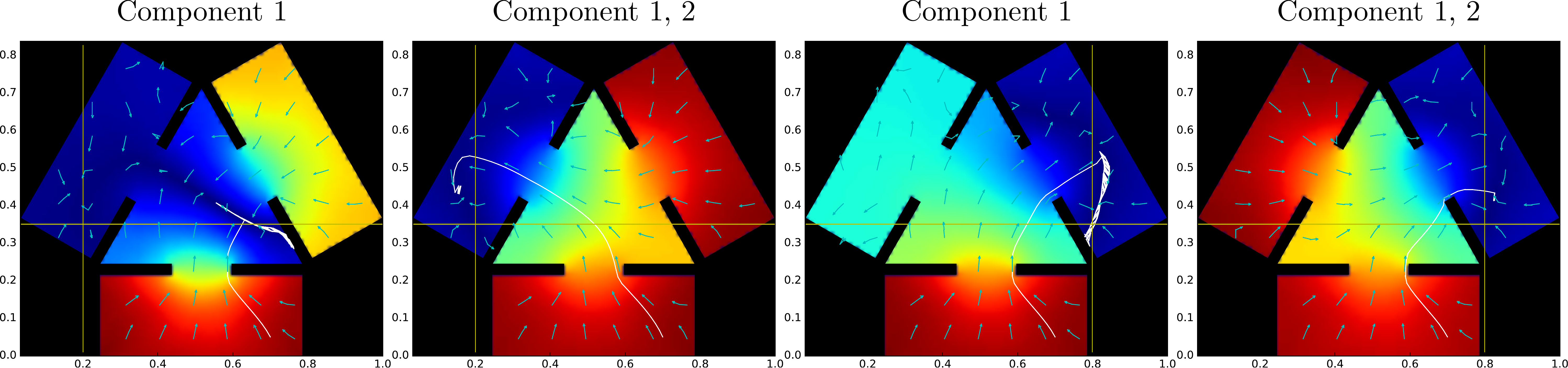}
	\caption{Illustration of SFA-based navigation using approximately independent sources. For two different tasks the navigation is displayed using only the first or the first two components.}
	\label{fig:SFA-three-rooms_nav}
\end{figure}

To study the suitability of the obtained components for navigation, we present two tasks in figure \ref{fig:SFA-three-rooms_nav}: From bottom room to the left and from bottom room to the right.
Given that the first source spans the bottom room and the right room, while leaving the left room flat, it can directly guide the agent from bottom to the right. To reach the left room, the second component is required, but the first one is still a useful prerequisite: It serves to guide the agent out of the starting room through the pathway into the corridor, settling it in front of the correct door. The second component is suitable to pull it into the room. Both navigation tasks would reach the goal position precisely if higher components were taken into account. In this demonstration we stopped after the second component, having the agent in the correct destination room, fairly close to the goal position.

\subsection{Four rooms} \label{sec:fourRooms}

Extending the scenario by another room we can assert that the procedure scales well. The results here are friendlier for interpretation, because the number of rooms is a multiple of two. Since each source is one dimensional it can connect two rooms, allowing for an even split of the overall structure into sources. Figure \ref{fig:SFA-four-rooms} displays this effect in the sense that each source appears in a pair consisting of a horizontal and vertical counterpart. Note that SFA retrieves each source purely in this example. Due to the very clean and interpretable result, we hypothesize that the components shown in figure \ref{fig:SFA-four-rooms} are visually close to the unknown ideal SFA-solutions of this environment.

\begin{figure}[ht]
	\centering
	\captionsetup{width=.90\linewidth}
	\includegraphics[width=1.0\hsize]{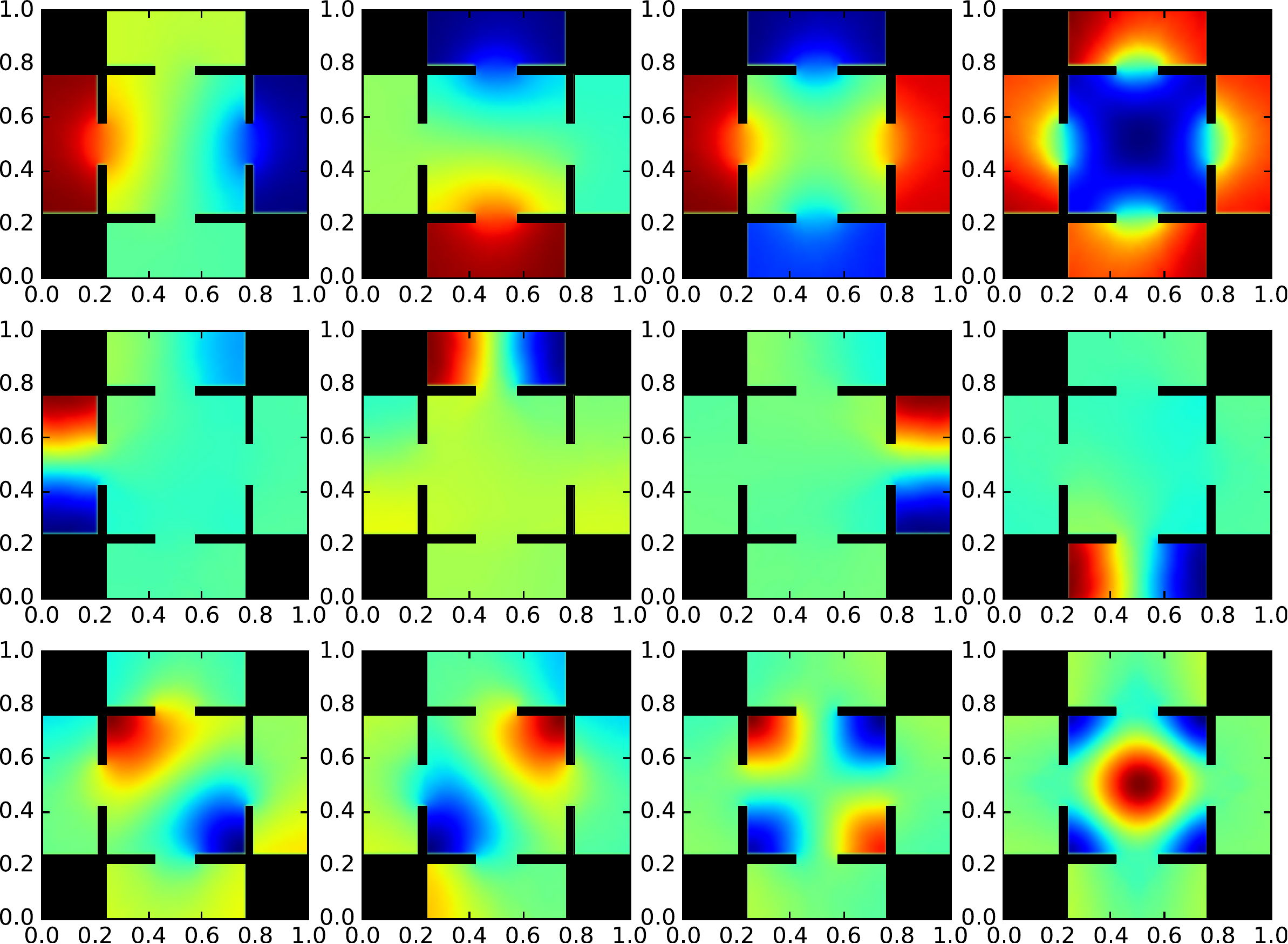}
	\caption{Illustration of SFA in four rooms with central corridor. The walk consists of $200$k steps with random direction and fixed step size of $0.02$. Nonlinear expansion was performed using monomials up to the second degree. The twelve slowest components are shown.}
	\label{fig:SFA-four-rooms}
\end{figure}

Like in the previous experiments we get some components that span multiple rooms (1, 2), later followed by room-internal sources (5, 6, 7, 8). Components 2 and 3 are mixtures of the second harmonics of the first two sources. The first two components correspond again to the longest non-cyclic paths that can be fitted into the environment.

\begin{figure}[ht]
	\centering
	\captionsetup{width=.90\linewidth}
	\includegraphics[width=1.0\hsize]{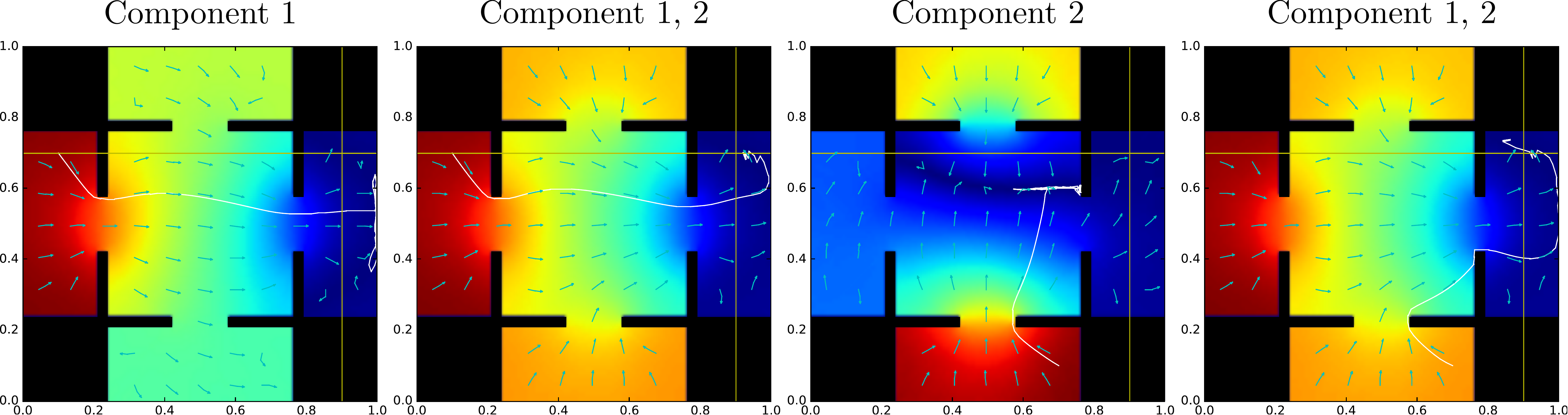}
	\caption{Illustration of SFA-based navigation using approximately independent sources. For two different tasks the navigation is displayed using only one or two components.}
	\label{fig:SFA-four-rooms_nav}
\end{figure}

We illustrate two navigation tasks in figure \ref{fig:SFA-four-rooms_nav}, roughly equivalent to those from the three-room example. Navigation along one of the first sources guides the agent directly into the destination room, like the navigation from left to right shows. Navigating around the corner, e.g.\ from bottom to right is only feasible using two components.

\subsection{Three rooms, asymmetric}

In this example we investigate a less symmetric arrangement of three rooms with a large rectangular corridor. Figure \ref{fig:SFA-three-rooms-corridor} illustrates clearly how the slowest source corresponds to the longest path that directly connects two points in the environment. It spans two rooms and the corridor, demonstrating that the rooms are not chosen arbitrarily but that it spans specifically the most distant rooms while leaving the central room plain.

The second source takes the formerly plain central room into account, yielding a mixture with the second harmonic of the first source across the rest of the environment. Yet again, the first two components are feasible to direct the agent into the destination room, which is illustrated by two navigation tasks in figure \ref{fig:SFA-three-rooms-corridor_nav}: Left to right and left to center.
Like in the symmetric three-room example, the second component is required for entering the central room, which is flatly represented by the first component. Without the second component the agent is at least guided to the entrance of the destination room.

Due to the large corridor we must consider more components than in earlier examples, before we find the typical room-internal sources in components 12, 14 and 15 displayed in figure \ref{fig:SFA-three-rooms-corridor}.

\begin{figure}[ht]
	\centering
	\captionsetup{width=.90\linewidth}
	\includegraphics[width=1.0\hsize]{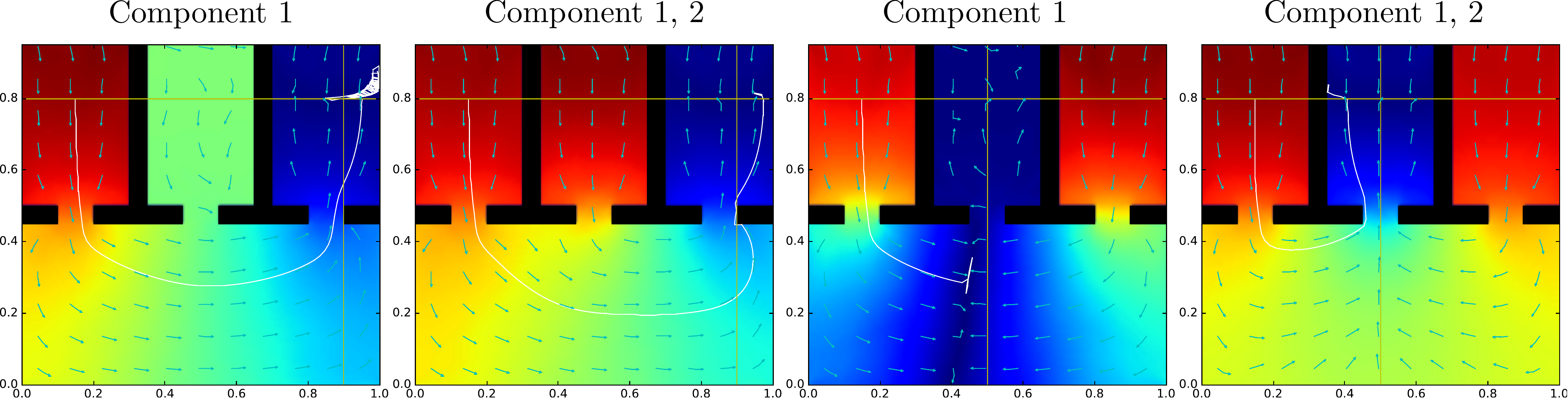}
	\caption{Illustration of SFA-based navigation using approximately independent sources. For two different tasks the navigation is displayed using only the first or the first two components.}
	\label{fig:SFA-three-rooms-corridor_nav}
\end{figure}

\begin{figure}[ht]
	\centering
	\captionsetup{width=.90\linewidth}
	\includegraphics[width=1.0\hsize]{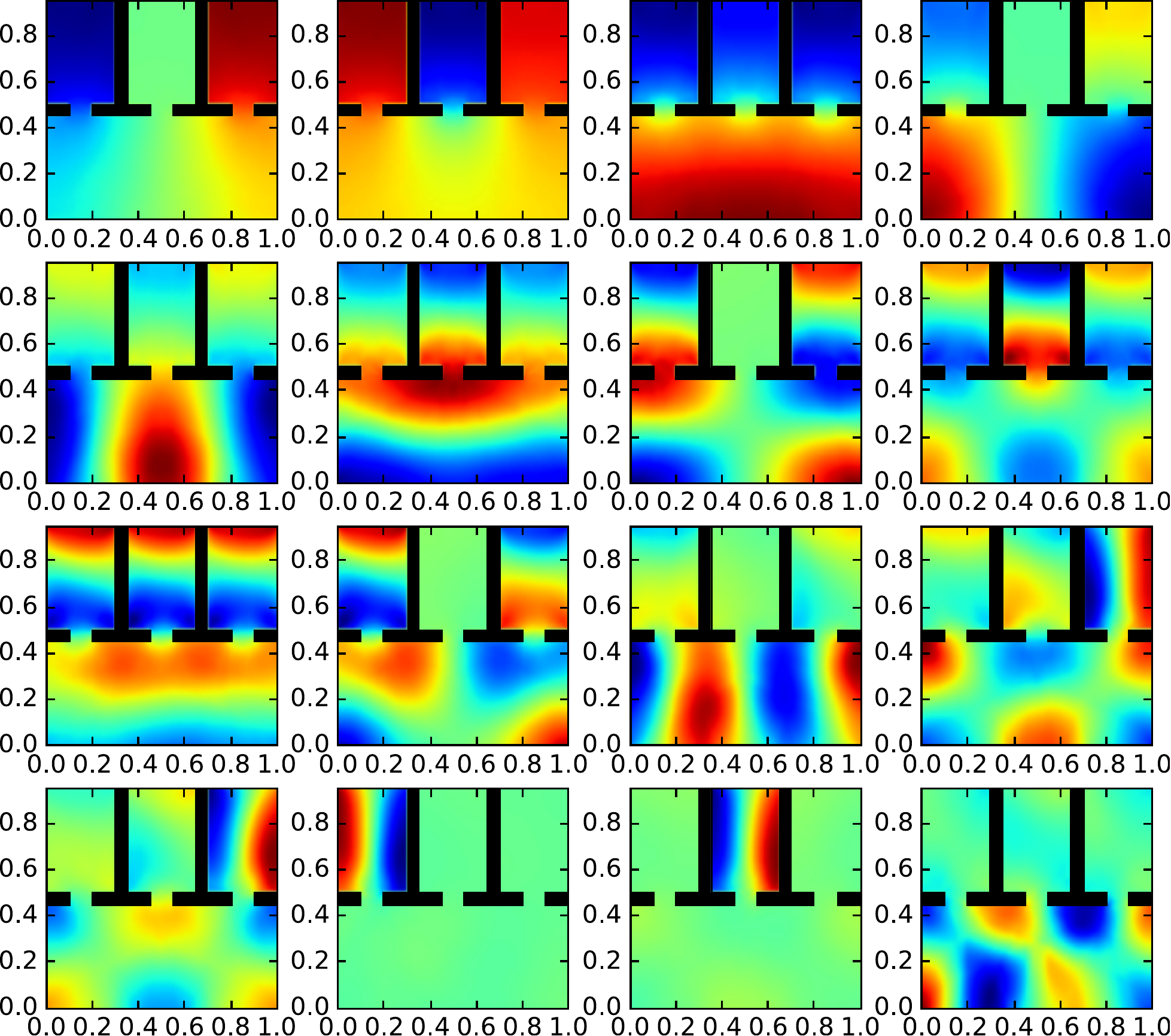}
	\caption{Illustration of SFA in three rooms with large corridor at the bottom. The walk consists of $200$k steps with random direction and fixed step size of $0.02$. Nonlinear expansion was performed using monomials up to the second degree. The sixteen slowest components are shown.}
	\label{fig:SFA-three-rooms-corridor}
\end{figure}

\subsection{Navigation with obstacle} \label{sec:obstacle}

This example deals with the obstacle environment familiar from \cite{2017arXiv171200634R}. Figure \ref{fig:SFA-obstacle} displays the first eight components. Indeed, the first component is suitable to guide an agent vertically around the obstacle if the navigation task requires it. In figure \ref{fig:SFA-obstacle_nav}/left the navigation strives the obstacle because the obtained features were not sufficiently predictable. Note that the actual gradient would have avoided the obstacle. Predictability can be improved by a using a higher expansion.

Surrounding the obstacle horizontally does not work based on the first component alone.
A fundamental difference to the previous examples is that this environment yields no simply connected space. This issue manifests in the fact that the first source -- while being monotonic -- might still yield local optima for navigation. These can be circumvented by using higher components, but might in general require an additional routine for component selection.

\begin{figure}[ht]
	\centering
	\captionsetup{width=.90\linewidth}
	\includegraphics[width=1.0\hsize]{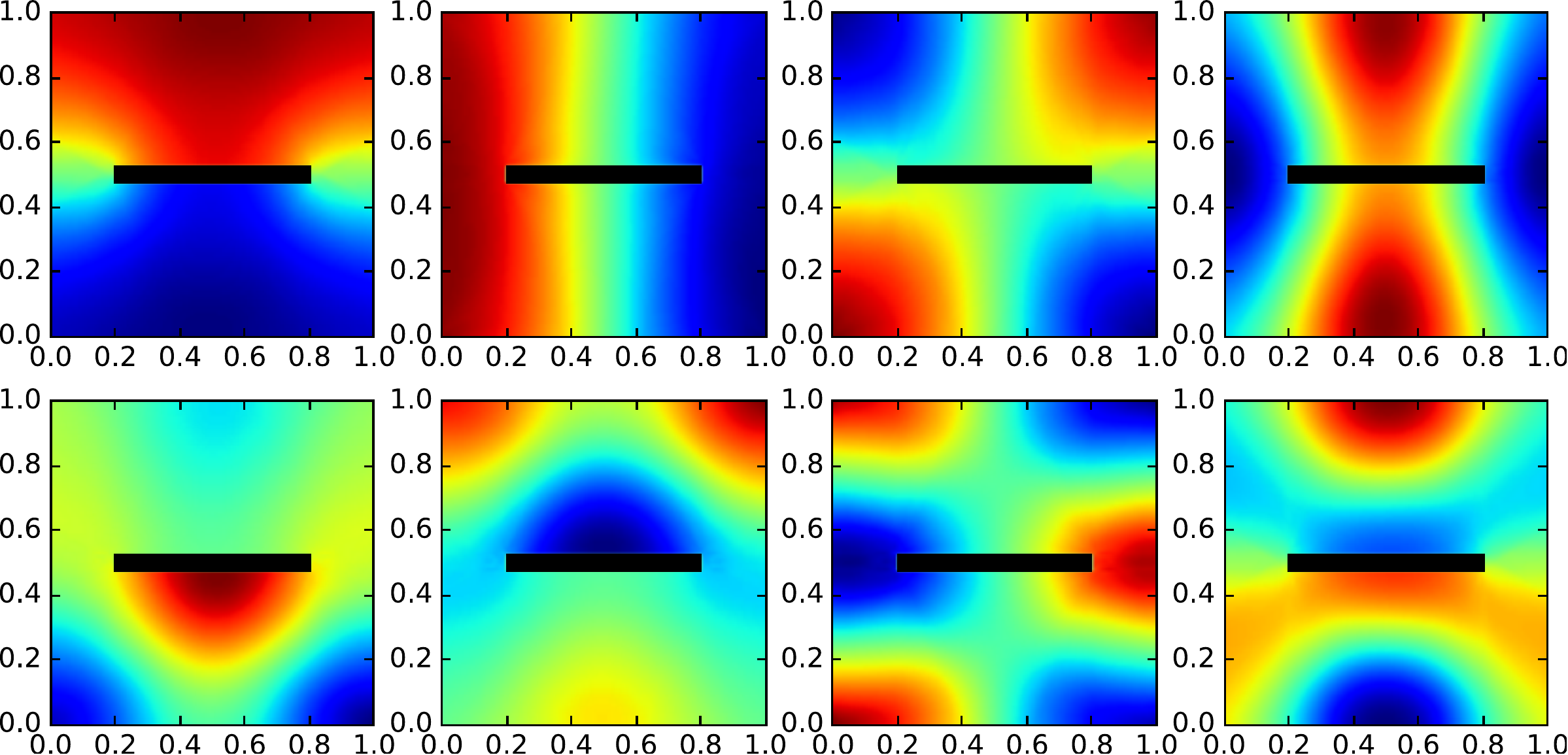}
	\caption{Illustration of SFA in an environment with obstacle. The walk consists of $200$k steps with random direction and fixed step size of $0.02$. Nonlinear expansion was performed using monomials up to the fourth degree. The eight slowest components are shown.}
	\label{fig:SFA-obstacle}
\end{figure}

\begin{figure}[ht]
	\centering
	\captionsetup{width=.90\linewidth}
	\includegraphics[width=1.0\hsize]{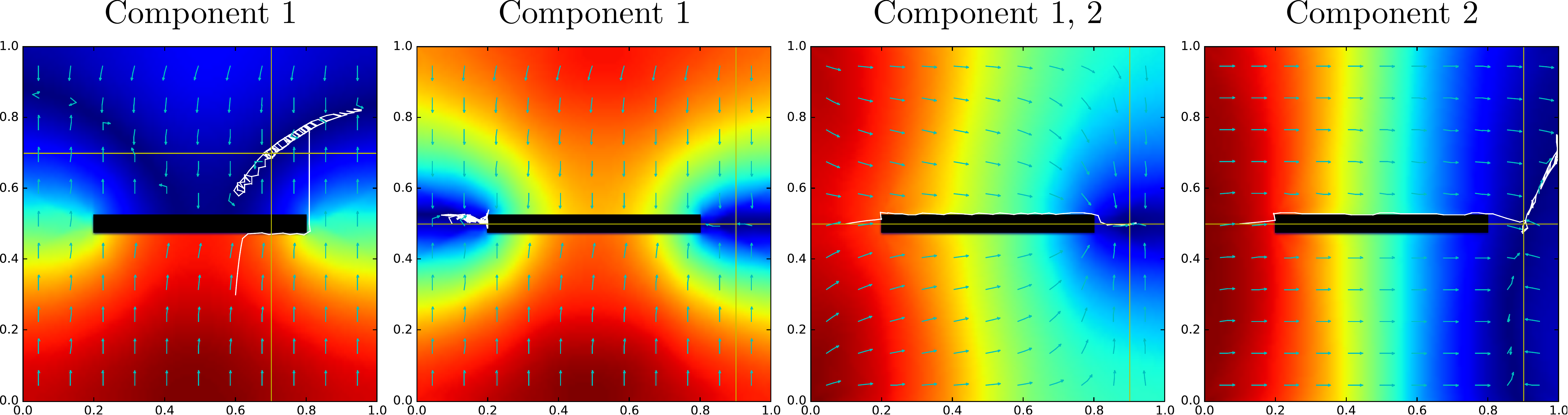}
	\caption{Illustration of SFA-based navigation around an obstacle for two navigation tasks. \textsl{Left}: Due to poor prediction the agent strives the obstacle, but can still reach the goal. \textsl{Second}: The first component is not feasible to guide around the obstacle horizontally. \textsl{Third}: The second component dominates sufficiently to guide the agent around. \textsl{Right}: The second component alone would be sufficient in this case.}
	\label{fig:SFA-obstacle_nav}
\end{figure}

The effect is illustrated in figure \ref{fig:SFA-obstacle_nav}/second, where we have an example where navigation along the first component ends up in the wrong state. Adding the second component solves the issue in this case (Figure \ref{fig:SFA-obstacle_nav}/third), because it dominates and can guide the agent to the right destination.

\subsection{Pendulum swing-up with limited torque}

We finally apply SFA to a dynamical system manipulation task. Pendulum swing-up with limited torque is a classical RL problem where the agent controls a harmonic pendulum by directly applying a torque to its fixpoint. The goal is to bring the pendulum into vertical stand-up position. The torque is limited such that the agent cannot simply turn up the pendulum, but has to swing it. With the goal being a specific destination state the reward function can be translated to our setting. We perceive the control task as a navigation task in the phase space of the pendulum. In this section we plot velocity on the $x$ axis and angular amplitude on the $y$ axis. The goal state is therefore at the center of the top boundary with maximum amplitude (pendulum pointing up) and zero velocity.

\begin{figure}[ht]
	\centering
	\captionsetup{width=.60\linewidth}
	\includegraphics[width=0.5\hsize]{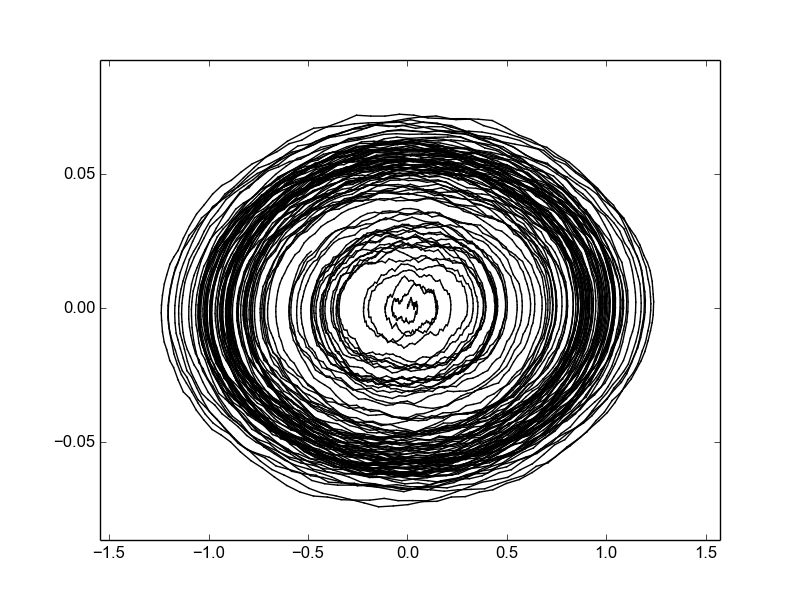}
	\caption{Phase space of a harmonic pendulum during training. The pendulum is controlled randomly over $10000$ steps.}
	\label{fig:SFA-pendulum-training}
\end{figure}

Data for the training phase is generated by controlling the pendulum randomly (Figure \ref{fig:SFA-pendulum-training}).

\begin{figure}[ht]
	\centering
	\captionsetup{width=.90\linewidth}
	\includegraphics[width=1.0\hsize]{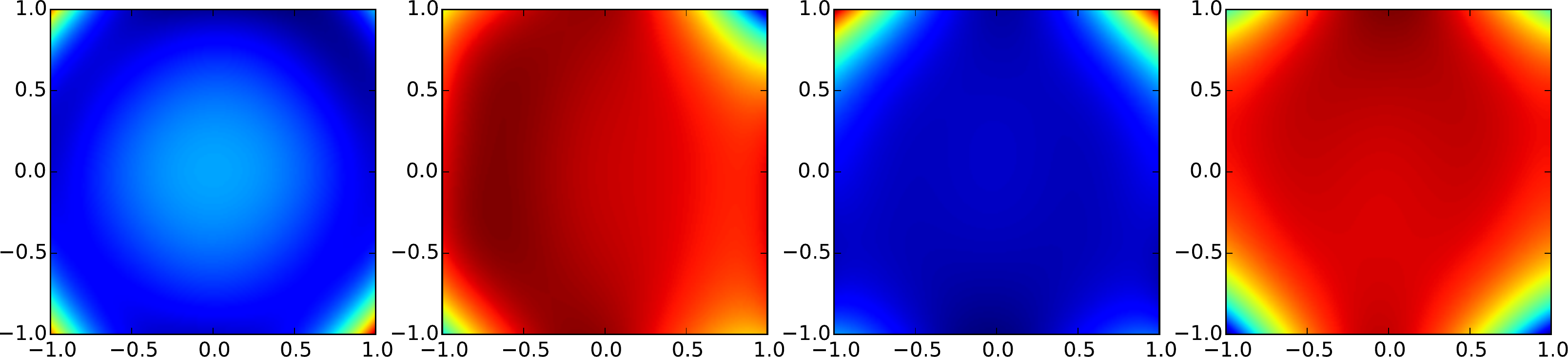}
	\caption{Illustration of SFA on the phase space of a harmonic pendulum. The four slowest components are shown.}
	\label{fig:SFA-pendulum}
\end{figure}

The slowest component in figure \ref{fig:SFA-pendulum} shows a clear representation of the circular nature of the training phase. Its gradient captures the rule that for reaching any position at the boundary it is crucial to walk away from the center into any direction. Navigating along the first component tells the agent to swing the pendulum, regardless of the precise goal location. Considering the higher components when a sufficient velocity is acquired, will settle in the specific goal state.


\section{Discussion}

\subsection{Summary}

We combined the unsupervised PFAx and SFA algorithms to efficiently model a previously unknown environment in a way that is suitable for globally solving navigation and control tasks.
PFAx is used to utilize the command signal that controls the agent and to model a low dimensional space of well manipulatable features. SFA is applied on top of that to decompose the global structure of the environment into monotonic features that allow PFAx-driven local navigation to find a global optimum.

To explain and support this decomposition, the monotonicity and geometrical properties of the obtained features, we extended mathematical theory of SFA and xSFA to manifolds. Former applications of SFA mainly exploited its ability to find invariances. 
In this work we explicitly utilize SFA-induced monotonicity as the link between local and global optimization.

In experiments of stepwise increased complexity we demonstrated how this principle scales to multiple rooms and leads to a hierarchical decomposition of the environment into components that yield globally solvable tasks. As soon as such a decomposition is achieved, it can be used to efficiently solve a whole range of tasks. The corresponding goal states do not need to be known during the training phase as the decomposition -- if complete and sufficiently accurate -- is suitable to solve for any possible goal state in the environment.
Finally we demonstrated applicability to phase spaces of dynamical systems.

The whole procedure is engineered to fully consist of computationally efficient building blocks, most notably of eigenvalue decompositions.
A remaining goal is to assemble a better scalable implementation, e.g.\ by applying PFAx hierarchically or using incremental implementations of SFA and PFAx. With this in line, the principle can be used on higher dimensional sensors and thus in more realistic and complex settings.

\subsection{Conclusion and perspective} \label{sec:conclusion}

We conclude that SFA and PFAx can augment each other to form a framework for globally solving tasks that involve navigation and control. Especially SFA typically yields results that are well interpretable and can provide new insights into the structure of such a task. These insights are valuable beyond the computation of a solution. PFAx on the other hand provides insight into the relation between sensor and control signal.

Both building blocks -- SFA and PFAx -- are dimensionality reduction algorithms suitable to operate on approximately continuous signals. As such they form a promising approach to deal with the curse of dimensionality that traditionally affects the RL setting.
Given that the described procedure is well scalable, e.g.\ by using a hierarchical or incremental setup, this framework can form the basis of an efficient and flexible engine for continuous reinforcement learning. Most notably, the method only needs to compute a single model -- without having to know the actual goal state -- and is finally capable to achieve any goal state based on this single model. This characteristic is especially valuable for use cases where the goal can suddenly change over time.

To cover the original RL setting, a better exploration method and incorporation of an arbitrary reward signal are still gaps to be closed.
\begin{itemize}
	 \item To improve exploration efficiency, the algorithm could operate in an online mode during exploration and choose exploration steps by curiosity, i.e.\ by aiming for the largest possible change of the agent's current state in feature space.
	 \item An arbitrary reward signal or its accumulation could be tracked as an additional sensor component during training phase. Later, the navigation goal could be formulated in terms of this sensor component.
\end{itemize}

In section \ref{sec:introduction} we mentioned some simplifying assumptions:
\begin{itemize}
	\item The environment is fully observable, i.e.\ every position yields a unique representation in sensor space
	\item The environment is stationary, i.e.\ constant over time, contains no blinking lights, no flickering colors or moving objects
\end{itemize}

Fully observable does \textsl{not} mean that every pair of locations must be visible from each other. This was demonstrated in various scenarios involving multiple rooms, where the goal position is not visible from the starting position, see section \ref{sec:experiments}.

The named limitations are not inherent and we can readily suggest extensions of the algorithm to overcome each of them:

\begin{itemize}
	\item An environment with ambiguous sensor representations can be handled by time embedding or incorporating an episodic memory module into the sensor signal. Steepness in SFA components can be used -- in sense of a model for surprise -- to trigger memory write access.
	\item A time-dynamic environment can be handled by time embedding or by setting the time-order parameter of PFAx ($p$) sufficiently high. To be feasible this would likely require a hierarchical PFAx implementation (c.f.\ hierarchical SFA, \cite{FranziusSprekelerEtAl-2007e, Schoenfeld2015}).
\end{itemize}

Each of the proposals in this section would yield a significant extension to the algorithm and could fill a future publication on its own.

\section*{Acknowledgments}
This work is funded by a grant from the German Research Foundation (Deutsche Forschungsgemeinschaft, DFG) to L. Wiskott (SFB 874, TP B3) and
supported by the German Federal Ministry of
Education and Research within the National Network Computational
Neuroscience - Bernstein Fokus: “Learning behavioral models: From human
experiment to technical assistance”, grant FKZ 01GQ0951.

\pagebreak
\bibliography{MachineLearning,Miscellaneous,Predictability,ReinforcementLearning,SFA,Extern} 

\pagebreak
\appendix
\section{Appendix}
\subsection{Notation overview} \label{sec:notation}
This section gives an overview of the notation used in this paper.

\begin{tabular}{p{1.1cm}p{9.7cm}ll}
$\mathbf{x}(t)$ & denotes the raw input signal.\\

$\mathbf{u}(t)$ & denotes the external information signal.\\

$\trph$ & $\coloneq~\{t_0, \ldots, t_k\}$ denotes a discrete time sequence (considered as equidistant with step size normalized to $1$). We usually
refer to $\trph$ as the \textit{training phase}.\\

$\av{\mathbf{s}(t)}_{t \in S}$ & $\coloneq~\frac{1}{\abs{S}} \sum_{t\in S} \mathbf{s}(t)$ denotes the average of some signal $\mathbf{s}$ over a finite set $S$.
For $S~=~\trph$ we just write $\av{\mathbf{s}(t)}_t$ or even $\av{\mathbf{s}}$, if it is obvious, what unbound variable is targeted.\\

$\mathbf{h}(\mathbf{x})$ & denotes the expansion function and usually consists of a set of monomials of low degree.\\

$\mathbf{z}(t)$ & denotes $\mathbf{h}(\mathbf{x}(t))$ after sphering.\\


$\mathbf{m}(t)$ & denotes the optimized output signal ($\mathbf{m}$ for \textit{model}).\\

$n$ & $\coloneq~dim(\mathbf{h}(\mathbf{x}))$
denotes the number of components to be analyzed (after expansion).\\

$n_{\mathbf{u}}$ & $\coloneq~dim(\mathbf{u})$
denotes the number of components in $\mathbf{u}$\\

$r$ & denotes the number of extracted components (“features”).\\

$\mathbf{A}, \mathbf{a}$ & denotes the matrix (or vector if $r=1$) holding the linear composition of the output-signal.
We set $\mathbf{m}(t)~=~\mathbf{A}^T\mathbf{z}(t)$.\\

$\mathbf{a}_i$ & denotes the $i$'th column of $\mathbf{A}$, so we can write $m_i(t)~=~\mathbf{a}_i^T \mathbf{z}(t)$.\\

$\orth(n)$ & $\subset \mathbb{R}^{n \times n}$ denotes the orthogonal group of dimension $n$, i.e.\ $\forall \; \mathbf{A} \in \orth(n) \colon \; \mathbf{A}\mathbf{A}^T = \mathbf{A}^T\mathbf{A} = \mathbf{I}$\\

$p$ & denotes the number of recent signal-values involved in the prediction. We also call it the \textit{prediction-order}.\\

$\mathbf{I}_{s, r}$ & denotes the $s\times r$ identity matrix ($s$ counting rows, $r$ counting columns).
For $s = r$ this is a usual square identity, while in the non-square case it consists
of a square identity block in the top or left area, filled up with zeros to fit the given shape.\\

$\mathbf{I}_{r}$ & $\coloneq~\mathbf{I}_{n, r}$\\


$\mathbf{A}_r$ & $\coloneq~\mathbf{A}\mathbf{I}_r$


%
\end{tabular}
\renewcommand{\arraystretch}{1}

Further more we sometimes use the Kronecker product $\otimes$ and the $\mvec$-operator defined as follows:

For matrices $\mathbf{A} \in \mathbb{R}^{m \times n}$ and $\mathbf{B} \in \mathbb{R}^{k \times l}$ and with $a_{ij}$ denoting the entries, $\mathbf{a}_i$ the columns of $\mathbf{A}$:

\begin{equation}
 \mathbf{A} \otimes \mathbf{B} \quad \coloneq \quad \left( \begin{matrix} a_{11}\mathbf{B}& \cdots & a_{1n}\mathbf{B} \\ \vdots & \ddots & \vdots \\ a_{m1} \mathbf{B} & \cdots & a_{mn}\mathbf{B} \end{matrix} \right) \; \in \; \mathbb{R}^{mk \times nl}
\end{equation}

\begin{equation}
\mvec(\mathbf{A}) \quad \coloneq \quad \left( \begin{matrix} \mathbf{a}_1\\ \vdots \\ \mathbf{a}_n \end{matrix} \right) \; \in \; \mathbb{R}^{mn}
\end{equation}

Additionally, we sometimes make use of the following shortcut:
\begin{equation} \label{multiA}
\underline{\mathbf{A}} \quad \coloneq \quad \mathbf{I}_{p, p} \otimes \mathbf{A} \quad = \qquad \underbrace{\!\!\!\!\!\!\left( \begin{matrix} \mathbf{A}&  & \mathbf{0} \\  & \ddots &  \\ \mathbf{0} &  & \mathbf{A} \end{matrix} \right)\!\!\!\!\!\!}_{\text{$p$ times $\mathbf{A}$}} 
\end{equation}

\subsection{SFA harmonics concerning a bottleneck} \label{sec:bottleneck}
\providecommand{\gmn}[0]{\mu_{ab}} 

When dealing with multiple rooms, we encounter the transition between rooms as bottlenecks in the environment, i.e.\ areas of low probability during a random exploration of the environment. In this section we study the results of SFA for the case that a source is not uniformly distributed, but concerns a bottleneck in the probability distribution. Using a normally-distributed repeller $\eta(x)$, we model a bottleneck situation on a one dimensional interval $[a, b]$, i.e.\ concerning only a single source $\mathbf{s}_{\alpha} = x$, $n = 1$. Throughout this section we use the shortcut $\gmn \coloneq \frac{a+b}{2}$. We define

\begin{align}
\eta_{\tau, \sigma}(x) \quad &\coloneq \quad \sign(x-\gmn) \frac{\tau}{\sqrt{2 \pi \sigma^2}} e^{- \frac{(x-\gmn)^2}{2 \sigma^2}} \label{repeller} \\
p_{\dot{x} | x}(\dot{x} | x) \quad &\coloneq \quad \frac{1}{2 \delta} \; \mathbf{1}_{[-\delta + \eta_{\tau, \sigma}(x), \; \delta+\eta_{\tau, \sigma}(x)]} (\dot{x}) \label{diff-p-repeller}
\end{align}
with $\delta$ denoting the strength of the random walk and $\tau$ denoting the strength of the repeller. Empirically we find that if $\delta$ is sufficiently large, $p(x)$ can be modeled as an accordingly scaled difference between a uniform and a normal distribution:
\begin{equation}
p_x(x) \quad \approx \quad \mathbf{1}_{[a, b]}(x) \; \left( \frac{1 + \tilde{\tau}(\erf_{\gmn, \tilde{\sigma}}(b)-\erf_{\gmn, \tilde{\sigma}}(a))}{b-a} - \frac{\tilde{\tau}}{\sqrt{2 \pi \tilde{\sigma}^2}} e^{- \frac{(x-\gmn)^2}{2 \tilde{\sigma}^2}}\right)
\end{equation}
with $\erf_{\gmn, \sigma} = \int \frac{1}{\sqrt{2 \pi \sigma^2}} \exp(- \frac{(x-\gmn)^2}{2 \sigma^2})$ denoting the cumulative density function of a normal distribution. Of course, $\tilde{\tau}$ and $\tilde{\sigma}$ depend on $\tau$ and $\sigma$, but formulating the specific relationship would require a deeper study of the heterogeneous random walk induced by \eqref{diff-p-repeller}. Further, we find that $\av{\dot{x}^2}_{\dot{x} | x} = \frac{\eta_{\tau, \sigma}(x)^3}{3 \delta} + \delta \eta_{\tau, \sigma}(x)$ is not constant, which results in a difficult differential equation for the analytic SFA solutions. These aspects are out of scope of this paper, so we focus on empirical results.

To simulate an unrestricted function space we use Legendre polynomials rather than monomials for nonlinear expansion. Monomials lead to invalid solutions or failures already at expansion degrees around $15$ due to their poor numerical properties concerning floating point arithmetics (dirty zero effect). Using Legendre polynomials we are able to retrieve valid solutions for degrees $>100$. Depending on the data we were able to apply expansions up to degree $140$.

\begin{figure}[ht]
	\centering
	\captionsetup{width=.90\linewidth}
	\includegraphics[width=1.0\hsize]{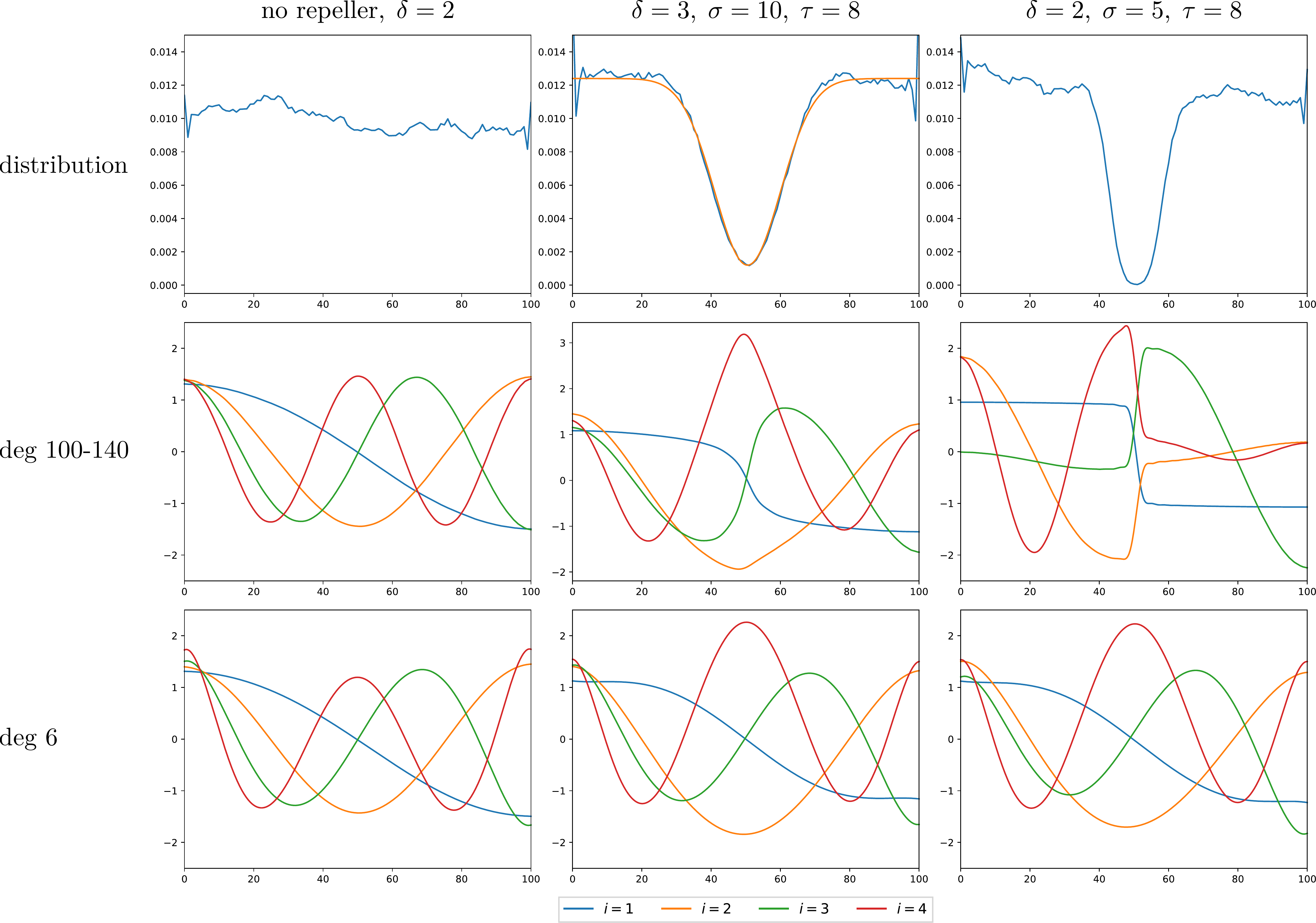}
	\caption{Illustration of SFA concerning a bottleneck. Each walk consists of $10^6$ steps. For nonlinear expansion, Legendre polynomials up to the denoted degree were used: $100$ in the center, $140$ left and right. The second line displays approximately ideal solutions with a high nonlinear expansion simulating an unrestricted function space. The third line shows how solutions are smoothened due to a more restricted function space, i.e.\ Legendre polynomials up to the sixth degree.}
	\label{fig:bottleneck}
\end{figure}

From the results in figure \ref{fig:bottleneck} we conclude that a bottleneck in the probability distribution acts like an attractor for steepness on the SFA solutions. The original harmonics are perturbed by a sudden concentration of steepness at the bottleneck. This is exactly expected behavior, because the bottleneck is a low-weight region and steepness is a high-cost factor for SFA. So the algorithm uses the bottleneck to store as much of the overall cost as possible. As a consequence, regions outside the bottleneck become flatter. Especially the first harmonic can gain -- while still being monotonic -- very flat regions, even close to constant. Thus, the first harmonic turns into an approximate indicator function distinguishing the two regions separated by the bottleneck.
Also note that the sudden concentration of steepness can be used to detect the bottleneck state, which is a relevant notion in \cite{DBLP:conf/icml/McGovernB01, 10.1007/3-540-45622-8_16}. It can also model communities and surprise, which are relevant e.g.\ in \cite{Schapiro2013}.

For our navigation approach, flat regions can yield issues and we will propose a method to handle them. Note that -- as illustrated in figure \ref{fig:bottleneck}/bottom row -- a more restricted function space has a smoothening effect on the solutions. This already compensates issues with flat regions to some extend.

\end{document}